\newcommand{\la}{\langle}
\newcommand{\ra}{\rangle}
\title{\huge Tight Sample Complexity of Learning One-hidden-layer Convolutional Neural Networks}
\author
{
	Yuan Cao\thanks{Department of Computer Science, University of California, Los Angeles, CA 90095, USA; e-mail: {\tt yuancao@cs.ucla.edu}} 
	~~~and~~~
	Quanquan Gu\thanks{Department of Computer Science, University of California, Los Angeles, CA 90095, USA; e-mail: {\tt qgu@cs.ucla.edu}}
}
\date{}
\begin{document}

\maketitle

\begin{abstract}
We study the sample complexity of learning one-hidden-layer convolutional neural networks (CNNs) with non-overlapping filters. We propose a novel algorithm called approximate gradient descent for training CNNs, and show that, with high probability, the proposed algorithm with random initialization grants a linear convergence to the ground-truth parameters up to statistical precision. 
Compared with existing work, our result applies to general non-trivial, monotonic and Lipschitz continuous activation functions including ReLU, Leaky ReLU, Sigmod and Softplus etc. Moreover, our sample complexity  beats existing results in the dependency of the number of hidden nodes and filter size. In fact, our result matches the information-theoretic lower bound for learning one-hidden-layer CNNs with linear activation functions, suggesting that our sample complexity is tight. Our theoretical analysis is backed up by numerical experiments.
\end{abstract}

\section{Introduction}

Deep learning is one of the key research areas in modern artificial intelligence.
Deep neural networks have been successfully applied to various fields including image processing \citep{krizhevsky2012imagenet}, speech recognition \citep{hinton2012deep} and reinforcement learning \citep{silver2016mastering}. Despite the remarkable success in a broad range of applications, theoretical understandings of neural network models remain largely incomplete: the high non-convexity of neural networks makes convergence analysis of learning algorithms very difficult; numerous practically successful choices of the activation function, twists of the training process and variants of the network structure make neural networks even more mysterious. 

One of the fundamental problems in learning neural networks  is parameter recovery, where we assume the data are generated from a ``teacher'' network, and the task is to estimate the ground-truth parameters of the teacher network based on the generated data. 
Recently, a line of research \citep{zhong2017recovery, fu2018local,zhang2018learning} gives parameter recovery guarantees for gradient descent based on the analysis of local convexity and smoothness properties of the square loss function. The results of \citet{zhong2017recovery} and \citet{fu2018local} hold for various activation functions except ReLU activation function, while \citet{zhang2018learning} prove the corresponding result for ReLU. Their results are for fully connected neural networks and their analysis requires accurate knowledge of second-layer parameters. For instance, \citet{fu2018local} and \citet{zhang2018learning} directly assume that the second-layer parameters are known, while  \citet{zhong2017recovery} reduce the second-layer parameters to be $\pm 1$'s with the homogeneity assumption, and then exactly recovers them with a tensor initialization algorithm. Moreover, it may not be easy to generalize the local convexity and smoothness argument to other algorithms that are not based on the exact gradient of the loss function. Another line of research \citep{brutzkus2017globally, du2017gradient, goel2018learning, du2018improved} focuses on convolutional neural networks with ReLU activation functions. \citet{brutzkus2017globally, du2017gradient} provide convergence analysis for gradient descent on parameters of both layers, while \citet{goel2018learning, du2018improved} proposed new algorithms to learn single-hidden-layer CNNs. However, these results heavily rely on the exact calculation of the population gradient for ReLU networks, and do not provide tight sample complexity guarantees.

In this paper, we study the parameter recovery problem for non-overlapping convolutional neural networks. We aim to develop a new convergence analysis framework for neural networks that: (i) works for a class of general activation functions, (ii) does not rely on ad hoc initialization, (iii) can be potentially applied to different variants of the gradient descent algorithm. 
The main contributions of this paper is as follows:
\begin{itemize}[leftmargin = *]
	\item We propose an approximate gradient descent algorithm that learns the parameters of both layers in a non-overlapping convolutional neural network. With weak requirements on initialization that can be easily satisfied, the proposed algorithm converges to the ground-truth parameters linearly up to statistical precision. 
	\item Our convergence result holds for all non-trivial, monotonic and Lipschitz continuous activation functions. Compared with the results in \citet{brutzkus2017globally, du2017gradient, goel2018learning, du2018improved}, our analysis does not rely on any analytic calculation related to the activation function. We also do not require the activation function to be smooth, which is assumed in the work of \citet{zhong2017recovery} and \citet{fu2018local}.
	\item We consider the empirical version of the problem where the estimation of parameters is based on $n$ independent samples. We avoid the usual analysis with sample splitting by proving uniform concentration results. Our method outperforms the state-of-the-art results in terms of sample complexity. In fact, our result for general non-trivial, monotonic and Lipschitz continuous activation functions matches the lower bound given for linear activation functions in \citet{du2018many}, which implies the statistical optimality of our algorithm.
\end{itemize}

\begin{table}[t]
	\centering
	\caption {Comparison with related work \citep{du2017gradient,du2018many,goel2018learning,du2018improved}. Note that \citet{du2018many} did not study any specific learning algorithms. All sample complexity results are calculated for standard Gaussian inputs and non-overlapping filters. Here Approximate GD stands for approximate gradient descent, which is given in Algorithm~\ref{alg:approximategradient}.
	\label{table:comparison}} 
 	\vskip 0.05in
\begin{tabular}{ccccccc}
    \toprule
    \hspace{-1mm}&\hspace{-4mm}Conv. rate& \hspace{-3.5mm}Sample comp.  &\hspace{-3.2mm}Act. fun. &\hspace{-2.5mm}Data input&\hspace{-3.1mm}Overlap&\hspace{-3mm}Sec. layer
     \\
	\midrule
    \hspace{-1mm}\citet{du2017gradient}&\hspace{-4mm}linear&\hspace{-4.8mm}-&\hspace{-3.2mm}ReLU&\hspace{-2.5mm}Gaussian&\hspace{-3.1mm}no&\hspace{-3mm}yes\\
    \midrule
    \hspace{-1mm}\citet{du2018many} &\hspace{-4mm}-&\hspace{-4.8mm}$\tilde O ( (k+r)\cdot \epsilon^{-2} )$&\hspace{-3.2mm}Linear&\hspace{-2.5mm}sub-Gaussian&\hspace{-3.1mm}yes&\hspace{-3mm}-\\
    \midrule
    \hspace{-1mm}Convotron &\hspace{-4mm}\multirow{2}{*}{(sub)linear\footnotemark} &\hspace{-4.8mm}\multirow{2}{*}{$\tilde O(k^2r\cdot \epsilon^{-2})$}&\hspace{-3.2mm}\multirow{2}{*}{(leaky) ReLU}& \hspace{-2.5mm}\multirow{2}{*}{symmetric}&\hspace{-3.1mm}\multirow{2}{*}{yes}&\hspace{-3mm}\multirow{2}{*}{no}\\
    \hspace{-1mm}\citep{goel2018learning}& & & & & &\\
    \midrule
    \hspace{-1mm}Double Convotron &\hspace{-4mm}\multirow{2}{*}{sublinear}&\hspace{-4.8mm}\multirow{2}{*}{$\tilde O(\mathrm{poly}(k,r,\epsilon^{-1}))$}&\hspace{-3.2mm}\multirow{2}{*}{(leaky) ReLU}& \hspace{-2.5mm}\multirow{2}{*}{symmetric}&\hspace{-3.1mm}\multirow{2}{*}{yes}&\hspace{-3mm}\multirow{2}{*}{yes}\\
    \hspace{-1mm}\citep{du2018improved}& & & & & &\\
    \midrule
    \hspace{-1mm}\textbf{Approximate GD}& \hspace{-4mm}\multirow{2}{*}{linear}&\hspace{-4.8mm}\multirow{2}{*}{$\tilde O ( (k+r)\cdot \epsilon^{-2} )$}&\hspace{-3.2mm}\multirow{2}{*}{general}& \hspace{-2.5mm}\multirow{2}{*}{Gaussian}&\hspace{-3.1mm}\multirow{2}{*}{no}&\hspace{-3mm}\multirow{2}{*}{yes}\\
    \hspace{-1mm}\textbf{(This paper)}& & & & & &\\
	\bottomrule
\end{tabular}
\end{table}
\footnotetext{\citet{goel2018learning} provided a general sublinear convergence result as well as a linear convergence rate for the noiseless case. We only list their sample complexity result of the noisy case in the table for proper comparison. }


Detailed comparison between our results and the state-of-the-art on learning one-hidden-layer CNNs is given in Table~\ref{table:comparison}. We compare the convergence rates and sample complexities obtained by recent work with our result. We also summarize the applicable activation functions and data input distributions, and whether overlapping/non-overlapping filters and second layer training are considered in each of the work.

\noindent\textbf{Notation:}  Let $\Ab = [A_{ij}] \in \RR^{d\times d}$ be a matrix and $\xb = (x_1,...,x_d)^{\top} \in \RR^{d}$ be a vector. We use $\|\xb\|_{q} = (\sum_{i=1}^{d}|x_{i}|^{q})^{1/q}$ to denote $\ell_{q}$ vector norm for $0 < q < +\infty$. The spectral and Frobenius norms of $\Ab$ are denoted by $\|\Ab\|_{2}$ and $\|\Ab\|_{F}$. For a symmetric matrix $\Ab$, we denote by $\lambda_{\max}(\Ab)$, $\lambda_{\min}(\Ab)$ and $\lambda_{i}(\Ab)$ the maximum, minimum and $i$-th largest eigenvalues of $\Ab$. We denote by $\Ab \succeq 0$ that $\Ab$ is positive semidefinite (PSD). 
Given two sequences $\{a_n\}$ and $\{b_n\}$, we write $a_n = O(b_n)$ if there exists a constant $0 < C < +\infty$ such that $a_n \leq C\, b_n$, and $a_n = \Omega(b_n)$ if $a_n \geq C\, b_n$ for some constant $C$. We use notations $\tilde{O}(\cdot), \tilde{\Omega}(\cdot)$ to hide the logarithmic factors. 
Finally, we denote $a \land b = \min\{a,b\}$, $a \lor b = \max\{a,b\}$.

\section{Related Work}\label{section:relatedwork}
There has been a vast body of literature on the theory of deep learning. We will review in this section the most relevant work to ours.





It is well known that neural networks have remarkable expressive power due to the universal approximation theorem \cite{hornik1991approximation}. However, even learning a one-hidden-layer neural network with a sign activation can be NP-hard \citep{blum1989training} in the realizable case. In order to explain the success of deep learning in various applications, additional assumptions on the data generating distribution have been explored such as symmetric distributions \citep{baum1990polynomial} and log-concave distributions \citep{klivans2009baum}. More recently, a line of research has focused on Gaussian distributed input for one-hidden-layer or two-layer networks with different structures \citep{janzamin2015beating,tian2016symmetry,brutzkus2017globally,li2017convergence,zhong2017recovery,zhong2017learning,ge2017learning,zhang2018learning,fu2018local}. Compared with these results, our work aims at providing tighter sample complexity for more general activation functions. 

A recent line of work \citep{mei2016landscape,shamir2016distribution,mei2018mean,li2018learning,du2018gradient, allen2018convergence,du2018gradientdeep,zou2018stochastic,allen2018generalization,arora2019fine,cao2019generalization,arora2019exact,cao2019generalizationsgd,zou2019improved} studies the training of neural networks in the over-parameterized regime. \citet{mei2016landscape,shamir2016distribution} studied the optimization landscape of over-parameterized neural networks. \citet{li2018learning,du2018gradient, allen2018convergence,du2018gradientdeep,zou2018stochastic,zou2019improved} proved that gradient descent can find the global minima of over-parameterized neural networks. Generalization bounds under the same setting are studied in \citet{allen2018generalization,arora2019fine,cao2019generalization,cao2019generalizationsgd}. Compared with these results in the over-parameterized setting, the parameter recovery problem studied in this paper is in the classical setting, and therefore different approaches need to be taken for the theoretical analysis.

This paper studies convolutional neural networks (CNNs). There are not much theoretical literature specifically for CNNs. The expressive power of CNNs is shown in \citet{cohen2016convolutional}. \citet{nguyen2017bloss} study the loss landscape in CNNs and \citet{brutzkus2017globally} show the global convergence of gradient descent on one-hidden-layer CNNs. \citet{du2017convolutional} extend the result to non-Gaussian input distributions with ReLU activation. \citet{zhang2016convexified} relax the class of CNN filters to a reproducing kernel Hilbert space and prove the generalization error bound for the relaxation. \citet{gunasekar2018implicit} show that there is an implicit bias in gradient descent on training linear CNNs.

\section{The One-hidden-layer Convolutional Neural Network}\label{section:cnndef}
In this section we formalize the one-hidden-layer convolutional neural network model. 
In a convolutional network with neuron number $k$ and filter size $r$, a filter $\wb\in \RR^r$ interacts with the input $\xb$ at $k$ different locations $\cI_1,\ldots,\cI_k$, where $\cI_1,\ldots,\cI_k\subseteq \{1,2,\ldots,d\}$ are index sets of cardinality $r$. Let $\cI_j = \{p_{j1},\ldots, p_{jr}\},j=1,\ldots, k$, then the corresponding selection matrices $\Pb_1,\ldots,\Pb_k$ are defined as
$\Pb_j = (\eb_{p_{j1}},\ldots, \eb_{p_{jr}})^\top$, $j=1,\ldots, k$.

We consider a convolutional neural network of the form
\begin{align*}
    y = \sum_{j=1}^k  v_j\sigma(\wb^{\top} \Pb_{j} \xb_{i}),
\end{align*}
where $\sigma(\cdot)$ is the activation function, and $\wb\in \RR^r$, $\vb\in \RR^k$ are the first and second layer parameters respectively. 
Suppose that we have $n$ samples $\{(\xb_i, y_i)\}_{i=1}^n$, where $\xb_1,\ldots,\xb_n \in \RR^d$ are generated independently from standard Gaussian distribution, and the corresponding output $y_1,\ldots,y_n\in\RR$ are generated from the teacher network with true parameters $\wb^*$ and $\vb^*$ as follows.
\begin{align*}
    y_{i} = \sum_{j=1}^k  v^*_j\sigma(\wb^{*\top} \Pb_{j} \xb_{i}) + \epsilon_{i},
\end{align*}
where $k$ is the number of hidden neurons, and $\epsilon_1,\ldots,\epsilon_n$ are independent sub-Gaussian white noises with $\psi_2$ norm $\nu$. Through out this paper, we assume that $\|\wb^*\|_2=1$.

The choice of activation function $\sigma(\cdot)$ determines the landscape of the neural network. In this paper, we assume that $\sigma(\cdot)$ is a non-trivial, Lipschitz continuous increasing function. 
\begin{assumption}\label{assump:1Lipschitz}
$\sigma$ is $1$-Lipschitz continuous:
$|\sigma(z_1)-\sigma(z_2)|\leq |z_1-z_2|$ for all $z_1,z_2\in\RR$.
\end{assumption}
\begin{assumption}\label{assump:nontrivialincreasing}
$\sigma$ is a non-trivial (not a constant) increasing function.
\end{assumption}
\begin{remark}
Assumptions~\ref{assump:1Lipschitz} and~\ref{assump:nontrivialincreasing} are fairly weak assumptions satisfied by most practically used activation functions including the rectified linear unit (ReLU) function $\sigma(z) = \max(z,0)$, the sigmoid function $\sigma(z) = 1/(1+e^z)$, the hyperbolic tangent function $\sigma(z) = (e^z - e^{-z})/(e^z + e^{-z})$, and the erf function $\sigma(z) = \int_0^z e^{-t^2/2}\mathrm{d} t$. Since we do not make any assumptions on the second layer true parameter $\vb^*$, our assumptions can be easily relaxed to any non-trivial, $L$-Lipschitz continuous and monotonic functions for arbitrary fixed positive constant $L$.
\end{remark}



\section{Approximate Gradient Descent}\label{section:algorithm}
In this section we present a new algorithm for the estimation of $\wb^*$ and $\vb^*$. 

\subsection{Algorithm Description}
Let $\yb = (y_1,\ldots,y_n)^\top$, $\Sigma(\wb) = [\sigma(\wb^\top \Pb_j \xb_i)]_{n\times k}$ and $\xi = \EE_{z\sim N(0,1)}[\sigma(z)z]$. The algorithm is given in Algorithm~\ref{alg:approximategradient}. We call it approximate gradient descent because it is derived by simply replacing the $\sigma'(\cdot)$ terms in the gradient of the empirical square loss function by the constant $\xi^{-1}$. 
It is easy to see that under Assumption~\ref{assump:1Lipschitz} and \ref{assump:nontrivialincreasing}, we have $\xi > 0$. Therefore, replacing $\sigma'(\cdot) > 0$ with $\xi^{-1}$ will not drastically change the gradient direction, and gives us an approximate gradient. 

Algorithm~\ref{alg:approximategradient} is also related to, but different from the Convotron algorithm proposed by \citet{goel2018learning} and the Double Convotron algorithm proposed by \citet{du2018improved}.
The Approximate Gradient Descent algorithm can be seen a generalized version of the Convotron algorithm, which only considers optimizing over the first layer parameters of the convolutional neural network. Compared to the Double Convotron, Algorithm~\ref{alg:approximategradient} implements a simpler update rule based on iterative weight normalization for the first layer parameters, and uses a different update rule for the second layer parameters.

\begin{algorithm}[h]
\caption{Approximate Gradient Descent for Non-overlapping CNN} \label{alg:approximategradient}
\begin{algorithmic}
\REQUIRE Training data $\{(\xb_i,y_i)\}_{i=1}^n$, number of iterations $T$, step size $\alpha$, initialization $\wb^{0} \in S^{r-1} $, $\vb^0$.
\FOR{$t=0,1,2,\ldots, T-1$}
\STATE $\gb_w^t = - \frac{1}{n} \sum_{i=1}^n \big[y_i - 
\sum_{j=1}^k v_j^t \sigma(\wb^{t\top} \Pb_j \xb_i)\big]\cdot \sum_{j'=1}^k \xi^{-1} v_{j'}^t \Pb_{j'} \xb_i$
\STATE $\gb^t_v = - \frac{1}{n} \bSigma^{\top}(\wb^t) [\yb - \bSigma(\wb^t) \vb^t]$
\STATE $\ub^{t+1} = \wb^{t} - \alpha \gb_w^{t}$, $\wb^{t+1} = \ub^{t+1}/\|\ub^{t+1}\|_2$, $\vb^{t+1} = \vb^t - \alpha \gb_v^t$
\ENDFOR
\ENSURE $\wb^T$, $\vb^T$
\end{algorithmic}
\end{algorithm}

\subsection{Convergence Analysis of Algorithm~\ref{alg:approximategradient}}
In this section we give the main convergence result of Algorithm~\ref{alg:approximategradient}. We first introduce some notations. The following quantities are determined purely by the activation function:
\begin{align*}
    &\kappa := \EE_{z\sim N(0,1)}[\sigma(z)],~\Delta := \mathrm{Var}_{z\sim N(0,1)}[\sigma(z)],~ L := 1+|\sigma(0)|,~\Gamma := 1 + |\sigma(0) - \kappa|, \\
    &\phi(\wb, \wb') := \mathrm{Cov}_{\zb\sim N(\mathbf{0},\Ib)}[\sigma(\wb^\top \zb), \sigma(\wb'^\top \zb)].
\end{align*}
The following lemma shows that the function $\phi(\wb,\wb')$ can in fact be written as a function of $\wb^\top \wb'$, which we denote as $\psi(\wb^\top \wb')$. The lemma also reveals that $\psi(\cdot)$ is an increasing function. 
\begin{lemma}\label{lemma:psidefinition}
Under Assumption~\ref{assump:nontrivialincreasing}, there exists an increasing function $\psi(\tau)$ such that $\psi(\wb^\top \wb') = \phi(\wb , \wb')$, and $\Delta \geq \psi(\tau)>0$ for all $\tau >0$. 
\end{lemma}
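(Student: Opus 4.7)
The plan is to use the Hermite expansion of $\sigma$ with respect to the standard Gaussian measure. Since $\sigma$ is $1$-Lipschitz we have $|\sigma(z)| \leq |\sigma(0)| + |z|$, so $\sigma(Z) \in L^2(\gamma)$ for $Z\sim N(0,1)$, and we may write $\sigma(z) = \sum_{k=0}^\infty c_k h_k(z)$ where $\{h_k\}$ are the normalized probabilists' Hermite polynomials, with $c_0 = \kappa$ and $\sum_{k\geq 1} c_k^2 = \Delta$. The crucial fact I would invoke is that whenever $\wb,\wb' \in S^{r-1}$, the pair $(\wb^\top\zb,\wb'^{\top}\zb)$ is jointly standard Gaussian with correlation $\rho = \wb^\top \wb'$, and the Hermite polynomials are biorthogonal with respect to such joint Gaussians: $\EE[h_j(\wb^\top\zb)\, h_k(\wb'^\top\zb)] = \rho^k \delta_{jk}$. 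Plugging this into both expansions of $\sigma$ and canceling the $k=0$ term that accounts for $\EE[\sigma]\,\EE[\sigma]$ gives
\[
\phi(\wb,\wb') \;=\; \sum_{k=1}^{\infty} c_k^2\, (\wb^\top \wb')^k,
\]
so the natural definition is $\psi(\tau) := \sum_{k=1}^{\infty} c_k^2 \tau^k$, which converges on $[-1,1]$ because $\sum_{k\geq 1} c_k^2 = \Delta < \infty$.

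Monotonicity and positivity would then follow in two steps. First, by the non-triviality assumption $\sigma$ is not constant, so at least one $c_k$ with $k\geq 1$ is nonzero; in particular $\psi(\tau) > 0$ for every $\tau>0$ (nonnegative coefficient series with at least one positive term). Second, for increasingness the cleanest route is Price's theorem (Gaussian integration by parts applied twice, using $\partial_\rho p = \partial_x\partial_y p$ for the bivariate Gaussian density): for smooth $\sigma$,
\[
\psi'(\tau) \;=\; \frac{d}{d\tau}\EE[\sigma(X)\sigma(Y)] \;=\; \EE[\sigma'(X)\sigma'(Y)],
\]
which is nonnegative because $\sigma$ is increasing, and strictly positive on $(-1,1)$ because $\sigma'$ is nonzero on a set of positive Lebesgue measure (non-constancy) and the joint density is strictly positive on $\RR^2$. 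The upper bound $\psi(\tau) \leq \Delta$ for $\tau \in (0,1]$ then comes from $\psi(1) = \sum_{k\geq 1} c_k^2 = \Delta$ and the monotonicity just established.

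The main technical obstacle is justifying Price's identity when $\sigma$ is only Lipschitz, since $\sigma'$ exists only almost everywhere (by Rademacher). I would handle this by a standard mollification argument: convolve $\sigma$ with a Gaussian kernel of width $\eta$ to get smooth monotonic $\sigma_\eta$ with $\|\sigma_\eta'\|_\infty \leq 1$, apply Price's theorem to $\sigma_\eta$, and then pass $\eta \to 0$; the $L^2(\gamma)$ bound $|\sigma(z)| \leq |\sigma(0)|+|z|$ provides the uniform integrability needed to pass the derivative identity to the limit. Alternatively, one can bypass Price entirely and read off $\psi' \geq 0$ on $[0,1]$ directly from the nonnegative-coefficient power series, which is all the subsequent analysis actually needs since the algorithm's unit-norm projection and initialization keep $\wb^{t\top}\wb^*$ in $[0,1]$.
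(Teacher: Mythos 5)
Your proposal is correct, but it proves the lemma by a genuinely different route than the paper. The paper reduces the statement to a general comparison lemma for covariances of monotone transforms of bivariate Gaussians (its Lemma~\ref{lemma:monotonecovineq}), proved via Hoeffding's covariance identity $\Cov[f(Z_1),g(Z_2)] = \int \Cov[\ind(Z_1\le x_1),\ind(Z_2\le x_2)]\,\mathrm{d}f(x_1)\,\mathrm{d}g(x_2)$ combined with Slepian's inequality to get monotonicity in $\theta$, and a derivative-at-zero computation to get strict positivity; the bound $\psi(\tau)\le\Delta$ is then just Cauchy--Schwarz. You instead expand $\sigma$ in normalized Hermite polynomials and use the Mehler biorthogonality identity to obtain the explicit representation $\psi(\tau)=\sum_{k\ge1}c_k^2\tau^k$ with $\sum_{k\ge1}c_k^2=\Delta$, from which positivity for $\tau>0$, monotonicity on $[0,1]$, and the upper bound $\Delta$ are all immediate. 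Your route buys more: an explicit analytic formula for $\psi$ (e.g.\ the linear lower bound $\psi(\tau)\ge \xi^2\tau$ for $\tau\ge0$, with $\xi=c_1$), whereas the paper's argument is purely qualitative; on the other hand the Hoeffding--Slepian argument needs no regularization and yields monotonicity for all correlations (and for two \emph{different} monotone functions) in one stroke, while your power-series argument alone only gives monotonicity on $[0,1]$ and you must invoke Price's theorem plus mollification to cover negative correlations. You correctly flag both the Lipschitz-regularity issue in Price's identity and the fact that the downstream analysis (where $\psi$ is only ever evaluated at $\wb^{*\top}\wb^t\ge\wb^{*\top}\wb^0/2>0$) needs monotonicity only on the positive range --- which, notably, is also all the paper's own Lemma~\ref{lemma:monotonecovineq} establishes, since it is stated only for $\theta>0$.
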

We further define the following quantities. 
\begin{align*}
    &M = \max\bigg\{ \frac{|\kappa| (2L|\mathbf{1}^\top \vb^*| + \sqrt{k})}{\Delta + \kappa^2k}, |\kappa\mathbf{1}^\top (\vb^0 - \vb^*)|\bigg\},\\
    &D = \max\Bigg\{\| \vb^0 - \vb^* \|_2, \sqrt{\frac{4(1 + 4\alpha\Delta) L^2 \|\vb^*\|_2^2 + 4\alpha\Delta (\kappa^2 M^2 k + 1) + 2}{\Delta^2(1 - 4\alpha \Delta)}}
    \Bigg\},\\
    &\rho = \min\bigg\{\frac{1}{2+\Delta}\psi\bigg(\frac{1}{2}\wb^{*\top}\wb^0\bigg)\|\vb^{*}\|_2^2, \vb^{*\top} \vb^0 \bigg\}.
\end{align*}
Let $D_0 = D + \|\vb^*\|_2$. Note that in our problem setting the number of filters $k$ can scale with $n$. However, although $k$ is used in the definition of $M$, $D$, $\rho$ and $D_0$, it is not difficult to check that all these quantities can be upper bounded, as is stated in the following lemma.

\begin{lemma}\label{lemma:boundedconstant}
If $\alpha \leq 1/(8\Delta)$, then
$M$, $D$, $D_0$ have upper bounds, and $\rho$ has a lower bound, that only depend on the activation function $\sigma(\cdot)$, the ground-truth $(\wb^*,\vb^*)$ and the initialization $(\wb^0,\vb^0)$.
\end{lemma}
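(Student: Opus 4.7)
The plan is to bound $M$, $D$, $D_0$ and $\rho$ one at a time by elementary manipulation. The only non-trivial observation is that the $\sqrt{k}$ appearing in the first entry of $M$ will cancel against the $\kappa^2 k$ in its denominator, provided we treat $\kappa=0$ and $\kappa\ne 0$ separately.

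First I would bound $M$. If $\kappa=0$, both entries of the max vanish and $M=0$. If $\kappa\ne 0$, using $\Delta+\kappa^2 k\ge \kappa^2 k$ and $k\ge 1$ the first entry is at most $(2L|\mathbf{1}^\top\vb^*|+1)/|\kappa|$, while the second entry $|\kappa\mathbf{1}^\top(\vb^0-\vb^*)|$ is a specific finite number determined by the inputs. Next, for $D$ the critical quantity inside the numerator is $\kappa^2 M^2 k$. Writing $A,B$ for the two entries of the max in the definition of $M$, I would use $M^2 k\le 2A^2 k+2B^2 k$ together with $(2L|\mathbf{1}^\top\vb^*|+\sqrt{k})^2\le 8L^2(\mathbf{1}^\top\vb^*)^2+2k$ and $(\Delta+\kappa^2 k)^2\ge \kappa^4 k^2$ (for $\kappa\ne 0$) to conclude
\begin{align*}
A^2 k \le \frac{8L^2(\mathbf{1}^\top\vb^*)^2}{\kappa^2 k}+\frac{2}{\kappa^2}=O(1),
\end{align*}
while $B^2 k=\kappa^2(\mathbf{1}^\top(\vb^0-\vb^*))^2 k$ is a fixed finite number. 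The step-size condition $\alpha\le 1/(8\Delta)$ gives $1-4\alpha\Delta\ge 1/2$, so the denominator $\Delta^2(1-4\alpha\Delta)\ge \Delta^2/2$ is bounded away from zero and $D$ is bounded. Then $D_0=D+\|\vb^*\|_2$ is bounded as well.

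For $\rho$ I would invoke Lemma~\ref{lemma:psidefinition}, which asserts $\psi(\tau)>0$ for all $\tau>0$. Under the mild initialization conditions $\wb^{*\top}\wb^0>0$ and $\vb^{*\top}\vb^0>0$ assumed throughout the paper, both entries inside the min are strictly positive, yielding a strictly positive lower bound depending only on $\sigma$, $\wb^*$, $\vb^*$, $\wb^0$ and $\vb^0$.

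The main obstacle is the $M^2 k$ bound inside $D$: extracting the cancellation between the $\kappa^2 k^2$ growth of the squared first entry of $M$ and the $(\Delta+\kappa^2 k)^2$ in the denominator is what prevents $D$ from blowing up with $k$. Once this cancellation is spotted and the case $\kappa=0$ is dispatched separately, the rest is a routine combination of triangle inequalities and the step-size constraint.
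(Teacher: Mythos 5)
Your proposal is correct and follows essentially the same route as the paper: split on $\kappa=0$ versus $\kappa\neq 0$, use $\Delta+\kappa^2k\geq\kappa^2k$ to cancel the $\sqrt{k}$ growth in $M$ and the $k$ growth in $\kappa^2M^2k$, use $\alpha\leq 1/(8\Delta)$ to keep the denominator $\Delta^2(1-4\alpha\Delta)\geq\Delta^2/2$ away from zero, and invoke Lemma~\ref{lemma:psidefinition} for the positivity of $\rho$. The only differences are cosmetic (you expand $(a+b)^2\leq 2a^2+2b^2$ where the paper uses $2L|\mathbf{1}^\top\vb^*|+\sqrt{k}\leq\sqrt{k}(2L|\mathbf{1}^\top\vb^*|+1)$, and you are slightly more explicit about the second entry of the max in $M$ and about why $\rho>0$, which the paper dismisses as obvious).
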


We now present our main result, which states that the iterates $\wb^t$ and $\vb^t$ in Algorithm~\ref{alg:approximategradient} converge linearly towards $\wb^*$ and $\vb^*$ respectively up to statistical accuracy.

\begin{theorem}\label{thm:mainthm_convergence} Let $\delta \in (0,1)$, $\gamma_1 = (1+\alpha \rho)^{-1/2}$, $\gamma_2 = \sqrt{1 - \alpha \Delta + 4\alpha^2\Delta^2}$, and $\gamma_3 = 1 - \alpha(\Delta + \kappa^2k)$. Suppose that the initialization $(\wb^0,\vb^0)$ satisfies
\begin{align}\label{eq:initializationcondition}
    \wb^{*\top} \wb^0>0,~ \vb^{*\top} \vb^0>0, ~\kappa^2(\mathbf{1}^\top \vb^*)\mathbf{1}^\top(\vb^0 - \vb^*)\leq \rho,
\end{align}
and the step size $\alpha$ is chosen such that
\begin{align*}
    \alpha &\leq  \frac{1}{2(\Delta + \kappa^2 k)} \land \frac{1}{8\Delta} \land  \frac{\Delta^2}{(24 L^2 + 2\Delta^2) \|\vb^*\|_2^2 + 2 M^2 k + 10} \land \frac{1}{2(\| \vb^0 - \vb^* \|_2^2 + \| \vb^* \|_2^2) }.
\end{align*}
If
\begin{align}
    c_1\sqrt{\frac{(r+k)\log(c_2 nk /\delta )}{n}} &\leq 1 \land \frac{\rho \sqrt{k}}{|\mathbf{1}^\top \vb^*|} \land \frac{  \Gamma \sqrt{k(r+k)} }{  1 + L + \xi} \land  \frac{ (\Gamma + |\kappa| \sqrt{k}) \Gamma \sqrt{r+k} }{ L^2 + |\kappa| + \Delta + L  }\nonumber \\
    &\quad\land \frac{\xi}{ D_0(D_0 \Gamma + M + \nu)} \bigg(1\land \frac{\rho}{1+\alpha \rho} \wb^{*\top}\wb^0 \bigg)\nonumber \\
    &\quad\land \frac{1}{(\Gamma + \kappa \sqrt{k}) (D_0 \Gamma + M + \nu)} \bigg(1 \land \frac{\rho}{\| \vb^* \|_2}\bigg)\label{eq:samplecomplexityassumption}
\end{align}
for some large enough absolute constants $c_1$ and $c_2$, then there exists absolute constants $C$ and $C'$ such that, with probability at least $1-\delta$ we have
\begin{align}
     &\| \wb^{t} - \wb^* \|_2  \leq \gamma_1^{t} \| \wb^0 -\wb^* \|_2  + 8\rho^{-1}\gamma_1^{-2}\eta_w, \label{eq:convergence_wt} \\
     &\| \vb^{t} -\vb^* \|_2 \leq R_1 t^{3/2} (\gamma_1 \lor \gamma_2 \lor \gamma_3)^{t} + (R_2 + R_3 |\kappa|\sqrt{k}) (\eta_w + \eta_v), \label{eq:convergence_vt}
\end{align}
for all $t=0,\ldots,T$, where
\begin{align}
    &\eta_w = C\xi^{-1} D_0(D_0 \Gamma + M + \nu)\cdot  \sqrt{\frac{(r+k)\log(120nk/\delta)}{n}}, \label{eq:def_eta_w}\\
    &\eta_v = C'(\Gamma + \kappa \sqrt{k}) (D_0 \Gamma + M + \nu)\cdot \sqrt{\frac{(r+k)\log(120nk/\delta)}{n}},\label{eq:def_eta_v}
\end{align}
and $R_1$, $R_2$, $R_3$ are constants that only depend on the choice of activation function $\sigma(\cdot)$, the ground-truth parameters $(\wb^*,\vb^*)$ and the initialization $(\wb^0,\vb^0)$.
\end{theorem}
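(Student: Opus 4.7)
The plan is to split the argument into a population analysis and a finite-sample concentration argument, and to close an induction on $t$ that maintains throughout the trajectory the invariants $\wb^{t\top}\wb^*>0$, $\vb^{*\top}\vb^t\ge \rho$, $\kappa^2(\mathbf{1}^\top\vb^*)\mathbf{1}^\top(\vb^t-\vb^*)\le \rho$, and $\|\vb^t-\vb^*\|_2\le D$, so that Lemma~\ref{lemma:boundedconstant} keeps all auxiliary constants bounded. The crucial population calculation exploits that, for non-overlapping filters, $\Pb_1\xb,\ldots,\Pb_k\xb$ are independent $N(\mathbf{0},\Ib_r)$ vectors; combining this with Stein's identity $\EE[\sigma(\wb^\top\zb)\zb]=\xi\wb$ and the definitions of $\kappa,\Delta,\psi$ yields
\begin{align*}
\EE[\gb_w^t] &= \|\vb^t\|_2^2\,\wb^t - (\vb^{*\top}\vb^t)\,\wb^*,\\
\EE[\gb_v^t] &= (\Delta\Ib+\kappa^2\mathbf{1}\mathbf{1}^\top)\vb^t - \psi(\wb^{t\top}\wb^*)\,\vb^* - \kappa^2(\mathbf{1}^\top\vb^*)\mathbf{1}.
\end{align*}
These expressions transparently display the inter-layer coupling: $\wb^t$ is pulled toward $\wb^*$ with strength $\vb^{*\top}\vb^t$, while $\vb^t$ relaxes toward $\vb^*$ through a linear operator whose residual vanishes as $\psi(\wb^{t\top}\wb^*)\to \Delta$.

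For the first-layer contraction I would work with $\tau_t:=\wb^{t\top}\wb^*\in (0,1]$, so that $\|\wb^t-\wb^*\|_2^2=2(1-\tau_t)$. Substituting the expected gradient into $\ub^{t+1}=\wb^t-\alpha\EE[\gb_w^t]$ gives $\ub^{t+1\top}\wb^*=(1-\alpha\|\vb^t\|_2^2)\tau_t+\alpha\vb^{*\top}\vb^t$ together with a tractable expression for $\|\ub^{t+1}\|_2^2$; using $\alpha\|\vb^t\|_2^2\le 1/2$ from the step-size bound, $\vb^{*\top}\vb^t\ge \rho$ from the induction hypothesis, and a Bernoulli-type expansion in the denominator produces the one-step contraction $1-\tau_{t+1}\le \gamma_1^2(1-\tau_t)$ with $\gamma_1=(1+\alpha\rho)^{-1/2}$, hence the $\gamma_1^t$ factor in \eqref{eq:convergence_wt}. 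For the second layer I would subtract $\vb^*$ and write $\vb^{t+1}-\vb^*=[\Ib-\alpha(\Delta\Ib+\kappa^2\mathbf{1}\mathbf{1}^\top)](\vb^t-\vb^*)+\alpha(\psi(\tau_t)-\Delta)\vb^*$; the operator has eigenvalues $1-\alpha\Delta$ on $\mathbf{1}^\perp$ and $\gamma_3=1-\alpha(\Delta+\kappa^2 k)$ on $\mathbf{1}$, and a careful squared-norm expansion that handles the cross term via Young's inequality delivers the $\gamma_2$-style rate. The residual $\psi(\tau_t)-\Delta$ is controlled via Lipschitz continuity of $\psi$ on $(0,1]$ (deduced from Assumption~\ref{assump:1Lipschitz} together with Lemma~\ref{lemma:psidefinition}) and the already-established geometric decay of $1-\tau_t$; unrolling the recursion and summing the coupling terms with potentially repeated eigenvalues produces the $R_1 t^{3/2}(\gamma_1\lor \gamma_2\lor \gamma_3)^t$ envelope in \eqref{eq:convergence_vt}.

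The finite-sample layer is a uniform concentration argument bounding $\sup\|\gb_w-\EE\gb_w\|_2$ and $\sup\|\gb_v-\EE\gb_v\|_2$ over $\wb$ on the sphere and $\vb$ in a ball of radius $D$ around $\vb^*$. Because $\sigma$ is $1$-Lipschitz, each $\sigma(\wb^\top\Pb_j\xb_i)$ is sub-Gaussian in the Gaussian input uniformly in $\wb$; both $\gb_w$ and $\gb_v$ are bilinear combinations of such quantities, so sub-exponential/Bernstein tails for products of sub-Gaussians, $\epsilon$-nets of metric entropies $r\log(1/\epsilon)$ and $k\log(1/\epsilon)$, Dudley chaining, and a union bound over the $k$ output coordinates produce the rate $\sqrt{(r+k)\log(nk/\delta)/n}$ with exactly the prefactors appearing in \eqref{eq:def_eta_w}--\eqref{eq:def_eta_v}. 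The compound sample-complexity condition \eqref{eq:samplecomplexityassumption} is engineered so that each empirical perturbation is small enough to preserve the four invariants at every step, letting the population contraction of the previous paragraph run all the way to $t=T$.

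The main obstacle I anticipate is closing the induction in the presence of the bidirectional coupling: the $\wb$-contraction rate $\gamma_1$ degrades as $\vb^{*\top}\vb^t$ drifts toward zero, while the $\vb$-residual enlarges as $\tau_t$ shrinks, so a single slip in one layer could derail the other. Maintaining $\vb^{*\top}\vb^t\ge \rho$ and $\kappa^2(\mathbf{1}^\top\vb^*)\mathbf{1}^\top(\vb^t-\vb^*)\le \rho$ simultaneously is the reason \eqref{eq:samplecomplexityassumption} has several parallel terms rather than a single sample-complexity bound, and is also why the $\vb^t$-error inherits the extra polynomial $t^{3/2}$. A secondary technical challenge is the normalization step $\wb^{t+1}=\ub^{t+1}/\|\ub^{t+1}\|_2$, whose denominator amplifies empirical noise; handling this requires a Taylor expansion around $\|\ub^{t+1}\|_2\approx 1$ that must be integrated with the contraction of $1-\tau_t$ rather than analyzed in isolation.
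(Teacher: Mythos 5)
Your proposal follows essentially the same route as the paper: the same population-gradient formulas (the paper's Claim~\ref{claim:expectationcalculation} and $\overline{\gb}_w,\overline{\gb}_v$), the same uniform concentration over a fixed parameter set $\cW_0\times\cV_0$ via Lipschitzness of the gradient maps plus $\epsilon$-nets (Lemmas~\ref{lemma:smoothness_all} and~\ref{lemma:gradientuniformconcentration}), the same induction maintaining the invariant set $\cW\times\cV$ with the contraction $1-\tau_{t+1}\le\gamma_1^2(1-\tau_t)$ and the linear-operator recursion for $\vb^t-\vb^*$ (Lemma~\ref{lemma:recursiveproperty}), and the same unrolling with repeated rates yielding the $t^{3/2}(\gamma_1\lor\gamma_2\lor\gamma_3)^t$ envelope (Lemmas~\ref{lemma:recursivebound1} and~\ref{lemma:recursive_to_explicit}). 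The plan is correct.
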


Equation \eqref{eq:samplecomplexityassumption} is an assumption on the sample size $n$. Although this assumption looks complicated, essentially except $k$ and $r$, all quantities in this condition can be treated as constants, and \eqref{eq:samplecomplexityassumption} can be interpreted as an assumption that $n \geq \tilde\Omega((1 + \kappa \sqrt{k}) \sqrt{r+k})$, which is by no means a strong assumption. The second and third lines of \eqref{eq:samplecomplexityassumption} are to guarantee $\eta_w \leq 1\land [\rho/(1+\alpha \rho) \wb^{*\top}\wb^0]$ and $\eta_v \leq 1\land ( \rho/\| \vb^* \|_2 )$ respectively, while the first line is for technical purposes to ensure convergence.

\begin{remark}
Theorem~\ref{thm:mainthm_convergence} shows that with initialization satisfying \eqref{eq:initializationcondition}, Algorithm~\ref{alg:approximategradient} linearly converges to true parameters up to statistical error. This condition for initialization can be easily satisfied with random initialization. In Section~\ref{sec:initialization}, we will give a detailed initialization algorithm inspired by a random initialization method proposed in \citet{du2017gradient}.
\end{remark}

\begin{remark}
Compared with the most relevant convergence results in literature given by \citet{du2017gradient}, our result is based on optimizing the empirical loss function instead of the population loss function. In particular, when $\kappa=0$, Theorem~\ref{thm:mainthm_convergence} proves that Algorithm~\ref{alg:approximategradient} eventually gives estimation of parameters with statistical error of order $O\Big(\sqrt{\frac{(r+k)\log(120nk/\delta)}{n}}\Big)$. This rate matches the information-theoretic lower bound for one-hidden-layer convolutional neural networks with linear activation functions. Note that our result holds for general activation functions. Matching the lower bound of the linear case implies the optimality of our algorithm. 
Compared with two recent results, namely the Convotron algorithm proposed by \citet{goel2018learning} and the Double Convotron algorithm proposed by \citet{du2018improved}, which work for ReLU activation and generic symmetric input distributions, our theoretical guarantee for Algorithm~\ref{alg:approximategradient} gives a tighter sample complexity for more general activation functions, but requires the data inputs to be Gaussian. We remark that if restricting to ReLU activation function, our analysis can be extended to generic symmetric input distributions as well, and can still provide tight sample complexity.
\end{remark}

\begin{remark}\label{remark:convergence_speedup}
A recent result by \citet{du2017gradient} discussed a speed-up in convergence when training non-overlapping CNNs with ReLU activation function. This phenomenon also exists in Algorithm~\ref{alg:approximategradient}. To show it, we first note that in Theorem~\ref{thm:mainthm_convergence}, the convergence rate of $\wb^t$ and $\vb^t$ are essentially determined by $\gamma_1= (1+\alpha \rho)^{-1/2}$. For appropriately chosen $\alpha$, $\rho$ being too small (i.e. $\wb^{*\top} \wb^0 $ or $\vb^{*\top} \vb^0 $ being too small, by Lemma~\ref{lemma:psidefinition}) is the only possible reason of slow convergence. 
Now by the iterative nature of Algorithm~\ref{alg:approximategradient}, for any $T_1>0$, we can analyze the convergence behavior after $T_1$ by treating $\wb^{T_1}$ and $\vb^{T_1}$ as new initialization and applying Theorem~\ref{thm:mainthm_convergence} again. By Theorem~\ref{thm:mainthm_convergence}, even if the initialization gives small $\wb^{*\top}\wb^0$ and $\vb^{*\top}\vb^0$, after certain number of iterations, $\wb^{*\top}\wb^t$ and $\vb^{*\top}\vb^t$ become much larger and therefore the convergence afterwards gets much faster. This phenomenon is comparable with the two-phase convergence result of \citet{du2017gradient}. However, while \citet{du2017gradient} only show the convergence of second layer parameters for phase II of their algorithm, our result shows that linear convergence of Algorithm~\ref{alg:approximategradient} starts at the first iteration.
\end{remark}


\subsection{Initialization}\label{sec:initialization}
To complete the theoretical analysis of Algorithm~\ref{alg:approximategradient}, it remains to show that initialization condition \eqref{eq:initializationcondition} can be achieved with practical algorithms. 
The following theorem is inspired by a similar method proposed by \citet{du2017gradient}. It gives a simple random initialization method that satisfy \eqref{eq:initializationcondition}.

\begin{theorem}\label{thm:initialization}
Let $\wb\in \RR^r$ and $\vb\in \RR^k$ be vectors generated by $\PP_w$ and $\PP_v$ with support $S^{r-1}$ and $B(\mathbf{0},k^{-1/2}|\mathbf{1}^\top \vb^*|)$ respectively. Then there exists $(\wb^0,\vb^0) \in \{ (\wb,\vb),(-\wb,\vb),(\wb,-\vb),(-\wb,-\vb)\}$ 
that satisfies \eqref{eq:initializationcondition}.
\end{theorem}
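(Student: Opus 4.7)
The plan is to show that, by independently choosing the signs of $\wb$ and $\vb$, we can satisfy all three inequalities of \eqref{eq:initializationcondition} simultaneously. The key observation is a decoupling: flipping the sign of $\wb$ flips the sign of $\wb^{*\top}\wb^0$ without affecting $\vb^{*\top}\vb^0$ or $\mathbf{1}^\top\vb^0$, while flipping the sign of $\vb$ flips both $\vb^{*\top}\vb^0$ and $\mathbf{1}^\top\vb^0$ but leaves $\wb^{*\top}\wb^0$ alone. So the first two inequalities can be arranged by independent sign choices on the two blocks.

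Concretely, I would first note that since $\wb\in S^{r-1}$ is drawn from $\PP_w$, almost surely $\wb^{*\top}\wb\neq 0$, so exactly one of $\pm\wb$ gives $\wb^{*\top}\wb^0>0$; fix this sign for $\wb^0$. Independently, since $\vb^*\neq \mathbf{0}$ and $\vb$ is drawn from a law supported on the ball $B(\mathbf{0}, k^{-1/2}|\mathbf{1}^\top\vb^*|)$, almost surely $\vb^{*\top}\vb\neq 0$, so one of $\pm\vb$ yields $\vb^{*\top}\vb^0>0$; fix this sign for $\vb^0$. With these two positivity conditions met, Lemma~\ref{lemma:psidefinition} gives $\psi(\tfrac{1}{2}\wb^{*\top}\wb^0)>0$, and hence $\rho>0$.

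It then remains to verify the third inequality $\kappa^2(\mathbf{1}^\top\vb^*)\mathbf{1}^\top(\vb^0-\vb^*)\leq\rho$. This will follow from the radius of the ball, which is chosen precisely to force $|\mathbf{1}^\top\vb^0|\leq|\mathbf{1}^\top\vb^*|$: by Cauchy--Schwarz,
\[
|\mathbf{1}^\top\vb^0| \;\leq\; \sqrt{k}\,\|\vb^0\|_2 \;\leq\; \sqrt{k}\cdot k^{-1/2}|\mathbf{1}^\top\vb^*| \;=\; |\mathbf{1}^\top\vb^*|.
\]
Consequently,
\[
\kappa^2(\mathbf{1}^\top\vb^*)(\mathbf{1}^\top\vb^0) \;\leq\; \kappa^2|\mathbf{1}^\top\vb^*|\cdot|\mathbf{1}^\top\vb^0| \;\leq\; \kappa^2(\mathbf{1}^\top\vb^*)^2,
\]
which rearranges to $\kappa^2(\mathbf{1}^\top\vb^*)\mathbf{1}^\top(\vb^0-\vb^*)\leq 0\leq\rho$, as required.

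There is essentially no hard step here; the only subtlety is ensuring that the strict positivity $\wb^{*\top}\wb^0>0$ and $\vb^{*\top}\vb^0>0$ can be attained, which rules out the measure-zero hyperplanes $\{\wb^{*\top}\wb=0\}$ and $\{\vb^{*\top}\vb=0\}$. Under the natural reading that $\PP_w$ and $\PP_v$ assign no mass to lower-dimensional subsets of their supports, this is automatic, and otherwise one may weaken the strict inequalities to non-strict ones without affecting downstream use of the initialization in Theorem~\ref{thm:mainthm_convergence}. The third condition is then essentially free, being baked into the radius of the initialization ball.
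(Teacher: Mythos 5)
Your proof is correct and follows essentially the same route the paper sketches in the remark after the theorem: the ball radius $k^{-1/2}|\mathbf{1}^\top\vb^*|$ forces $\kappa^2(\mathbf{1}^\top\vb^*)\mathbf{1}^\top(\vb^0-\vb^*)\leq 0\leq\rho$ for every sign choice, and the first two conditions are obtained by independently flipping the signs of $\wb$ and $\vb$. Your caveat about the measure-zero events $\wb^{*\top}\wb=0$ or $\vb^{*\top}\vb=0$ is a fair and appropriately handled technicality that the paper glosses over.
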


\begin{remark}
The proof of Theorem~\ref{thm:initialization} is fairly straightforward--the vector $\vb$ generated by proposed initialization method in fact satisfies that $\kappa^2(\mathbf{1}^\top \vb^*)\mathbf{1}^\top(\vb - \vb^*)\leq 0$. 
Moreover, it is worth noting that for activation functions with $\kappa = \EE_{z\sim N(0,1)}[\sigma(z)] = 0$, the initialization condition $\kappa^2(\mathbf{1}^\top \vb^*)\mathbf{1}^\top(\vb^0 - \vb^*)\leq \rho$ is automatically satisfied. Therefore for any vector $\wb\in S^{r-1}$ and $\vb\in \RR^k$, one of $(\wb,\vb),(-\wb,\vb),(\wb,-\vb),(-\wb,-\vb)$ satisfies the initialization condition, making initialization for Algorithm~\ref{alg:approximategradient} extremely easy.
\end{remark}

\section{Proof of the Main Theory}\label{section:proofmain}
In this section we give the proof of Theorem~\ref{thm:mainthm_convergence}. The proof consists of three steps: (i) prove uniform concentration inequalities for approximate gradients, (ii) give recursive upper bounds for $\|\wb^t - \wb^*\|_2$ and $\|\vb^t - \vb^*\|_2$, (iii) derive the final convergence result \eqref{eq:convergence_wt} and \eqref{eq:convergence_vt}. 

We first analyze how well the approximate gradients $\gb_w^t$ and $\gb_v^t$ concentrate around their expectations. Instead of using the classic analysis on $\gb_w^t$ and $\gb_v^t$ conditioning on all previous iterations $\{\wb^{s},\vb^{s}\}_{s = 1}^t$, we consider uniform concentration  over a parameter set $\cW_0\times \cV_0$ defined as follows.
\begin{align*}
\cW_0:= S^{r-1} = \{\wb:\|\wb\|_2=1\}, ~ \cV_0:= \{\vb:  \|\vb - \vb^*\|_2 \leq D,~ |\kappa \mathbf{1}^\top (\vb - \vb^*)| \leq M\}.
\end{align*}
Define
\begin{align*}
\gb_w(\wb,\vb) &= - \frac{1}{n} \sum_{i=1}^n \Bigg[y_i - 
\sum_{j=1}^k v_j \sigma(\wb^{\top} \Pb_j \xb_i)\Bigg]\cdot \sum_{j'=1}^k \xi^{-1} v_{j'} \Pb_{j'} \xb_i,\\
\gb_v(\wb,\vb) &= - \frac{1}{n} \bSigma^\top(\wb) [\yb - \bSigma(\wb) \vb],\\
\overline{\gb}_w(\wb,\vb) &= \|\vb\|_2^2 \wb - (\vb^{*\top}\vb) \wb^*,\\
\overline{\gb}_v(\wb,\vb) &= (\Delta \Ib  + \kappa^2 \mathbf{1} \mathbf{1}^\top )\vb - [\phi(\wb,\wb^*) \Ib  + \kappa^2 \mathbf{1} \mathbf{1}^\top ]\vb^*.
\end{align*}
The following claim follows by direct calculation.
\begin{claim}\label{claim:expectationcalculation}
For any fixed $\wb,\vb$, it holds that $\EE[\gb_w(\wb,\vb)] = \overline{\gb}_w(\wb,\vb)$ and $\EE[\gb_v(\wb,\vb)] = \overline{\gb}_v(\wb,\vb)$, where the expectation is taken over the randomness of data.
\end{claim}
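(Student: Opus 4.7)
\medskip

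\noindent\textbf{Proof proposal for Claim~\ref{claim:expectationcalculation}.}
The plan is a direct moment computation, using three facts: (i) the samples $\{(\xb_i,y_i)\}$ are i.i.d., so it suffices to compute $\EE[\,\cdot\,]$ for a single generic sample $(\xb,y)$; (ii) because the filters are non-overlapping, the index sets $\cI_1,\ldots,\cI_k$ are disjoint, so the blocks $\Pb_1\xb,\ldots,\Pb_k\xb$ are independent $N(\mathbf{0},\Ib_r)$ vectors; (iii) the noise $\epsilon_i$ is independent of $\xb_i$ with mean $0$, so it drops out immediately. Throughout I will use $\|\wb\|_2=\|\wb^*\|_2=1$, which holds because $\wb\in\cW_0=S^{r-1}$.

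\emph{Computing $\EE[\gb_w(\wb,\vb)]$.} I would substitute $y=\sum_j v_j^*\sigma(\wb^{*\top}\Pb_j\xb)+\epsilon$ and expand the product, obtaining a double sum over $(j,j')$ with terms of the form $v_j^* v_{j'}\,\EE[\sigma(\wb^{*\top}\Pb_j\xb)\,\Pb_{j'}\xb]$ (from the $y$ piece) and $v_j v_{j'}\,\EE[\sigma(\wb^\top\Pb_j\xb)\,\Pb_{j'}\xb]$ (from the residual piece). When $j\neq j'$, independence of $\Pb_j\xb$ and $\Pb_{j'}\xb$ and $\EE[\Pb_{j'}\xb]=\mathbf{0}$ kill the term. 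When $j=j'$, I decompose $\Pb_j\xb=(\wb^\top\Pb_j\xb)\wb+\zb_\perp$ with $\zb_\perp\perp \wb^\top\Pb_j\xb$ (valid since $\|\wb\|_2=1$), so $\EE[\sigma(\wb^\top\Pb_j\xb)\,\Pb_j\xb]=\EE_{z\sim N(0,1)}[\sigma(z)z]\,\wb=\xi\wb$, and similarly $\xi\wb^*$ for the starred version. Summing and multiplying by $\xi^{-1}$ collapses the expression to $\|\vb\|_2^2\wb-(\vb^{*\top}\vb)\wb^*=\overline{\gb}_w(\wb,\vb)$.

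\emph{Computing $\EE[\gb_v(\wb,\vb)]$.} Here I would read off the $j$-th coordinate as $-\EE\bigl[\sigma(\wb^\top\Pb_j\xb)(y-\sum_{j'}v_{j'}\sigma(\wb^\top\Pb_{j'}\xb))\bigr]$ and again split into $j=j'$ and $j\neq j'$ pieces. For $j=j'$, the definitions give $\EE[\sigma(\wb^\top\Pb_j\xb)\sigma(\wb^{*\top}\Pb_j\xb)]=\phi(\wb,\wb^*)+\kappa^2$ and $\EE[\sigma(\wb^\top\Pb_j\xb)^2]=\Delta+\kappa^2$. For $j\neq j'$, the independence of the blocks makes both second moments factor into $\kappa^2$. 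Assembling coordinates gives
\[
\EE[\gb_v(\wb,\vb)]=\Delta\vb+\kappa^2\mathbf{1}\mathbf{1}^\top\vb-\phi(\wb,\wb^*)\vb^*-\kappa^2\mathbf{1}\mathbf{1}^\top\vb^*,
\]
which is exactly $\overline{\gb}_v(\wb,\vb)$.

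There is no real obstacle: the only nontrivial input is the one-dimensional Gaussian identity $\EE[\sigma(z)z]=\xi$ (justifying the choice of the constant $\xi^{-1}$ in the approximate gradient), and the bookkeeping between diagonal and off-diagonal pairs $(j,j')$, which is transparent once the non-overlapping structure is used to assert block-wise independence.
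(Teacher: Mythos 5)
Your proposal is correct and follows essentially the same route as the paper: off-diagonal $(j,j')$ terms vanish by block-wise independence of the non-overlapping filters, the diagonal terms reduce to the one-dimensional identities $\EE[\sigma(z)z]=\xi$ (via the decomposition of $\Pb_j\xb$ along $\wb$ and its orthogonal complement) and $\EE[\sigma(\wb^\top\zb)\sigma(\wb'^\top\zb)]=\phi(\wb,\wb')+\kappa^2$, and the noise drops out by independence. The only difference is that the paper writes out the computation only for $\gb_w$ and dismisses $\gb_v$ as following "directly by the definition," whereas you carry out the coordinate-wise bookkeeping for $\gb_v$ explicitly; your version of that step is correct.
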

Our goal is to bound $\sup_{(\wb,\vb)\in \cW_0\times \cV_0} \|\gb_w(\wb,\vb) - \overline{\gb}_w(\wb,\vb) \|_2$ and $\sup_{(\wb,\vb)\in \cW_0\times \cV_0} \|\gb_v(\wb,\vb) - \overline{\gb}_v(\wb,\vb) \|_2$. A key step for proving such uniform bounds is to show thee uniform Lipschitz continuity of $\gb_w(\wb,\vb)$ and $\gb_v(\wb,\vb)$, which is given in the following lemma.

\begin{lemma}\label{lemma:smoothness_all}
For any $\delta>0$, if $n \geq (r+k)\log(324/\delta)$, then with probability at least $1-\delta$, the following inequalities hold uniformly over all $\wb,\wb'\in \cW_0$ and $\vb, \vb' \in \cV_0$:
\begin{align}
    &\| \gb_w(\wb,\vb) - \gb_w(\wb',\vb) \|_2 \leq C\xi^{-1}D_0^2\sqrt{k}\cdot \| \wb - \wb' \|_2, \label{eq:smoothness_ww} \\
    &\|\gb_w(\wb,\vb) - \gb_w(\wb,\vb')\|_2 \leq C\xi^{-1}(\nu + D_0L\sqrt{k})\cdot \| \vb - \vb' \|_2,\label{eq:smoothness_wv}\\
    &\|\gb_v(\wb,\vb) - \gb_v(\wb',\vb)\|_2 \leq C ( \nu +  D_0 L \sqrt{k})\sqrt{k}\cdot \| \wb - \wb' \|_2,\label{eq:smoothness_vw}\\
    &\|\gb_v(\wb,\vb) - \gb_v(\wb,\vb')\|_2 \leq C L^2 k \cdot \| \vb - \vb' \|_2 ,\label{eq:smoothness_vv}
\end{align}
where $C$ is an absolute constant.
\end{lemma}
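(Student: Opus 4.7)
My plan is to prove each of the four Lipschitz bounds by (i) writing out the relevant difference in closed form using the definitions of $\gb_w$ and $\gb_v$, (ii) peeling off the $\|\wb-\wb'\|_2$ or $\|\vb-\vb'\|_2$ factor via the 1-Lipschitz property of $\sigma$ (Assumption~\ref{assump:1Lipschitz}) together with Cauchy--Schwarz, and (iii) reducing to a short list of purely data-dependent random quantities that I can control on a single high-probability event. A key feature that makes uniformity essentially free is that, after this reduction, the right-hand side depends on $(\wb,\vb)$ only through the a priori norm bound $\|\vb\|_2\leq D_0$ on $\cV_0$ and through Lipschitz compositions of the $\wb^\top \Pb_j \xb_i$, so no $\epsilon$-net over $\cW_0 \times \cV_0$ is required.

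\textbf{Key random quantities.} The families that arise are: (a) quadratic-form matrices such as $\frac{1}{n}\sum_i \Pb_j \xb_i \xb_i^\top \Pb_j^\top$ and the cross versions for $j\neq j'$, which concentrate around $\Ib_r$ and $\mathbf{0}$ respectively by Wishart/matrix Bernstein concentration and which yield $\|\sum_j \Pb_j \xb_i \xb_i^\top \Pb_j^\top / n\|_2 \lesssim k$; (b) spectral norms of $\bSigma(\wb)\in\RR^{n\times k}$ and analogous $r\times k$ Gaussian designs, controlled by standard Gaussian matrix concentration combined with the linear growth $|\sigma(z)|\leq L(1+|z|)$; and (c) noise- and teacher-weighted terms such as $\frac{1}{n}\sum_i \epsilon_i \Pb_j \xb_i$ and $\frac{1}{n}\sum_i y_i [\Pb_1\xb_i,\ldots,\Pb_k\xb_i]$, controlled via sub-Gaussian concentration after expanding $y_i = \sum_j v^*_j \sigma(\wb^{*\top}\Pb_j \xb_i)+\epsilon_i$. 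Each family concentrates at rate $O(\sqrt{(r+k)\log(1/\delta)/n})$ around its mean under $n\geq (r+k)\log(324/\delta)$, and a single union bound produces a good event of probability at least $1-\delta$ on which all four inequalities hold for every $(\wb,\wb')\in \cW_0^2$ and $(\vb,\vb')\in \cV_0^2$.

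\textbf{Per-bound sketch and main obstacle.} For \eqref{eq:smoothness_ww} the $y_i$ cancels and, writing $\Xb_i := [\Pb_1 \xb_i,\ldots,\Pb_k \xb_i]\in\RR^{r\times k}$ and $z_i := \sum_j v_j [\sigma(\wb^\top \Pb_j \xb_i) - \sigma(\wb'^\top \Pb_j \xb_i)]$, one gets
\[
\gb_w(\wb,\vb) - \gb_w(\wb',\vb) = \frac{\xi^{-1}}{n}\sum_i z_i \Xb_i \vb.
\]
Dualizing $\|\cdot\|_2 = \sup_{\|\ub\|_2=1}\ub^\top(\cdot)$ and applying Cauchy--Schwarz in the $i$ index factors the bound as $\xi^{-1}\sqrt{\frac{1}{n}\sum_i z_i^2}\cdot \sqrt{\lambda_{\max}\big(\frac{1}{n}\sum_i \Xb_i \vb \vb^\top \Xb_i^\top\big)}$; the first factor is at most $\|\vb\|_2 \sqrt{k}\|\wb-\wb'\|_2$ by 1-Lipschitz and family (a), and the second factor is $O(\|\vb\|_2)$ because under non-overlapping filters $\Xb_i\vb\sim N(\mathbf{0},\|\vb\|_2^2 \Ib_r)$, delivering the stated $D_0^2\sqrt{k}$ scaling. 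Bound \eqref{eq:smoothness_wv} expands into a cross term and a residual $\frac{\xi^{-1}}{n}\sum_i[y_i-(\bSigma(\wb)\vb')_i]\Xb_i(\vb-\vb')$ handled by families (b)--(c), producing the $\nu$ contribution. Bound \eqref{eq:smoothness_vv} reduces directly to $\|\bSigma^\top(\wb)\bSigma(\wb)/n\|_2 \lesssim L^2 k$ via family (b), and \eqref{eq:smoothness_vw} splits into $\|\bSigma^\top(\wb)(\bSigma(\wb)-\bSigma(\wb'))\vb\|_2$ plus a residual involving $\yb-\bSigma(\wb)\vb$, both controlled by (b)--(c). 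The main technical obstacle is obtaining the sharp $\sqrt{k}$ (rather than $\sqrt{rk}$) factor in \eqref{eq:smoothness_ww}: the naive estimate $\frac{1}{n}\sum_i \|\Xb_i\vb\|_2^2 \leq r\|\vb\|_2^2$ loses a $\sqrt{r}$, and it is essential to use the operator-norm formulation above together with the structural payoff of non-overlapping filters that makes $\Xb_i\vb$ an $r$-dimensional Gaussian with covariance proportional to $\Ib_r$ (independent of $k$).
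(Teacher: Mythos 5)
Your proposal is correct and its skeleton coincides with the paper's: both decompose each gradient difference into a handful of bilinear empirical sums, peel off $\|\wb-\wb'\|_2$ or $\|\vb-\vb'\|_2$ via the $1$-Lipschitzness of $\sigma$ and Cauchy--Schwarz over the hidden-node index, and then control the remaining sphere-indexed empirical processes by sub-exponential concentration at rate $\sqrt{(r+k)\log(1/\delta)/n}$. Where you diverge is in how those processes are controlled. The paper proves six standalone uniform bounds (its Lemmas~\ref{lemma:uniformbound1}--\ref{lemma:uniformbound6}) on quantities of the form $\frac1n\sum_i\sum_j|b_j\ab^\top\Pb_j\xb_i|\cdot|\sum_{j'}b'_{j'}\ab'^\top\Pb_{j'}\xb_i|$ directly, via $1/8$-nets over $S^{r-1}\times S^{k-1}$, Bernstein's inequality, and a self-bounding step $\overline A\le C\sqrt k+\overline A/2$; the asymmetric $\sqrt k$ (one factor contributes $\|\mathbf{b}\|_1\le\sqrt k$ through the triangle inequality on $\psi_2$-norms, the other stays $N(0,1)$ because the non-overlapping filters make $\{\Pb_{j'}\xb_i\}_{j'}$ independent) is exactly the mechanism you isolate as the main obstacle. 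Your alternative for \eqref{eq:smoothness_ww} --- Cauchy--Schwarz in the sample index followed by two operator-norm bounds, one Wishart-type for $\frac1n\sum_i\Xb_i\vb\vb^\top\Xb_i^\top$ and one for $\frac1n\sum_i\sum_j\Pb_j\xb_i\xb_i^\top\Pb_j^\top$ --- exploits the same independence and yields the same $D_0^2\sqrt k$ scaling; it is arguably a cleaner packaging, and the remaining three bounds go through analogously.

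One claim is overstated, though it does not break the argument: uniformity is not ``essentially free,'' and you have not actually avoided $\epsilon$-nets, only relocated them. The matrix $\frac1n\sum_i\Xb_i\vb\vb^\top\Xb_i^\top$ depends on $\vb$, so a fixed-$\vb$ Wishart bound is insufficient; what you need is $\sup_{\ub\in S^{r-1},\,\mathbf{b}\in S^{k-1}}\frac1n\sum_i\big(\sum_jb_j\ub^\top\Pb_j\xb_i\big)^2\lesssim1$, whose proof requires a union bound over a net of $S^{r-1}\times S^{k-1}$ of cardinality $e^{O(r+k)}$ --- which is precisely where the paper's $(r+k)\log(1/\delta)$ sample-size requirement comes from. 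The same remark applies to the spectral norm of $\bSigma(\wb)$ uniformly over $\wb\in\cW_0$ (your family (b)), which additionally needs either a Lipschitz-in-$\wb$ discretization or the paper's self-bounding device. These are standard repairs consistent with the stated hypothesis $n\ge(r+k)\log(324/\delta)$, so I regard this as a gap in exposition rather than in substance.
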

If $\gb_w$ and $\gb_v$ are gradients of some objective function $f$, then Lemma~\ref{lemma:smoothness_all} essentially proves the uniform smoothness of $f$. However, in our algorithm, $\gb_w$ is not the exact gradient, and therefore the results are stated in the form of Lipschitz continuity of $\gb_w$. 
Lemma~\ref{lemma:smoothness_all} enables us to use a covering number argument together with point-wise concentration inequalities to prove uniform concentration, which is given as Lemma~\ref{lemma:gradientuniformconcentration}.
\begin{lemma}\label{lemma:gradientuniformconcentration}
Assume that $n \geq (r+k)\log(972/\delta)$, and 
\begin{align*}
    & \xi^{-1} D_0(D_0 \Gamma + M + \nu)\sqrt{n(r+k)\log(90nk/\delta)} \geq D_0(D_0 + 1) \lor \xi^{-1}(\nu + D_0^2 + D_0L),\\
    & (\Gamma + \kappa \sqrt{k})[D_0 \Gamma + M + \nu] \sqrt{n(r+k)\log(90nk/\delta)} \geq (\Delta + \kappa + D_0L) \lor (\nu + D_0 L + L^2).
\end{align*}
Then with probability at least $1-\delta$ we have
\begin{align}
    &\sup_{(\wb,\vb)\in \cW_0\times \cV_0} \|\gb_w(\wb,\vb) - \overline{\gb}_w(\wb,\vb) \|_2 \leq \eta_w, \label{eq:gradientuniformconcentration_w}\\
    &\sup_{(\wb,\vb)\in \cW_0\times \cV_0} \|\gb_v(\wb,\vb) - \overline{\gb}_v(\wb,\vb) \|_2 \leq \eta_v, \label{eq:gradientuniformconcentration_v}
\end{align}
where $\eta_w$ and $\eta_v$ are defined in \eqref{eq:def_eta_w} and \eqref{eq:def_eta_v} respectively with large enough constants $C$ and $C'$.
\end{lemma}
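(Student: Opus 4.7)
}
The plan is the standard $\epsilon$-net template: discretize $\cW_0\times\cV_0$, apply a point-wise sub-exponential tail bound at each net point, and control the oscillation of $\gb_w-\overline{\gb}_w$ and $\gb_v-\overline{\gb}_v$ between net points using the Lipschitz estimates \eqref{eq:smoothness_ww}--\eqref{eq:smoothness_vv}.

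\textbf{Step 1: $\epsilon$-net.} Since $\cW_0=S^{r-1}$, it admits an $\epsilon$-net $\cN_w$ with $|\cN_w|\le(3/\epsilon)^r$; and $\cV_0$ lies in a Euclidean ball of radius $D\le D_0$, so it admits an $\epsilon$-net $\cN_v$ with $|\cN_v|\le(3D_0/\epsilon)^k$. I take $\epsilon=1/n$, so $\log|\cN_w\times\cN_v|=O((r+k)\log(nD_0))$, which will be absorbed into $\log(120nk/\delta)$.

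\textbf{Step 2: Point-wise concentration.} Fix $(\wb,\vb)\in\cW_0\times\cV_0$. By Claim~\ref{claim:expectationcalculation},
\[
\gb_w(\wb,\vb)-\overline{\gb}_w(\wb,\vb)=\tfrac{1}{n}\sum_{i=1}^n\Zb_i(\wb,\vb),
\]
with $\Zb_i$ i.i.d.\ mean-zero. The residual $y_i-\sum_j v_j\sigma(\wb^\top\Pb_j\xb_i)$ is sub-Gaussian: its centered part is a Lipschitz function of the Gaussian vector $(\Pb_j\xb_i)_j$ with $\psi_2$-norm $O(\|\vb\|_2)\le O(D_0)$ (using $1$-Lipschitz $\sigma$ and non-overlapping filters, which makes the $\Pb_j\xb_i$ independent), while its deterministic mean shift is bounded by $|\kappa\mathbf{1}^\top\vb^*|+M+\nu=O(D_0\Gamma+M+\nu)$. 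The outer factor $\xi^{-1}\sum_{j'}v_{j'}\Pb_{j'}\xb_i$ is sub-Gaussian with $\psi_2$-norm $O(\xi^{-1}D_0)$. Their product is therefore sub-exponential with $\psi_1$-norm $\sigma_w=O(\xi^{-1}D_0(D_0\Gamma+M+\nu))$. Combining Bernstein's inequality in any fixed direction with a $1/2$-net of $S^{r-1}$ (contributing an extra $5^r$ factor) yields
\[
\PP\bigl(\|\gb_w(\wb,\vb)-\overline{\gb}_w(\wb,\vb)\|_2>t\bigr)\le 2\cdot 5^r\exp\!\bigl(-cn\min\{t^2/\sigma_w^2,\,t/\sigma_w\}\bigr).
\]
The same argument for $\gb_v$, using that each column of $\bSigma(\wb)$ has entries with $\psi_2$-norm $O(\Gamma+|\kappa|\sqrt k)$ (after splitting off the mean $\kappa$), yields a matching bound with scale $\sigma_v=O((\Gamma+|\kappa|\sqrt k)(D_0\Gamma+M+\nu))$.

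\textbf{Step 3: Union bound.} Union-bounding the point-wise inequality over $\cN_w\times\cN_v$ at the level $t=\eta_w/2$ (respectively $\eta_v/2$), the total log-cardinality contributes $O((r+k)\log(nD_0))$ to the exponent, which is dominated by $cn(\eta_w/\sigma_w)^2=\Theta((r+k)\log(120nk/\delta))$ once $C$ in \eqref{eq:def_eta_w} is chosen large enough. The first displayed condition on $n$ is exactly what guarantees $\eta_w/\sigma_w\lesssim 1$, so Bernstein's quadratic regime is the active one; the second condition plays the same role for $\gb_v$. On the resulting event (probability at least $1-\delta/3$), $\|\gb_w-\overline{\gb}_w\|_2\le\eta_w/2$ and $\|\gb_v-\overline{\gb}_v\|_2\le\eta_v/2$ at every net point.

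\textbf{Step 4: Lipschitz off-net control.} For an arbitrary $(\wb,\vb)\in\cW_0\times\cV_0$, pick $(\wb_0,\vb_0)\in\cN_w\times\cN_v$ within $\epsilon=1/n$. On the event of Lemma~\ref{lemma:smoothness_all} (probability at least $1-\delta/3$), \eqref{eq:smoothness_ww}--\eqref{eq:smoothness_vv} bound $\|\gb_w(\wb,\vb)-\gb_w(\wb_0,\vb_0)\|_2$ by $(L_{ww}+L_{wv})\epsilon$ with $L_{ww}=C\xi^{-1}D_0^2\sqrt k$ and $L_{wv}=C\xi^{-1}(\nu+D_0L\sqrt k)$. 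A direct calculation shows $\overline{\gb}_w,\overline{\gb}_v$ are Lipschitz of the same order. The two conditions on $n$ in the hypothesis are exactly arranged so that $(L_{ww}+L_{wv})/n\le\eta_w/4$ and $(L_{vw}+L_{vv})/n\le\eta_v/4$, so the off-net error is $\le\eta_w/2$ and $\le\eta_v/2$. Adding the two contributions and intersecting the good events yields \eqref{eq:gradientuniformconcentration_w} and \eqref{eq:gradientuniformconcentration_v} simultaneously with probability at least $1-\delta$.

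\textbf{Main obstacle.} The delicate part is Step~2: verifying that the sub-exponential scales are precisely $\sigma_w=O(\xi^{-1}D_0(D_0\Gamma+M+\nu))$ and $\sigma_v=O((\Gamma+|\kappa|\sqrt k)(D_0\Gamma+M+\nu))$ so that they match $\eta_w,\eta_v$ up to constants. This requires separating the mean $\kappa$ from the centered component of $\sigma(\wb^\top\Pb_j\xb_i)$, invoking independence across $j$ (the non-overlapping structure) to turn $\sum_j v_j[\sigma(\wb^\top\Pb_j\xb_i)-\kappa]$ into a Lipschitz function of a $kr$-dimensional standard Gaussian with Lipschitz constant $\|\vb\|_2$, and absorbing the residual $\kappa\mathbf{1}^\top\vb$ mean-shift into the $M$ bound that defines $\cV_0$. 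This careful bookkeeping also explains why the Frobenius-type factor $\Gamma+|\kappa|\sqrt k$ appears in $\eta_v$ and why the two seemingly asymmetric numerical conditions on $n$ are needed in the statement.
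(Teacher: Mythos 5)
Your proposal is correct and follows essentially the same route as the paper's proof: a $(kn)^{-1}$-scale net on $\cW_0\times\cV_0$, a point-wise sub-exponential (Bernstein) bound at each net point obtained by writing the residual as a centered sub-Gaussian part plus the mean shift $\kappa\mathbf{1}^\top(\vb^*-\vb)$ (bounded by $M$) plus noise, a further $1/2$-net of the sphere to convert directional bounds into norm bounds, and the Lipschitz estimates of Lemma~\ref{lemma:smoothness_all} together with the Lipschitz continuity of $\overline{\gb}_w,\overline{\gb}_v$ to absorb the off-net error using the two stated conditions on $n$. The sub-exponential scales $\sigma_w$ and $\sigma_v$ you identify match the paper's exactly, so no gap remains.
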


We now proceed to study the recursive properties of $\{ \wb^t \}$ and $\{ \vb^t \}$. 
Define
\begin{align*}
&\cW := \{\wb: \|\wb\|_2= 1,~ \wb^{*\top} \wb \geq \wb^{*\top}\wb^0/2 \},\\
&\cV := \{\vb:  \|\vb - \vb^*\|_2 \leq D,~ |\kappa \mathbf{1}^\top (\vb - \vb^*)| \leq M, \vb^{*\top} \vb \geq \rho ,~\kappa^2 (\mathbf{1}^\top \vb^*)\mathbf{1}^\top(\vb - \vb^*) \leq \rho \}.
\end{align*}
Then clearly $\cW\times \cV \subseteq \cW_0\times \cV_0$, and therefore the results of Lemma~\ref{lemma:gradientuniformconcentration} hold for $(\wb,\vb)\in \cW\times \cV $.  
\begin{lemma}\label{lemma:recursiveproperty}
Suppose that \eqref{eq:gradientuniformconcentration_w} and \eqref{eq:gradientuniformconcentration_v} hold. Under the assumptions of Theorem~\ref{thm:mainthm_convergence}, 
in Algorithm~\ref{alg:approximategradient}, if $(\wb^t,\vb^t)\in \cW \times \cV $, then 
\begin{align}
    &\| \wb^{t+1} - \wb^* \|_2 - 8\rho^{-1}(1+\alpha\rho)\eta_w \leq \frac{1}{\sqrt{1+\alpha \rho }} [\| \wb^t -\wb^* \|_2 - 8\rho^{-1}(1+\alpha\rho)\eta_w], \label{eq:recursiveproperty1} \\
    &|\mathbf{1}^\top (\vb^{t+1} - \vb^*)| \leq [1 - \alpha (\Delta + \kappa^2k)]|\mathbf{1}^\top (\vb^{t} - \vb^*)| + \alpha L\| \wb^t - \wb^* \|_2 |\mathbf{1}^\top\vb^*|
    + \alpha \sqrt{k} \eta_v, \label{eq:recursiveproperty2}\\
    &\| \vb^{t+1} -\vb^* \|_2^2 \leq ( 1 - \alpha \Delta + 4\alpha^2\Delta^2  ) \| \vb^{t} -\vb^* \|_2^2 + \bigg(\frac{\alpha L^2}{\Delta} + 4\alpha^2 L^2 \bigg)  \|\vb^*\|_2^2 \|\wb^t - \wb^{*} \|_2^2 \nonumber \\
    &\quad\quad\quad\quad\quad\quad\quad + 4\alpha^2 \kappa^4 k[\mathbf{1}^\top (\vb^{t} - \vb^*)]^2 + \bigg( \frac{2\alpha}{\Delta} + 4 \alpha^2 \bigg) \eta_v^2 ,\label{eq:recursiveproperty3}\\
    &(\wb^{t+1},\vb^{t+1}) \in \cW\times \cV.\label{eq:recursiveproperty4}
\end{align}
\end{lemma}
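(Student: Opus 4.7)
The plan is to decompose each iterate's gradient into the population part and a concentration residual, i.e.\ $\gb_w^t = \overline{\gb}_w(\wb^t,\vb^t) + (\gb_w^t - \overline{\gb}_w)$ and similarly for $\gb_v^t$, where the residuals are controlled uniformly by $\eta_w,\eta_v$ via Lemma~\ref{lemma:gradientuniformconcentration}. Substituting the explicit formulas $\overline{\gb}_w = \|\vb\|_2^2 \wb - (\vb^{*\top}\vb)\wb^*$ and $\overline{\gb}_v = (\Delta\Ib + \kappa^2\mathbf{1}\mathbf{1}^\top)(\vb - \vb^*) - [\Delta - \phi(\wb,\wb^*)]\vb^*$ turns each update into a contraction plus a noise term, and the bounds $\wb^{*\top}\wb^t \geq \wb^{*\top}\wb^0/2$, $\vb^{*\top}\vb^t \geq \rho$, $\|\vb^t\|_2 \leq D_0$, $|\kappa\mathbf{1}^\top(\vb^t-\vb^*)|\leq M$ guaranteed by $(\wb^t,\vb^t)\in\cW\times\cV$ supply the positive contraction factors. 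A recurring subroutine is the Lipschitz-type estimate $|\Delta-\phi(\wb^t,\wb^*)|\leq\sqrt{\Delta}\,\|\wb^t-\wb^*\|_2$, obtained from 1-Lipschitzness of $\sigma$ via Cauchy--Schwarz in Gaussian expectation.

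\textbf{Proof of \eqref{eq:recursiveproperty1}.} Because $\wb^{t+1}=\ub^{t+1}/\|\ub^{t+1}\|_2$ and $\|\wb^*\|_2=1$, I use the identity $\|\wb^{t+1}-\wb^*\|_2^2 = 2\bigl(1 - \wb^{*\top}\ub^{t+1}/\|\ub^{t+1}\|_2\bigr)$. Writing $\ub^{t+1} = (1-\alpha\|\vb^t\|_2^2)\wb^t + \alpha(\vb^{*\top}\vb^t)\wb^* - \alpha(\gb_w^t - \overline{\gb}_w)$, I compute $\wb^{*\top}\ub^{t+1}$ and $\|\ub^{t+1}\|_2^2$ explicitly; the strictly positive coefficient $\alpha(\vb^{*\top}\vb^t)\geq\alpha\rho$ tilts $\ub^{t+1}$ toward $\wb^*$ and, together with the algebraic identity $\|\ub^{t+1}\|_2^2 = (\wb^{*\top}\ub^{t+1})^2 + (1-\alpha\|\vb^t\|_2^2)^2(1-(\wb^{*\top}\wb^t)^2)$ (plus $O(\alpha\eta_w)$ noise), yields $1-\wb^{*\top}\wb^{t+1}\leq (1+\alpha\rho)^{-1}\bigl(1-\wb^{*\top}\wb^t\bigr) + O(\alpha\eta_w)$. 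Converting back to $\|\wb^{t+1}-\wb^*\|_2$ via the same squared-norm identity and taking square roots gives a recursion of the form $a_{t+1}\leq\gamma_1 a_t + c\eta_w$; the shift $8\rho^{-1}(1+\alpha\rho)\eta_w$ in \eqref{eq:recursiveproperty1} is exactly the fixed point of this recursion.

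\textbf{Proof of \eqref{eq:recursiveproperty2} and \eqref{eq:recursiveproperty3}.} I start from the linear decomposition $\vb^{t+1}-\vb^* = [\Ib - \alpha\Delta\Ib - \alpha\kappa^2\mathbf{1}\mathbf{1}^\top](\vb^t-\vb^*) + \alpha[\phi(\wb^t,\wb^*)-\Delta]\vb^* - \alpha(\gb_v^t-\overline{\gb}_v)$. For \eqref{eq:recursiveproperty2}, I apply $\mathbf{1}^\top$: since $\mathbf{1}\mathbf{1}^\top\mathbf{1}=k\mathbf{1}$, the contraction factor becomes $1-\alpha(\Delta+\kappa^2k)$; the cross term is bounded by $\alpha L\|\wb^t-\wb^*\|_2|\mathbf{1}^\top\vb^*|$ using the Lipschitz estimate on $\phi$, and the noise by $|\mathbf{1}^\top(\gb_v^t-\overline{\gb}_v)|\leq\sqrt{k}\,\eta_v$. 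For \eqref{eq:recursiveproperty3}, I expand $\|\vb^{t+1}-\vb^*\|_2^2 = \|\vb^t-\vb^*\|_2^2 - 2\alpha(\vb^t-\vb^*)^\top\gb_v^t + \alpha^2\|\gb_v^t\|_2^2$, compute $(\vb^t-\vb^*)^\top\overline{\gb}_v = \Delta\|\vb^t-\vb^*\|_2^2 + \kappa^2[\mathbf{1}^\top(\vb^t-\vb^*)]^2 - [\Delta-\phi]\,(\vb^t-\vb^*)^\top\vb^*$, and use an AM--GM step of the form $2[\Delta-\phi]\,|(\vb^t-\vb^*)^\top\vb^*|\leq\Delta\|\vb^t-\vb^*\|_2^2 + \Delta^{-1}[\Delta-\phi]^2\|\vb^*\|_2^2$ to absorb the cross term, which combined with the Lipschitz bound yields the $\alpha L^2/\Delta$ coefficient. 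The quadratic term is handled by $\|\gb_v^t\|_2^2\leq 2\|\overline{\gb}_v\|_2^2+2\eta_v^2$ and an expansion of $\|\overline{\gb}_v\|_2^2$ into the three pieces $\Delta^2\|\vb^t-\vb^*\|_2^2$, $L^2\|\vb^*\|_2^2\|\wb^t-\wb^*\|_2^2$, and $\kappa^4k[\mathbf{1}^\top(\vb^t-\vb^*)]^2$, producing the $4\alpha^2$-weighted terms in \eqref{eq:recursiveproperty3}.

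\textbf{Proof of \eqref{eq:recursiveproperty4} and the main obstacle.} I verify the invariance condition by condition. The constraint $\wb^{*\top}\wb^{t+1}\geq\wb^{*\top}\wb^0/2$ follows from \eqref{eq:recursiveproperty1} together with the sample-complexity hypothesis \eqref{eq:samplecomplexityassumption} forcing $\eta_w\leq\rho\,\wb^{*\top}\wb^0/[8(1+\alpha\rho)]$. The bound $|\kappa\mathbf{1}^\top(\vb^{t+1}-\vb^*)|\leq M$ is obtained by iterating \eqref{eq:recursiveproperty2}: the recursion has $M$ as its stable fixed point by the choice of $M$ in the statement of Theorem~\ref{thm:mainthm_convergence}. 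Similarly, $\|\vb^{t+1}-\vb^*\|_2\leq D$ follows from \eqref{eq:recursiveproperty3} because $D$ is precisely chosen to satisfy $D^2\geq\bigl(1-\alpha\Delta+4\alpha^2\Delta^2\bigr)D^2 + (\text{remaining terms})$, which rearranges to the formula in the definition of $D$; the step-size bound $\alpha\leq 1/(8\Delta)$ makes this quadratic well-posed. The lower bound $\vb^{*\top}\vb^{t+1}\geq\rho$ is obtained by projecting the $\vb$-update onto $\vb^*$, using $\vb^{*\top}\vb^t\geq\rho$ and controlling the signed cross term via $|\Delta-\phi|\leq\sqrt{\Delta}\|\wb^t-\wb^*\|_2$ plus $\eta_v$. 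Finally, $\kappa^2(\mathbf{1}^\top\vb^*)\mathbf{1}^\top(\vb^{t+1}-\vb^*)\leq\rho$ follows from \eqref{eq:recursiveproperty2} and the initialization condition \eqref{eq:initializationcondition}. The main obstacle is the nonlinear normalization step in the $\wb$-update: extracting the exact rate $(1+\alpha\rho)^{-1/2}$ requires the delicate decomposition of $\|\ub^{t+1}\|_2^2$ described above, and the constants $M$, $D$, $\rho$ must be verified to be simultaneously consistent with all four invariants under the coupled recursions between $\wb^t$ and $\vb^t$.
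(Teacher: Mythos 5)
Your overall strategy coincides with the paper's: write each update as the population term $\overline{\gb}_w,\overline{\gb}_v$ plus a residual controlled uniformly by $\eta_w,\eta_v$, and read off a contraction from the explicit formulas. Your treatment of \eqref{eq:recursiveproperty1}--\eqref{eq:recursiveproperty3} is essentially the paper's: the algebraic decomposition $\|\overline{\ub}^{t+1}\|_2^2=(\wb^{*\top}\overline{\ub}^{t+1})^2+(1-\alpha\|\vb^t\|_2^2)^2\big(1-(\wb^{*\top}\wb^t)^2\big)$ is a valid substitute for the paper's geometric cone argument (both reduce to $1-\wb^{*\top}\hat{\ub}\leq(1+\alpha\rho)^{-1}(1-\wb^{*\top}\wb^t)$, using $\vb^{*\top}\vb^t\geq\rho$ and $1-\alpha\|\vb^t\|_2^2\geq 1/2$), and your Cauchy--Schwarz bound $|\Delta-\phi(\wb^t,\wb^*)|\leq\sqrt{\Delta}\,\|\wb^t-\wb^*\|_2$ implies the paper's $L\|\wb^t-\wb^*\|_2$ since $\sqrt{\Delta}\leq L$. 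One arithmetic slip: splitting $\|\gb_v^t\|_2^2\leq 2\|\overline{\gb}_v\|_2^2+2\eta_v^2$ and then expanding $\|\overline{\gb}_v\|_2^2$ into three pieces produces coefficients $6\alpha^2$ rather than the stated $4\alpha^2$; apply $(a+b+c+d)^2\leq 4(a^2+b^2+c^2+d^2)$ to the four-term bound on $\|\gb_v^t\|_2$ directly.

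The genuine gap is in your verification of the invariant $\vb^{*\top}\vb^{t+1}\geq\rho$ in \eqref{eq:recursiveproperty4}. You propose to control the cross term by the Lipschitz estimate on $\phi$, but membership in $\cW$ only guarantees $\wb^{*\top}\wb^t\geq\wb^{*\top}\wb^0/2$, so $\|\wb^t-\wb^*\|_2$ may be close to $\sqrt{2}$ and the Lipschitz estimate yields no useful (in particular, no positive) lower bound on $\phi(\wb^t,\wb^*)$. Projecting the update onto $\vb^*$ gives
\begin{align*}
\vb^{*\top}\vb^{t+1}\;\geq\;(1-\alpha\Delta)\,\vb^{*\top}\vb^t+\alpha\,\phi(\wb^t,\wb^*)\|\vb^*\|_2^2-\alpha\rho-\alpha\|\vb^*\|_2\eta_v,
\end{align*}
where the $-\alpha\rho$ comes from the signed invariant $\kappa^2(\mathbf{1}^\top\vb^*)\mathbf{1}^\top(\vb^t-\vb^*)\leq\rho$; if $\phi$ could vanish, the worst case $\vb^{*\top}\vb^t=\rho$ would only give $(1-\alpha\Delta-2\alpha)\rho<\rho$ and the invariant would fail. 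The missing ingredient is Lemma~\ref{lemma:psidefinition}: $\phi(\wb^t,\wb^*)=\psi(\wb^{*\top}\wb^t)\geq\psi(\wb^{*\top}\wb^0/2)>0$ by the monotonicity and strict positivity of $\psi$, and $\rho$ is defined precisely so that $\psi(\wb^{*\top}\wb^0/2)\|\vb^*\|_2^2\geq(2+\Delta)\rho$, which makes the positive drift $\alpha(2+\Delta)\rho$ dominate the two $\alpha\rho$ losses. This is the one point in the lemma where the non-triviality and monotonicity of $\sigma$ are indispensable, and it cannot be replaced by a Lipschitz bound. A smaller imprecision of the same flavor: the last invariant $\kappa^2(\mathbf{1}^\top\vb^*)\mathbf{1}^\top(\vb^{t+1}-\vb^*)\leq\rho$ does not follow from the absolute-value recursion \eqref{eq:recursiveproperty2} (which would only bound it by a quantity of order $M|\kappa||\mathbf{1}^\top\vb^*|$); it requires the signed identity for $\mathbf{1}^\top\overline{\gb}_v^t$ together with $\phi(\wb^t,\wb^*)\leq\Delta$.
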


It is not difficult to see that the results of Lemma~\ref{lemma:recursiveproperty} imply convergence of $\wb^t$ and $\vb^t$ up to statistical error. To obtain the final convergence result of Theorem~\ref{thm:mainthm_convergence}, it suffices to rewrite the recursive bounds into explicit bounds, which is mainly tedious calculation. We therefore summarize the result as the following lemma, and defer the detailed calculation to appendix. 

\begin{lemma}\label{lemma:recursive_to_explicit}
Suppose that \eqref{eq:recursiveproperty1}, \eqref{eq:recursiveproperty2} and \eqref{eq:recursiveproperty3} hold for all $t=0,\ldots,T$. Then 
\begin{align*}
     &\| \wb^{t} - \wb^* \|_2 \leq \gamma_1^{t} \| \wb^0 -\wb^* \|_2  + 8\rho^{-1}\gamma_1^{-2}\eta_w, \\
     &\| \vb^{t} -\vb^* \|_2 \leq R_1 t^3 (\gamma_1 \lor \gamma_2 \lor \gamma_3)^{t} + (R_2 + R_3 |\kappa|\sqrt{k}) (\eta_w + \eta_v)
\end{align*}
for all $t=0,\ldots,T$, where $\gamma_1 = (1+\alpha \rho)^{-1/2}$, $\gamma_2 = \sqrt{1 - \alpha \Delta + 4\alpha^2\Delta^2}$, and $\gamma_3 = 1 - \alpha(\Delta + \kappa^2k)$, and $R_1$, $R_2$, $R_3$ are constants that only depend on the choice of activation function $\sigma(\cdot)$, the ground-truth parameters $(\wb^*,\vb^*)$ and the initialization $(\wb^0,\vb^0)$.
\end{lemma}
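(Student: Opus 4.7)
The strategy is to unroll the three recursive inequalities \eqref{eq:recursiveproperty1}, \eqref{eq:recursiveproperty2}, \eqref{eq:recursiveproperty3} in that order, feeding the bound obtained at each step into the next. All three recursions have the canonical form $a_{t+1} \leq \gamma a_t + c_t$, whose unrolled solution is $a_t \leq \gamma^t a_0 + \sum_{s=0}^{t-1}\gamma^{t-1-s}c_s$. The single bookkeeping identity I will use repeatedly is that for $\mu,\gamma \in (0,1)$ and integer $j\geq 0$,
\begin{align*}
    \sum_{s=0}^{t-1}\gamma^{t-1-s}\, s^j\, \mu^s \;\leq\; t^{j+1}\,(\gamma \lor \mu)^{t-1},
\end{align*}
since each summand is bounded by $t^j(\gamma\lor\mu)^{t-1}$. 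Thus each nested substitution bumps the polynomial prefactor in $t$ by one.

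For \eqref{eq:recursiveproperty1}, set $a_t := \|\wb^t - \wb^*\|_2 - 8\rho^{-1}(1+\alpha\rho)\eta_w$ so that $a_{t+1}\leq \gamma_1 a_t$, hence $a_t \leq \gamma_1^t a_0$. Rearranging and using $\gamma_1^{-2} = 1+\alpha\rho$ yields the claimed bound on $\|\wb^t-\wb^*\|_2$. Next, substitute this bound into \eqref{eq:recursiveproperty2} to obtain a recursion of the form $|\mathbf{1}^\top(\vb^{t+1}-\vb^*)|\leq \gamma_3|\mathbf{1}^\top(\vb^t-\vb^*)| + A\gamma_1^t + B(\eta_w+\eta_v)$ for explicit constants $A,B$ depending on $\wb^0,\vb^*,L,\alpha,\rho,k$. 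Unrolling and applying the bookkeeping identity bounds $|\mathbf{1}^\top(\vb^t-\vb^*)|$ by $A\, t(\gamma_1\lor\gamma_3)^{t-1} + B(\eta_w+\eta_v)/(1-\gamma_3)$.

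For the third step, I square the two bounds just obtained and plug them into \eqref{eq:recursiveproperty3}; using $(x+y)^2\leq 2x^2+2y^2$ to separate deterministic and noise parts converts the recursion into $b_{t+1}^2 \leq \gamma_2^2 b_t^2 + P(t)(\gamma_1\lor\gamma_3)^{2t} + Q^2(\eta_w^2 + \eta_v^2)$, where $b_t := \|\vb^t-\vb^*\|_2$, $P(t)$ is a polynomial in $t$ of degree at most $2$ (from squaring the $t(\gamma_1\lor\gamma_3)^{t-1}$ bound), and $Q$ collects all noise coefficients, including the $|\kappa|\sqrt{k}$ factor coming from the $[\mathbf{1}^\top(\vb^t-\vb^*)]^2$ term. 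Unrolling and applying the bookkeeping identity to each monomial in $P(t)$ bounds the deterministic part by $R_1^2 t^3(\gamma_1\lor\gamma_2\lor\gamma_3)^{2t}$, while the noise sum contributes at most $Q^2(\eta_w^2+\eta_v^2)/(1-\gamma_2^2)$. Taking square roots and using $\sqrt{x+y}\leq\sqrt{x}+\sqrt{y}$ for nonnegative $x,y$ produces the stated bound, and tracing the explicit constants identifies $R_1,R_2,R_3$.

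The main obstacle is not analytic but bookkeeping: tracking how the various $\alpha$-, $\Delta$-, $\kappa$-, $L$-, $\rho$-dependent prefactors accumulate through the three successive substitutions and verifying that each denominator of the form $1-\gamma_i$ remains bounded away from $0$ by a constant depending only on $\sigma$, $(\wb^*,\vb^*)$, and $(\wb^0,\vb^0)$ under the step-size restrictions of Theorem~\ref{thm:mainthm_convergence}. The splitting of the final noise term into $\eta_w+\eta_v$ rather than $\sqrt{\eta_w^2+\eta_v^2}$ is justified by $\sqrt{a^2+b^2}\leq a+b$ for nonnegative $a,b$.
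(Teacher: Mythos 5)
Your proposal is correct and follows essentially the same route as the paper: the paper isolates your ``bookkeeping identity'' as a standalone auxiliary result (Lemma~\ref{lemma:recursivebound1}, with the slightly sharper prefactor $t(t-1)(2t-1)/6$ in the degree-two case) and then performs exactly the same three nested substitutions — unroll \eqref{eq:recursiveproperty1}, feed into \eqref{eq:recursiveproperty2}, feed both into the squared recursion \eqref{eq:recursiveproperty3}, and take square roots. The only caveats are bookkeeping ones you already anticipate: the $1-\gamma_3 = \alpha(\Delta+\kappa^2k)$ denominators and the $\alpha\kappa^2\sqrt{k}$ prefactors must be checked to be $k$-uniformly bounded via the step-size restriction $\alpha \leq 1/(2(\Delta+\kappa^2k))$, which is how the paper keeps $R_1,R_2,R_3$ independent of $k$.
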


We are now ready to present the final proof of Theorem~\ref{thm:mainthm_convergence}, which is a  straightforward combination of the results of Lemma~\ref{lemma:recursiveproperty} and Lemma~\ref{lemma:recursive_to_explicit}.

\begin{proof}[Proof of Theorem~\ref{thm:mainthm_convergence}]
By Lemma~\ref{lemma:recursiveproperty}, as long as $(\wb^0,\vb^0)\in \cW \times \cV $, \eqref{eq:recursiveproperty1}, \eqref{eq:recursiveproperty2} and \eqref{eq:recursiveproperty3} hold for all $t=0,\ldots,T$. Therefore by Lemma~\ref{lemma:recursive_to_explicit}, we have
\begin{align*}
     &\| \wb^{t} - \wb^* \|_2 \leq \gamma_1^{t} \| \wb^0 -\wb^* \|_2  + 8\rho^{-1}\gamma_1^{-2}\eta_w, \\
     &\| \vb^{t} -\vb^* \|_2 \leq R_1 t^3 (\gamma_1 \lor \gamma_2 \lor \gamma_3)^{t} + (R_2 + R_3 |\kappa|\sqrt{k}) (\eta_w + \eta_v)
\end{align*}
for all $t=0,\ldots,T$. This completes the proof of Theorem~\ref{thm:mainthm_convergence}.
\end{proof}

\section{Experiments}\label{sec:experiments}
We perform numerical experiments to backup our theoretical analysis. We test Algorithm~\ref{alg:approximategradient} together with the initialization method given in Theorem~\ref{thm:initialization} for ReLU, sigmoid and hyperbolic tangent networks, and compare its performance with the Double Convotron algorithm proposed by \citet{du2018improved}. To give a reasonable comparison, we use a batch version of Double Convotron without the additional noises on unit sphere, which gives the best performance for Double Convotron, and makes it directly comparable with our algorithm. 
The detailed parameter choices are given as follows:
\begin{itemize}[leftmargin = *]
    \item For all experiments, we set the number of iterations $T = 100$, sample size $n = 1000$. 
    \item We tune the step size $\alpha$ to maximize performance. Specifically, we set $\alpha = 0.04$ for ReLU, $\alpha = 0.25$ for sigmoid, and $\alpha = 0.1$ for hyperbolic tangent networks. Note that for sigmoid and hyperbolic tangent networks, an inappropriate step size can easily lead to blown up errors for Double Convotron.
    \item We uniformly generate $\wb^*$ from unit sphere, and generate $\vb^*$ as a standard Gaussian vector.
    \item We consider two settings: (i) $k = 15$, $r = 5$, $\tilde\nu=0.08$, (ii) $k = 30$, $r=9$, $\tilde\nu=0.04$, where $\tilde\nu$ is the standard deviation of white Gaussian noises.
\end{itemize}

\begin{figure*}[h]
	\begin{center}
		\begin{tabular}{ccc}
			\includegraphics[width=0.30\linewidth,angle=0]{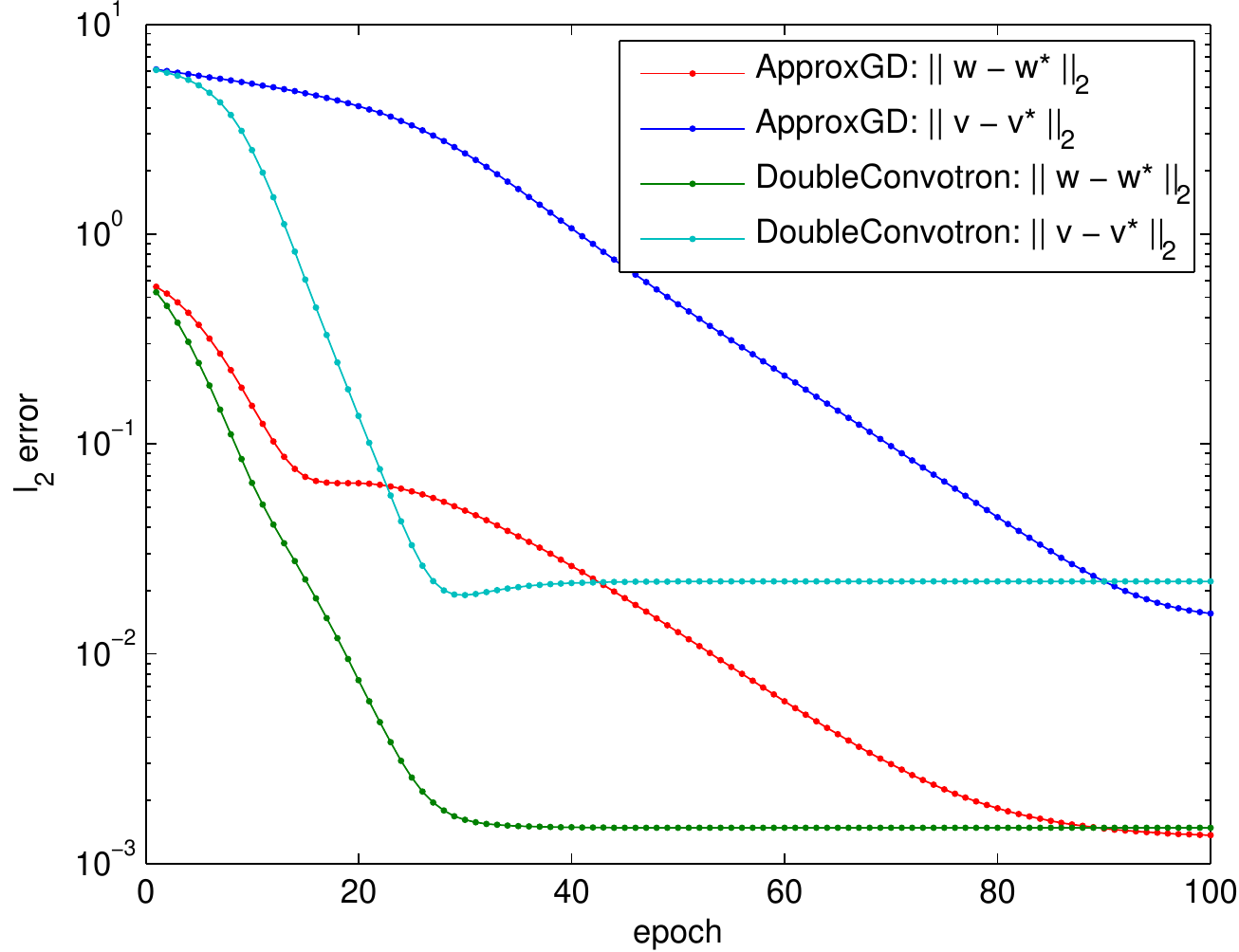}
			& 
			\includegraphics[width=0.30\linewidth,angle=0]{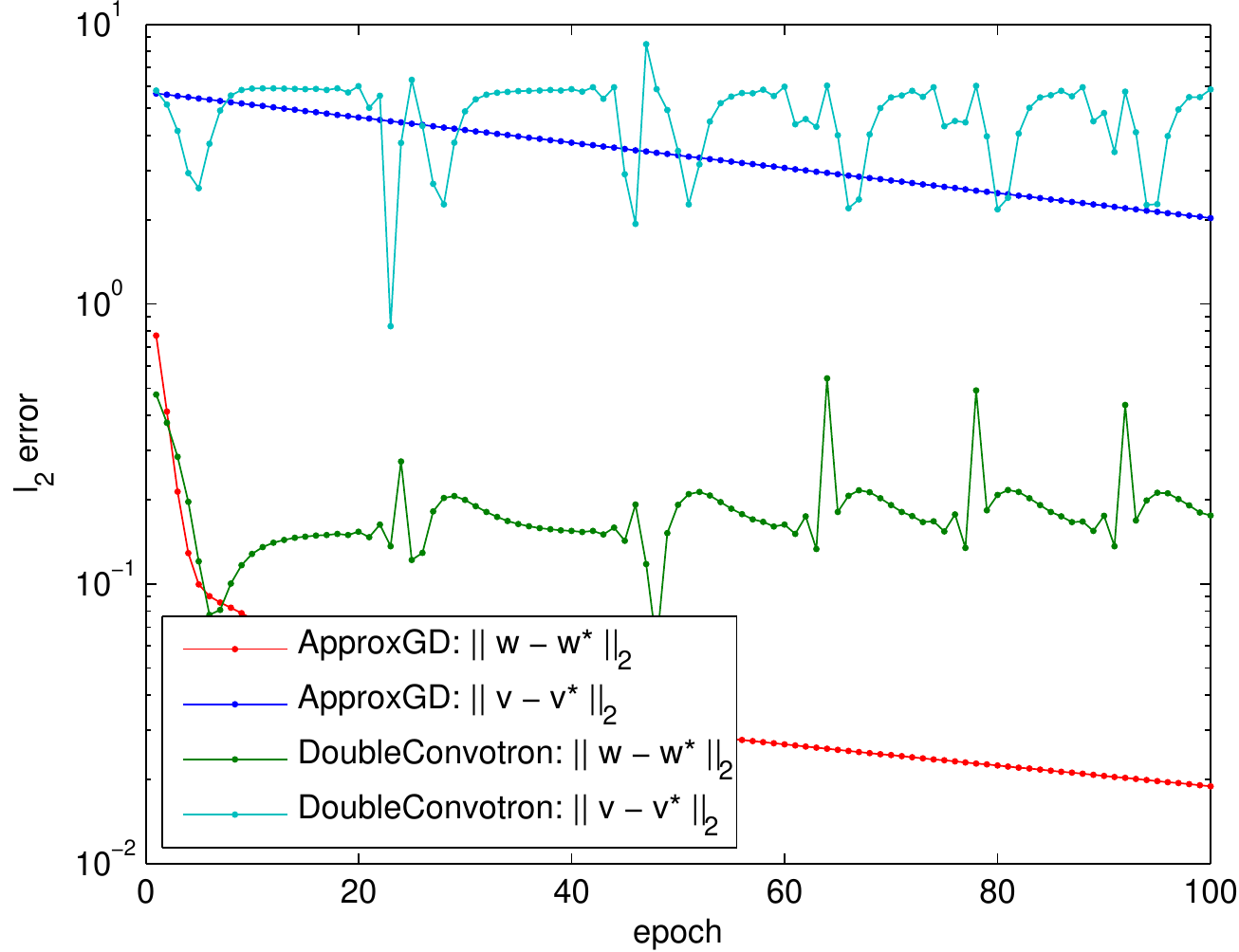}
			&
			\includegraphics[width=0.30\linewidth,angle=0]{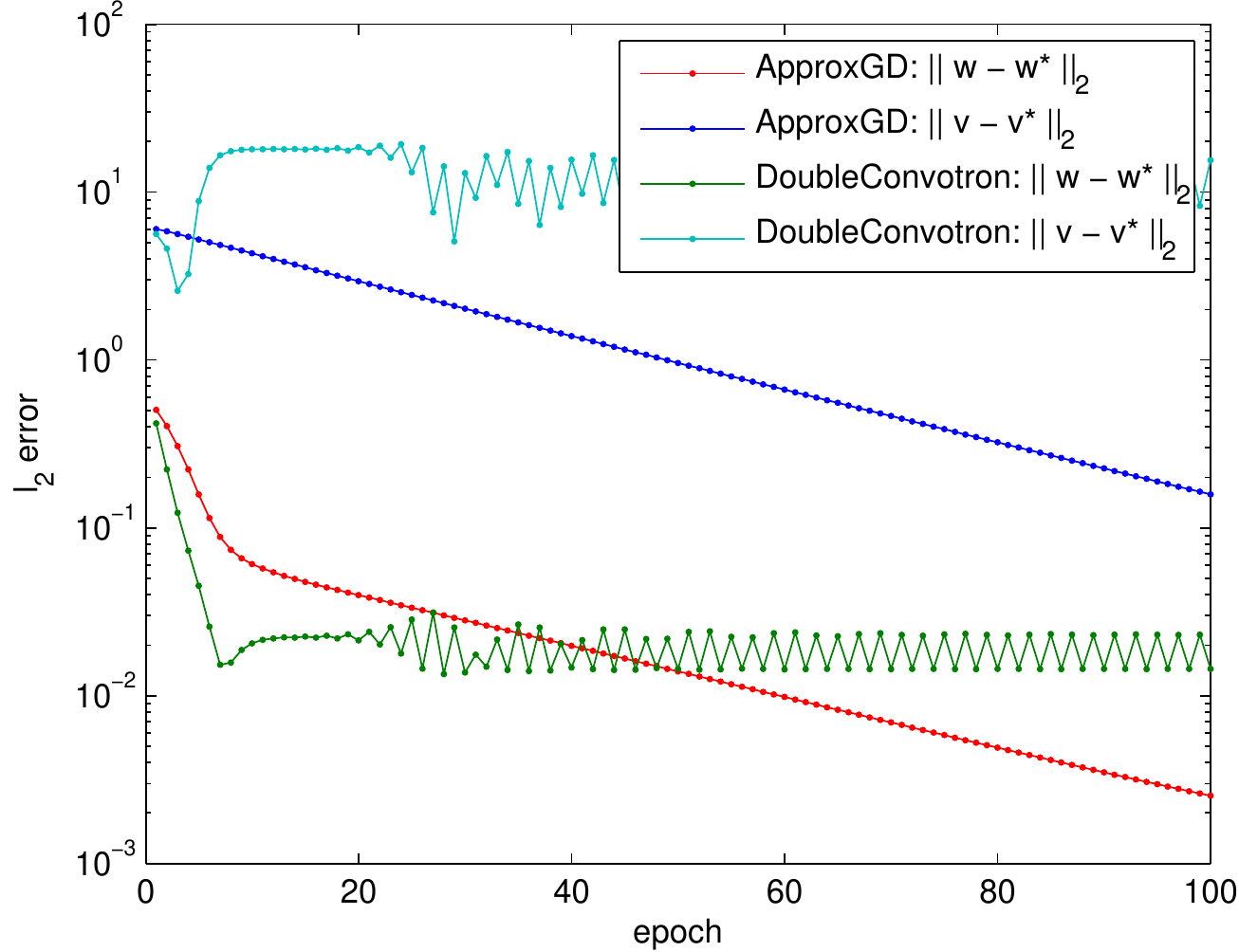}
			\\
			(a) ReLU & (b) Sigmod & (c) Hyperbolic tangent 
			\\
			\includegraphics[width=0.30\linewidth,angle=0]{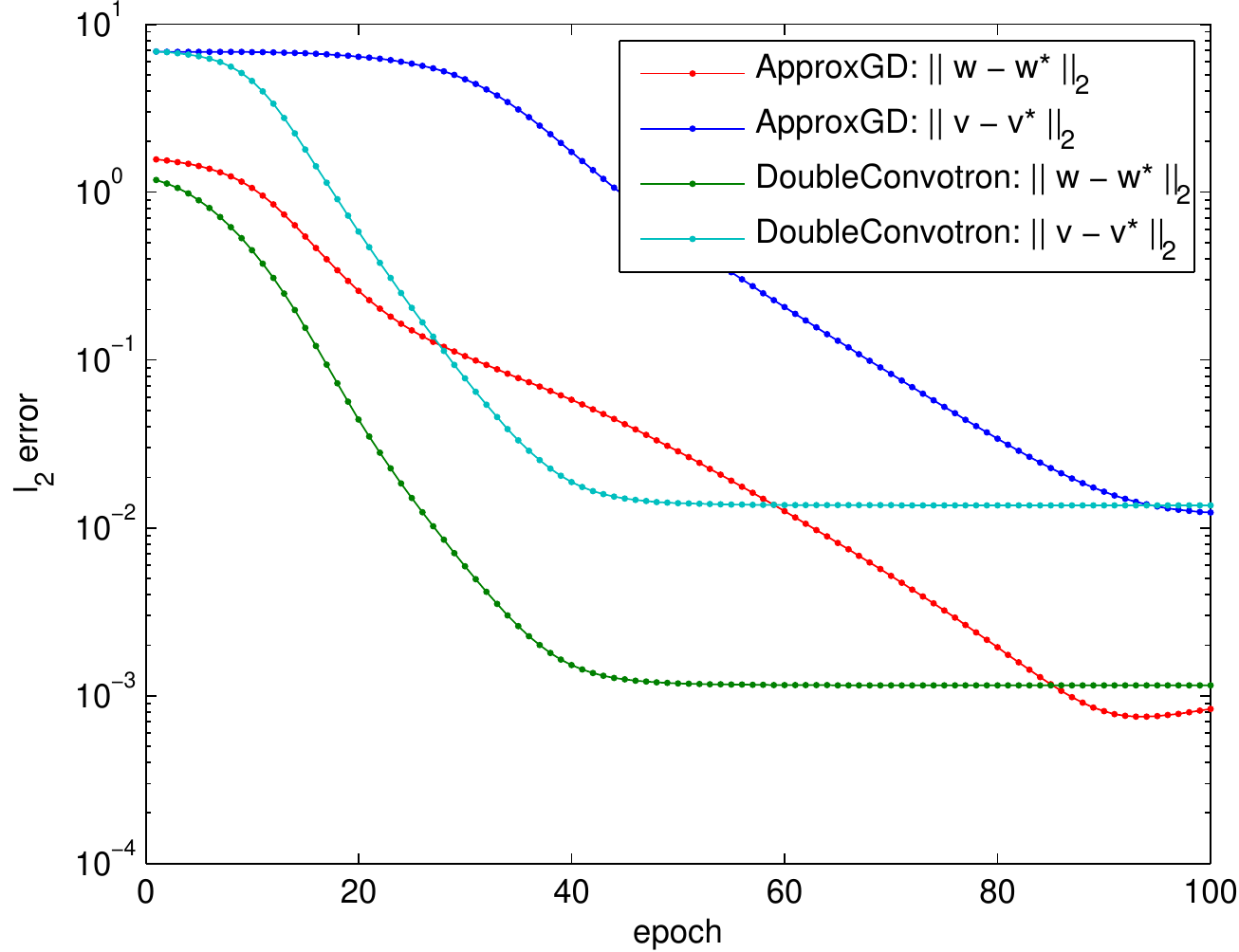}
			&
			\includegraphics[width=0.30\linewidth,angle=0]{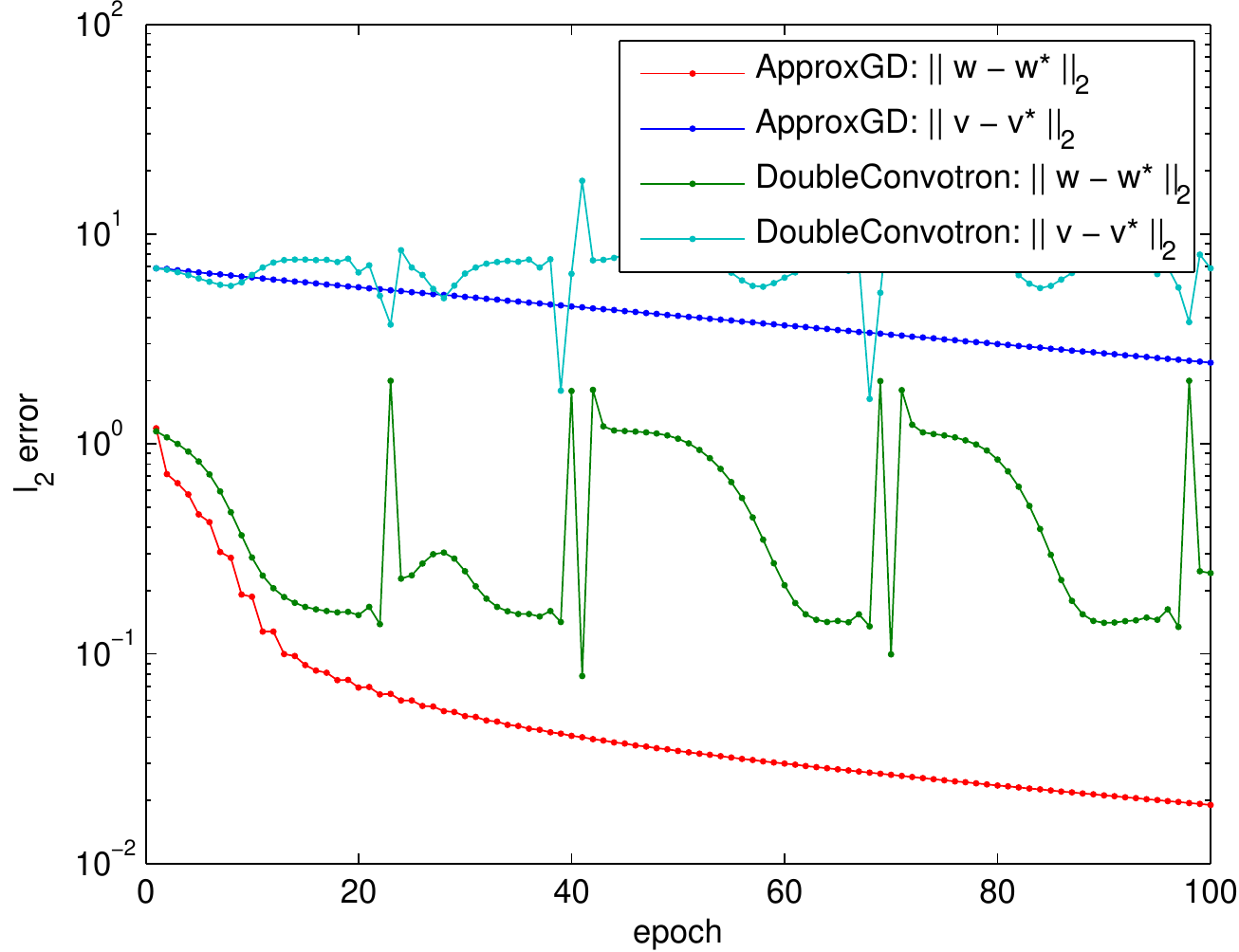}
			& 
			\includegraphics[width=0.30\linewidth,angle=0]{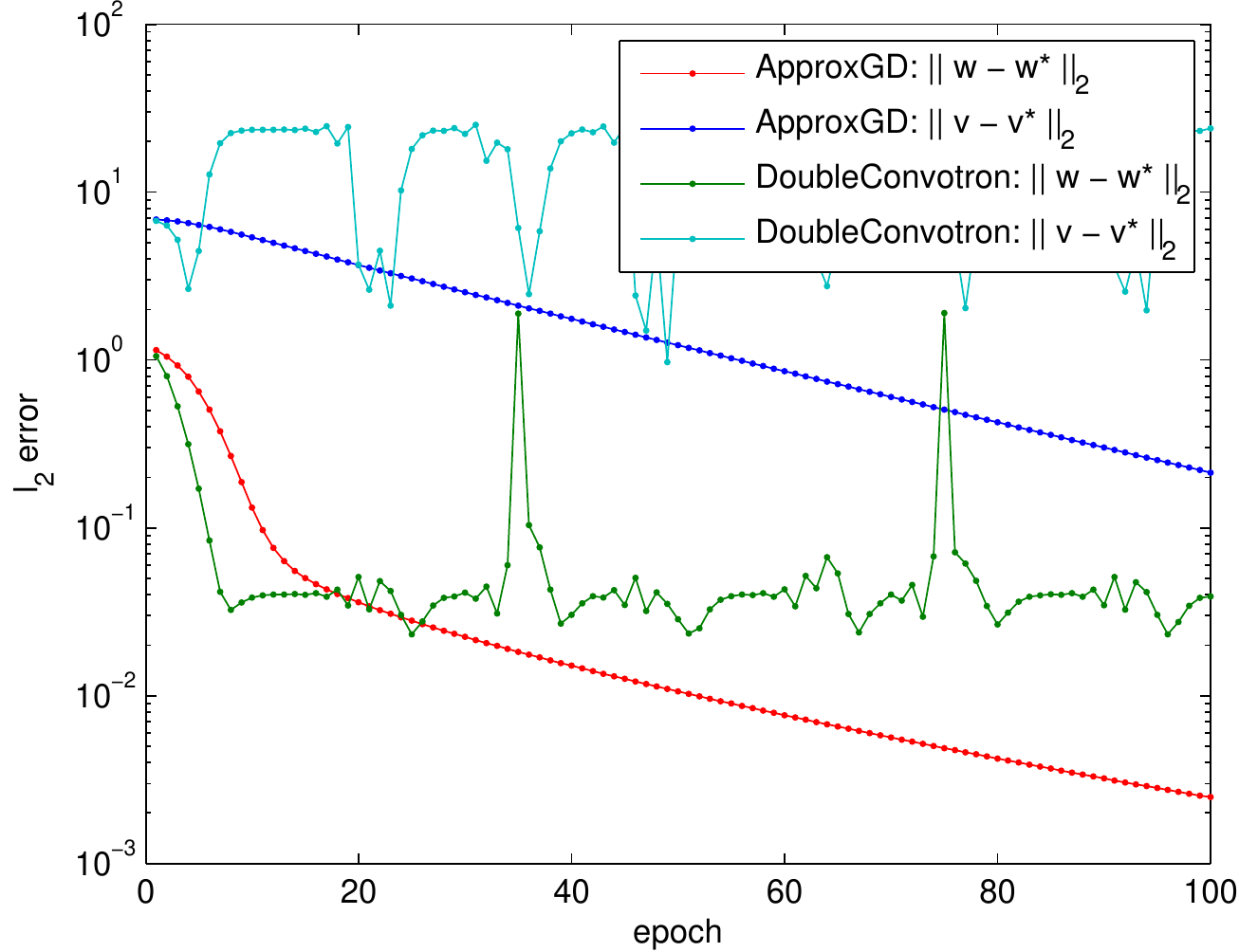}
			\\	
		    (d) ReLU & (e) Sigmod & (f) Hyperbolic tangent 
		    \end{tabular}
	\end{center}
	\vskip-10pt
	\caption{Numerical simulation for Algorithm~\ref{alg:approximategradient} and the Double Convotron algorithm proposed by \citet{du2018improved} with different activation functions, number of hidden nodes and filter sizes. The results for the case $k = 15$, $r = 5$ and $k = 30$, $r=9$ are shown in (a)-(c) and (d)-(f) respectively. (a), (d) show the results for ReLU networks; (b), (e) give the results for sigmoid networks; and finally the results for hyperbolic tangent activation function are in (c) and (f).
    All plots are semi-log plots. 
    }
	\label{fig:convergence}
\end{figure*}

The random initialization is performed as follows: we generate $\wb$ uniformly over the unit sphere. We then generate a standard Gaussian vector $\vb$. If $\| \vb \|_2 \geq  k^{-1/2}|\mathbf{1}^\top \vb^*| / 2$, then $\vb$ is projected onto the ball $\cB(0, k^{-1/2}|\mathbf{1}^\top \vb^*| / 2)$. We then run the approximate gradient descent algorithm and Double Convotron algorithm starting with each of $(\wb,\vb),(-\wb,\vb),(\wb,-\vb), (-\wb,-\vb)$, and present the results corresponding to the starting point that gives the smallest $\| \wb^T - \wb^* \|_2$.

Figure~\ref{fig:convergence} gives the experimental results in semi-log plots. We summarize the results as follows.
\begin{enumerate}[leftmargin = *]
    \item For all the six cases, the approximate gradient descent algorithm eventually reaches a stable state of linear convergence, until reaching very small error.
    \item For ReLU networks, both algorithms converges. The convergence of approximated gradient descent algorithm is slower compared with Double Convotron, but it eventually reaches smaller statistical error, indicating a better sample complexity.
    \item For sigmoid and hyperbolic tangent networks, not surprisingly, Double Convotron does not converge. In contrast, approximated gradient descent still converges in a linear rate.  
\end{enumerate}
The experimental results discussed above clearly demonstrates the validity of our theoretical analysis. In Appendix~\ref{appendix:additional_experiments}, we also present some additional experiments on non-Gaussian inputs, and demonstrate that although this setting is not the focus of our theoretical results, approximate gradient descent still has promising performance on symmetric data distributions.

\section{Conclusions and Future Work}\label{sec:conclusions}

We propose a new algorithm namely approximate gradient descent for training CNNs, and show that, with high probability, the proposed algorithm with random initialization can recover the ground-truth parameters up to statistical precision at a linear convergence rate . 
Compared with previous results, our result applies to a class of monotonic and Lipschitz continuous activation functions including ReLU, Leaky ReLU, Sigmod and Softplus etc. Moreover, our algorithm achieves better sample complexity in the dependency of the number of hidden nodes and filter size. In particular, our result matches the information-theoretic lower bound for learning one-hidden-layer CNNs with linear activation functions, suggesting that our sample complexity is tight. Numerical experiments on synthetic data corroborate our theory. Our algorithms and theory can be extended to learn one-hidden-layer CNNs with overlapping filters. We leave it as a future work. It is also of great importance to extend the current result to deeper CNNs with multiple convolution filters.

\section*{Acknowledgement}

We thank the anonymous reviewers and area chair for their helpful comments. This research was sponsored in part by the National Science Foundation CAREER Award IIS-1906169, IIS-1903202, and Salesforce Deep Learning Research Award. The views and conclusions contained in this paper are those of the authors and should not be interpreted as representing any funding agencies. 

\appendix
\section{Proofs of Lemmas in Section~\ref{section:algorithm}}\label{section:appendix_proof_1}

\subsection{Proof of Lemma~\ref{lemma:psidefinition}}
We first present the following lemma. The proof is given in Appendix~\ref{section:appendix_proof_3}.
\begin{lemma}\label{lemma:monotonecovineq}
Let $f(z)$ and $g(z)$ be two non-trivial increasing functions. Let $Z_1$ and $Z_2$ be zero-mean jointly Gaussian random variables. If $\Var(Z_1) = \Var(Z_2) = 1$ and $\theta = \Cov (Z_1,Z_2)  > 0$, then $\Cov[f(Z_1),g(Z_2)]$ is an increasing function of $\theta$, and we have $\Cov[f(z_1),g(z_2)] > 0$.
\end{lemma}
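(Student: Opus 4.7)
The natural approach is to combine Price's theorem for bivariate Gaussians with a density/monotonicity argument. Writing $\EE[f(Z_1)g(Z_2)] = \iint f(z_1)g(z_2)\,p_\theta(z_1,z_2)\,dz_1\,dz_2$, where $p_\theta$ is the bivariate Gaussian density with unit variances and correlation $\theta$, a direct computation (or a one-line check on the characteristic side) shows that $p_\theta$ satisfies the heat-type identity $\partial_\theta p_\theta = \partial^2_{z_1 z_2} p_\theta$. Starting from $C^1$, bounded, monotone increasing $f$ and $g$, differentiating under the integral and integrating by parts twice (the boundary terms vanish thanks to the Gaussian tails together with the boundedness of $f,g$) yields the Price identity
\begin{equation*}
\frac{d}{d\theta}\EE[f(Z_1)g(Z_2)] = \EE[f'(Z_1)g'(Z_2)].
\end{equation*}
Since $\EE[f(Z_1)]$ and $\EE[g(Z_2)]$ do not depend on $\theta$, the same identity holds for $\Cov[f(Z_1),g(Z_2)]$, and monotonicity in $\theta$ follows immediately from $f',g' \geq 0$.

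For strict positivity, I first observe that $\Cov[f(Z_1),g(Z_2)] = 0$ at $\theta = 0$ by independence of $Z_1$ and $Z_2$, so it suffices to upgrade the monotonicity above to strict monotonicity on $(-1,1)$. Non-triviality of $f$ and $g$ forces $f'$ and $g'$ to be strictly positive on sets of positive Lebesgue measure, and for $|\theta| < 1$ the density $p_\theta$ is strictly positive everywhere on $\RR^2$, so $\EE[f'(Z_1)g'(Z_2)] > 0$. This yields $\Cov[f(Z_1),g(Z_2)] > 0$ for every $\theta \in (0,1)$. The edge case $\theta = 1$ is covered separately: there $Z_1 = Z_2$ almost surely, and the classical Chebyshev correlation inequality for monotone functions of a single random variable gives $\Cov[f(Z_1),g(Z_1)] > 0$ whenever both $f$ and $g$ are non-constant.

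Finally, to remove the $C^1$ and boundedness assumptions I would approximate a general non-trivial increasing $f$ by a sequence $f_\varepsilon$ of $C^1$ bounded increasing functions obtained by convolving with a smooth bump and then truncating at a level tending to infinity as $\varepsilon \to 0$. Both monotonicity and strict positivity transfer to the limit by dominated convergence, which is justified because the mere existence of the covariance forces $f$ and $g$ to be square-integrable against the standard Gaussian measure. The main technical obstacle I anticipate is preserving strict positivity under this approximation: the non-strict inequality passes to the limit for free, but for strictness I would fix a reference $\theta_0 \in (0,\theta)$, verify that $f_\varepsilon'$ and $g_\varepsilon'$ retain uniformly non-trivial mass on a common compact set for small $\varepsilon$, and read off a uniform lower bound on $\EE[f_\varepsilon'(Z_1)g_\varepsilon'(Z_2)]$ at $\theta_0$ that survives the limit and combines with the already established monotonicity in $\theta$ to conclude.
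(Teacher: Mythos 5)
Your proof is correct in outline but takes a genuinely different route from the paper. The paper first proves a generalized Hoeffding covariance identity, $\Cov[f(Z_1),g(Z_2)] = \int_{\RR^2}\Cov[\ind(Z_1\le z_1),\ind(Z_2\le z_2)]\,\mathrm{d}f(z_1)\,\mathrm{d}g(z_2)$, and then invokes Slepian's comparison inequality to show that each integrand $\PP(Z_1\le z_1, Z_2\le z_2)-\PP(Z_1\le z_1)\PP(Z_2\le z_2)$ is increasing in $\theta$, with strict positivity read off from $\frac{d}{d\theta}\PP(U_1\le z_1,\theta U_1+\sqrt{1-\theta^2}U_2\le z_2)\big|_{\theta=0}>0$. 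You instead work at the level of the density via the Plackett identity $\partial_\theta p_\theta=\partial^2_{z_1z_2}p_\theta$ and Price's theorem $\frac{d}{d\theta}\EE[f(Z_1)g(Z_2)]=\EE[f'(Z_1)g'(Z_2)]$. The trade-off is clear: the Hoeffding--Slepian route handles arbitrary (right-continuous) monotone functions directly, with no smoothness whatsoever, because the monotonicity in $\theta$ is established once and for all on indicator functions; your route is more self-contained and exhibits the derivative explicitly, but it forces the mollification-and-truncation layer, since a general non-trivial increasing function (e.g.\ ReLU, or a discontinuous one) is not $C^1$. You correctly flag that the delicate point is carrying \emph{strict} positivity through the limit, and your fix is sound: for $f$ increasing with $f(a)<f(b)$, the mollification satisfies $\int_{a-\varepsilon}^{b+\varepsilon}f_\varepsilon'\ge f(b)-f(a)$ uniformly in $\varepsilon$, so $\EE_s[f_\varepsilon'(Z_1)g_\varepsilon'(Z_2)]$ is bounded below by $\inf_{K_1\times K_2}p_s$ times two $\varepsilon$-independent positive masses, and integrating this bound over a fixed subinterval of $(0,\theta)$ gives a positive lower bound that survives $\varepsilon\to 0$. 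With that step written out (and the separate Chebyshev argument for $\theta=1$, which is fine), your argument is a complete and valid alternative to the paper's.
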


\begin{proof}[Proof of Lemma~\ref{lemma:psidefinition}]
Since $\Cov(\wb^\top \bZ, \wb'^{\top} \bZ) = \wb^\top \wb'$, by Lemma~\ref{lemma:monotonecovineq} we know there exists an increasing function $\psi(\tau)$ such that $\psi(\wb^\top \wb') = \phi(\wb , \wb')$ and $\psi(\tau) >0$ for $\tau>0$. $\psi(\tau) \leq \Delta$ follows directly by Cauchy-Schwarz inequality.
\end{proof}

\subsection{Proof of Lemma~\ref{lemma:boundedconstant}}
\begin{proof}[Proof of Lemma~\ref{lemma:boundedconstant}]
For $M$, if $\kappa=0$ it is obvious that $M=0$. If $\kappa \neq 0$, we have
\begin{align*}
    \frac{|\kappa| (2L|\mathbf{1}^\top \vb^*| + \sqrt{k})}{\Delta + \kappa^2k} \leq \frac{|\kappa| (2L|\mathbf{1}^\top \vb^*| + \sqrt{k})}{\kappa^2k} \leq \frac{ (2L|\mathbf{1}^\top \vb^*| + 1)}{|\kappa|}.
\end{align*}
This upper bound if $M$ does not depend on $k$.

For $D$, since we assume that $\alpha \leq 1/(8\Delta)$, it suffices to show that $\kappa^2 M^2 k$ is bounded. Similar to the bound of $M$, if $\kappa = 0$ clearly $\kappa^2 M^2 k$. If $\kappa \neq 0$, we have
\begin{align*}
    \kappa^2 k \cdot \bigg[\frac{|\kappa| (2L|\mathbf{1}^\top \vb^*| + \sqrt{k})}{\Delta + \kappa^2k}\bigg]^2 \leq \kappa^2 k \cdot \bigg[\frac{|\kappa|\sqrt{k} (2L|\mathbf{1}^\top \vb^*| + 1)}{\kappa^2k}\bigg]^2 = (2L|\mathbf{1}^\top \vb^*| + 1)^2.
\end{align*}
Therefore $D$ has an upper bound that only depends on the choice of activation function $\sigma(\cdot)$, the ground-truth parameters $(\wb^*,\vb^*)$ and the initialization $(\wb^0,\vb^0)$. The results for $D_0$ and $\rho$ are obvious.
\end{proof}


\section{Proofs of Results in Section~\ref{section:proofmain}}\label{section:appendix_proof_2}
In this section we give the proofs of the claims and lemmas used in Section~\ref{section:proofmain}. 
\subsection{Proof of Claim~\ref{claim:expectationcalculation}}
\begin{proof}[Proof of Claim~\ref{claim:expectationcalculation}]
Note that for any $i=1,\ldots, n$, $\Pb_j \xb_i$, $j=1,\ldots,k$ are independent standard Gaussian random vectors. Therefore we have
$ v_j \sigma(\wb^{\top} \Pb_j \xb_i) \cdot \xi^{-1} v_{j'} \Pb_{j'} \xb_i =0$ for $j'\neq j$. Moreover, suppose that $\zb$ is a standard Gaussian random vector. Let $\tilde\wb_1,\ldots, \tilde\wb_{r-1}$ be a set of orthonormal vectors orthogonal to $\wb$, then we have
\begin{align*}
    \EE_{\zb} [ \sigma(\wb^{\top} \zb) \zb ] &= \EE_{\zb} [ \sigma(\wb^{\top} \zb) \cdot (\wb^{\top} \zb) ] \cdot \wb + \sum_{j=1}^{r-1}\EE_{\zb} [ \sigma(\wb^{\top} \zb) \cdot (\wb_j^{\top} \zb) ] \cdot \wb_j\\
    &= \EE_{\zb} [ \sigma(\wb^{\top} \zb) \cdot (\wb^{\top} \zb) ] \cdot \wb\\
    &= \xi\cdot \wb,
\end{align*}
where the second equality follows by the fact that $\wb_j^{\top} \zb$, $j=1,\ldots, r-1$ are independent of $\wb^{\top} \zb$ and have mean $0$. Note that this argument for $\wb$ also works for $\wb^*$. Therefore, we have
\begin{align*}
    \EE[\gb_w(\wb,\vb) ]&= - \frac{1}{n} \sum_{i=1}^n \EE\Bigg\{  \Bigg[y_i - 
\sum_{j=1}^k v_j \sigma(\wb^{\top} \Pb_j \xb_i)\Bigg]\cdot \sum_{j'=1}^k \xi^{-1} v_{j'} \Pb_{j'} \xb_i \Bigg\}\\
& = - \frac{1}{n} \sum_{i=1}^n \EE\Bigg\{  \Bigg[\sum_{j=1}^k v_j^* \sigma(\wb^{*\top} \Pb_j \xb_i) - 
\sum_{j=1}^k v_j \sigma(\wb^{\top} \Pb_j \xb_i)\Bigg]\cdot \sum_{j'=1}^k \xi^{-1} v_{j'} \Pb_{j'} \xb_i \Bigg\}\\
&= - \Bigg( \sum_{j=1}^k v_j^* v_j \wb^* - \sum_{j=1}^k v_j^2 \wb \Bigg)\\
&=\overline{\gb}_w(\wb,\vb).
\end{align*}
This proves the first result. The second identity $\EE[\gb_v(\wb,\vb)] = \overline{\gb}_v(\wb,\vb)$ directly follows by the definition.
\end{proof}
\subsection{Proof of Lemma~\ref{lemma:smoothness_all}}
\begin{proof}[Proof of Lemma~\ref{lemma:smoothness_all}]
Define
\begin{align*}
    &H_{ww}:=\sup_{\substack{ \wb,\wb'\in \cW_0 \\ \vb\in\cV_0}} \frac{\| \gb_w(\wb,\vb) - \gb_w(\wb',\vb) \|_2  }{\| \wb - \wb' \|_2}, &
    &H_{wv} : = \sup_{ \substack{ \vb,\vb'\in \cV_0\\ \wb \in \cW_0}} \frac{\|\gb_w(\wb,\vb) - \gb_w(\wb,\vb')\|_2 }{\| \vb - \vb' \|_2},\\
    &H_{vw} : = \sup_{ \substack{\wb,\wb' \in \cW_0\\ \vb
    \in \cV_0}} \frac{\|\gb_v(\wb,\vb) - \gb_v(\wb',\vb)\|_2 }{\| \wb - \wb' \|_2},&
    &H_{vv} : = \sup_{ \substack{ \vb, \vb'
    \in \cV_0 \\ \wb \in \cW_0}} \frac{\|\gb_v(\wb,\vb) - \gb_v(\wb,\vb')\|_2 }{\| \vb - \vb' \|_2}.
\end{align*}
For any $\delta_1,\ldots,\delta_5>0$, we first give the following lemmas.
\begin{lemma}\label{lemma:uniformbound1}
If $n \geq (r+k)\log(34/\delta_1)$, then with probability at least $1-\delta_1$, we have
\begin{align*}
    \sup_{\substack{\ab,\ab' \in S^{r-1}\\ \mathbf{b},\mathbf{b}'\in S^{k-1}}} \frac{1}{n}\sum_{i=1}^n  \sum_{j=1}^k  |b_j\ab^\top \Pb_j\xb_i|  \cdot  \Bigg| \sum_{j'=1}^k b_{j'}'\ab'^\top \Pb_{j'}\xb_i \Bigg|  \leq C \sqrt{k},
\end{align*}
where $C$ is an absolute constant.
\end{lemma}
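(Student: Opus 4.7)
Set $\zb_{ij}:=\Pb_j\xb_i$. Since the filters are non-overlapping and $\xb_i\sim N(\mathbf{0},\Ib_d)$, the $nk$ vectors $\{\zb_{ij}\}$ are i.i.d.\ $N(\mathbf{0},\Ib_r)$. The first step is to apply Cauchy--Schwarz across the index $i$ to decouple the two factors:
\begin{align*}
\frac{1}{n}\sum_{i=1}^n \sum_j |b_j\ab^\top\zb_{ij}|\cdot\Bigl|\sum_{j'}b_{j'}'\ab'^\top\zb_{ij'}\Bigr| \le \sqrt{A(\ab,\bb)}\cdot\sqrt{B(\ab',\bb')},
\end{align*}
where $A(\ab,\bb):=\frac{1}{n}\sum_i\bigl(\sum_j|b_j|\,|\ab^\top\zb_{ij}|\bigr)^2$ and $B(\ab',\bb'):=\frac{1}{n}\sum_i\bigl(\sum_{j'}b_{j'}'\ab'^\top\zb_{ij'}\bigr)^2$. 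It then suffices to establish the uniform estimates $\sup A\le C_1 k$ and $\sup B\le C_2$; the product yields the advertised bound $C\sqrt{k}$.

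Next, I would dispatch the first factor by a simple spectral argument. Cauchy--Schwarz in $j$ together with $\|\bb\|_2=1$ gives $A(\ab,\bb)\le\ab^\top\Mb\ab$, where $\Mb:=\frac{1}{n}\sum_{i,j}\zb_{ij}\zb_{ij}^\top$ equals $k$ times the sample covariance matrix of $nk$ i.i.d.\ standard $r$-dimensional Gaussians. Standard spectral concentration for Gaussian sample covariance matrices then yields $\sup_{\ab\in S^{r-1}}A\le\|\Mb\|_2\le C_1k$ with probability at least $1-\delta_1/2$ under the assumption $n\ge(r+k)\log(34/\delta_1)$ (in fact even $n\gtrsim r+\log(1/\delta_1)$ suffices here, since $\Mb$ aggregates $nk$ samples).

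The heart of the argument is the second factor. Writing $\Zb_i:=[\zb_{i1},\ldots,\zb_{ik}]\in\RR^{r\times k}$, a matrix of i.i.d.\ $N(0,1)$ entries, the inner sum becomes $\ab'^\top\Zb_i\bb'=\langle\ab'\bb'^\top,\Zb_i\rangle_F$. At any fixed $(\ab',\bb')\in S^{r-1}\times S^{k-1}$ the variables $\ab'^\top\Zb_i\bb'$ are i.i.d.\ $N(0,1)$ across $i$, so $nB(\ab',\bb')\sim\chi_n^2$ and $\PP(B\ge 2)\le e^{-cn}$. I will then cover $S^{r-1}$ and $S^{k-1}$ by $\epsilon$-nets of cardinalities at most $(1+2/\epsilon)^r$ and $(1+2/\epsilon)^k$ at an absolute scale (say $\epsilon=1/8$), union-bound to get $\max_{\text{net}}B\le 2$ with probability at least $1-\delta_1/2$ under $n\ge(r+k)\log(34/\delta_1)$, and transfer to the full sphere via the rank-one decomposition $\ab'\bb'^\top-\tilde\ab'\tilde\bb'^\top=\ab'(\bb'-\tilde\bb')^\top+(\ab'-\tilde\ab')\tilde\bb'^\top$. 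Two applications of Cauchy--Schwarz then yield $|B(\ab',\bb')-B(\tilde\ab',\tilde\bb')|\le 4\epsilon\,\sup_S B$; taking the sup on the left and solving gives $\sup_SB\le 2/(1-4\epsilon)\le C_2$.

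\textbf{Main obstacle.} The delicate point is the uniform control of $B$. Naively upper bounding $\sup_{\ab'\bb'^\top}$ by $\sup_{\|\bM\|_F=1}$ forces $n\gg rk$, which is far too expensive; one must exploit the rank-one structure so that the effective dimension is only $r+k$, matching the covering of $S^{r-1}\times S^{k-1}$. The Lipschitz transfer from net to sphere is itself self-referential -- the error bound on $|B-\tilde B|$ is proportional to $\sup_S B$ -- and it closes only because the rank-one decomposition of the difference keeps the proportionality constant at $O(\epsilon)$. This is essentially a rank-one restricted isometry statement for Gaussian ensembles, and it is precisely this argument that produces the stated sample threshold $n\gtrsim(r+k)\log(1/\delta_1)$.
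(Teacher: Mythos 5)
Your proposal is correct, and it takes a genuinely different route from the paper. The paper attacks the quantity head-on: it puts $1/8$-nets on all four spheres, observes that for fixed net points the summand is a product of two sub-Gaussian variables (the first factor having $\psi_2$-norm $O(\sqrt{k})$ by the triangle inequality over $j$, the second being exactly standard normal), invokes the sub-Gaussian-product lemma to get a $\psi_1$-bound of $O(\sqrt{k})$, applies Bernstein's inequality with a union bound over the nets, bounds the mean by the $\psi_1$-norm, and finally transfers from the nets to the spheres with the same self-referential trick you use for $B$ (namely $\overline{A}\le C\sqrt{k}+\tfrac12\overline{A}$). Your Cauchy--Schwarz decoupling replaces this with two cleaner standard facts: the absolute-value factor reduces deterministically (Cauchy--Schwarz in $j$ plus $\|\bb\|_2=1$) to the spectral norm of an aggregated sample covariance matrix built from $nk$ i.i.d.\ Gaussians, which needs only $nk\gtrsim r$; and the signed factor is a rank-one restricted isometry statement, which is where the $(r+k)\log(1/\delta_1)$ threshold genuinely comes from. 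This isolates more transparently \emph{why} the effective dimension is $r+k$ rather than $rk$, and it avoids the $\psi_1$ machinery entirely. What the paper's heavier template buys is reusability: essentially the same proof is repeated for Lemmas~\ref{lemma:uniformbound2}--\ref{lemma:uniformbound6}, where the activation $\sigma(\cdot)$ and the noise $\epsilon_i$ enter and the clean quadratic-form structure you exploit is no longer available. One cosmetic caveat: your claimed net-point bound $\max_{\mathrm{net}}B\le 2$ depends on the unspecified constant in the $\chi^2$ tail relative to the constant in $n\ge(r+k)\log(34/\delta_1)$; as in the paper's own proof, this should be read as $\max_{\mathrm{net}}B\le C'$ for an absolute constant $C'$, which changes nothing downstream.
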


\begin{lemma}\label{lemma:uniformbound2}
If $n \geq (r+k)\log(68/\delta_2)$, then with probability at least $1-\delta_2$, we have
\begin{align*}
    \sup_{\substack{\ab,\ab' \in S^{r-1}\\ \mathbf{b},\mathbf{b}'\in S^{k-1}}} \frac{1}{n}\sum_{i=1}^n  \sum_{j=1}^k  b_j \sigma(\ab^\top \Pb_j\xb_i)  \cdot   \sum_{j'=1}^k b_{j'}'\ab'^\top \Pb_{j'}\xb_i   \leq C L \sqrt{k} ,
\end{align*}
where $C$ is an absolute constant.
\end{lemma}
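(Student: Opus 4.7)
The plan is to reduce Lemma~\ref{lemma:uniformbound2} to Lemma~\ref{lemma:uniformbound1} by decomposing $\sigma$ around its value at zero and bounding the residual term via a Gaussian operator-norm concentration. Specifically, I would write $\sigma(\ab^{\top}\Pb_j\xb_i) = [\sigma(\ab^{\top}\Pb_j\xb_i) - \sigma(0)] + \sigma(0)$, which splits the quantity inside the supremum as $F_1 + F_2$, where $F_1$ collects the Lipschitz residual and
\[
F_2 = \sigma(0)\cdot(\mathbf{1}^{\top}\mathbf{b})\cdot \frac{1}{n}\sum_{i=1}^n\sum_{j'=1}^k b_{j'}'\ab'^{\top}\Pb_{j'}\xb_i.
\]
I would then bound each piece uniformly with probability at least $1 - \delta_2/2$ and conclude via a union bound.

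For $F_1$, Assumption~\ref{assump:1Lipschitz} gives $|\sigma(\ab^{\top}\Pb_j\xb_i) - \sigma(0)| \leq |\ab^{\top}\Pb_j\xb_i|$, so applying the triangle inequality inside the summation over $j$ yields
\[
|F_1| \leq \frac{1}{n}\sum_{i=1}^{n}\sum_{j=1}^{k} |b_j\ab^{\top}\Pb_j\xb_i|\cdot\bigg|\sum_{j'=1}^{k} b_{j'}'\ab'^{\top}\Pb_{j'}\xb_i\bigg|.
\]
This is precisely the expression bounded in Lemma~\ref{lemma:uniformbound1}, and invoking that lemma with $\delta_1 = \delta_2/2$ requires $n \geq (r+k)\log(34/(\delta_2/2)) = (r+k)\log(68/\delta_2)$, which matches the hypothesis and yields $\sup|F_1| \leq C\sqrt{k}$.

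For $F_2$, I would use $|\sigma(0)|\leq L$ and $|\mathbf{1}^{\top}\mathbf{b}|\leq \sqrt{k}\,\|\mathbf{b}\|_2 = \sqrt{k}$, and then rewrite the remaining factor in bilinear form. Observing that $\frac{1}{n}\sum_i\sum_{j'} b_{j'}'\ab'^{\top}\Pb_{j'}\xb_i = \ab'^{\top}\mathbf{M}\mathbf{b}'$, where $\mathbf{M}\in\RR^{r\times k}$ has $j'$-th column $\frac{1}{n}\sum_{i=1}^n \Pb_{j'}\xb_i$, the supremum over $\ab'\in S^{r-1}$ and $\mathbf{b}'\in S^{k-1}$ equals the operator norm $\|\mathbf{M}\|_2$. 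Because the filters are non-overlapping, the vectors $\Pb_{j'}\xb_i$ are i.i.d.\ $N(\mathbf{0}, \Ib_r)$ across all $(i, j')$, so the entries of $\mathbf{M}$ are i.i.d.\ $N(0, 1/n)$. Standard Gaussian matrix concentration then gives $\|\mathbf{M}\|_2 \leq (\sqrt{r} + \sqrt{k} + t)/\sqrt{n}$ with probability $\geq 1 - 2e^{-t^2/2}$, and choosing $t \asymp \sqrt{\log(1/\delta_2)}$ together with $n \geq (r+k)\log(68/\delta_2)$ yields $\|\mathbf{M}\|_2 \leq C'$, hence $\sup|F_2| \leq C'L\sqrt{k}$.

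A union bound over the two events yields $\sup|F_1 + F_2| \leq C''L\sqrt{k}$ with probability at least $1 - \delta_2$. The main subtlety is the analysis of $F_2$: a naive bound that treats $\sum_{j'} b_{j'}'\Pb_{j'}^{\top}\ab'$ as a generic unit vector in the $kr$-dimensional union of filter supports would produce a loose factor of $\sqrt{kr/n}$, which is too large for the assumed sample complexity $n \gtrsim r+k$. It is essential to exploit the rank-one structure of the outer product $\ab'\mathbf{b}'^{\top}$ induced by sharing the single filter $\ab'$ across all $k$ locations, which reduces the effective complexity from $kr$ to $r+k$ and delivers the tight operator-norm rate $(\sqrt{r} + \sqrt{k})/\sqrt{n}$.
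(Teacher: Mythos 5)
Your proof is correct, and it takes a genuinely different route from the paper's. The paper proves this lemma from scratch with the same $\varepsilon$-net machinery it uses for the other uniform bounds: it discretizes all four sphere variables with $1/8$-nets, shows the summand $\sum_j \hat b_j\sigma(\hat\ab^\top\Pb_j\xb_i)\cdot\sum_{j'}\hat b'_{j'}\hat\ab'^\top\Pb_{j'}\xb_i$ is sub-exponential with $\psi_1$-norm $O(L\sqrt{k})$, applies Bernstein's inequality with a union bound over the $17^{2(r+k)}$ net points, and then removes the discretization via the self-bounding inequality $\overline{A}\le CL\sqrt{k}+\overline{A}/2$, where the non-linear variable $\ab$ is handled through the Lipschitz continuity of $\sigma$ and an appeal to Lemma~\ref{lemma:uniformbound1}. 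You instead split $\sigma$ into its centered Lipschitz part and the constant $\sigma(0)$: the Lipschitz part reduces wholesale to Lemma~\ref{lemma:uniformbound1} (with the $\delta_1=\delta_2/2$ bookkeeping matching the stated sample-size hypothesis exactly), and the constant part collapses to $\sigma(0)(\mathbf{1}^\top\mathbf{b})\,\ab'^\top\Mb\,\mathbf{b}'$, where $\Mb$ is an $r\times k$ matrix with i.i.d.\ $N(0,1/n)$ entries (valid precisely because the filters are non-overlapping, the same independence the paper uses in Claim~\ref{claim:expectationcalculation}), so the supremum over $\ab',\mathbf{b}'$ is an operator norm bounded by $(\sqrt{r}+\sqrt{k}+t)/\sqrt{n}=O(1)$ under $n\gtrsim r+k$. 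Your observation about why the rank-one structure of $\ab'\mathbf{b}'^\top$ is essential (a naive union over the $kr$-dimensional support would cost $\sqrt{kr/n}$) is the right one and is implicitly what the paper's product-of-nets argument also exploits. What your approach buys is modularity and a shorter argument that avoids a second net construction; what the paper's buys is uniformity of technique across Lemmas~\ref{lemma:uniformbound1}--\ref{lemma:uniformbound6} and no reliance on the exact Gaussian operator-norm inequality (only on sub-Gaussian/sub-exponential tails), which matters for the later lemmas where the analogous "constant part" does not appear in such a clean bilinear form.
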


\begin{lemma}\label{lemma:uniformbound2.5}
If $n \geq (r+k)\log(34/\delta_3)$, then with probability at least $1-\delta_3$, we have
\begin{align*}
    \sup_{\substack{\ab,\ab' \in S^{r-1}\\ \mathbf{b},\mathbf{b}'\in S^{k-1}}} \frac{1}{n}\sum_{i=1}^n  \sum_{j=1}^k | b_j \ab^\top \Pb_j\xb_i |  \cdot   \sum_{j'=1}^k | b_{j'}' \ab'^\top \Pb_{j'}\xb_i |   \leq C k,
\end{align*}
where $C$ is an absolute constant.
\end{lemma}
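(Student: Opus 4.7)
The plan is to reduce the double sum with absolute values to an empirical second-moment quantity that depends only on $\ab$ and $\ab'$ (and not on $\bb, \bb'$), and then invoke standard Wishart-type concentration on the $r \times r$ empirical covariance matrix formed by the non-overlapping patches.

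First I would apply Cauchy--Schwarz in the $\bb$ coordinates to each factor: since $\|\bb\|_2 = 1$,
\[
\sum_{j=1}^k |b_j \ab^\top \Pb_j \xb_i| \leq \|\bb\|_2 \cdot \Bigl(\sum_{j=1}^k (\ab^\top \Pb_j \xb_i)^2\Bigr)^{1/2} = \Bigl(\sum_{j=1}^k (\ab^\top \Pb_j \xb_i)^2\Bigr)^{1/2},
\]
and similarly for the primed factor. Then AM--GM ($xy \leq (x^2+y^2)/2$) splits the product and kills the dependence on $\bb, \bb'$:
\[
\sum_{j=1}^k |b_j \ab^\top \Pb_j \xb_i| \cdot \sum_{j'=1}^k |b_{j'}' \ab'^\top \Pb_{j'}\xb_i| \leq \tfrac{1}{2}\sum_{j=1}^k (\ab^\top \Pb_j \xb_i)^2 + \tfrac{1}{2}\sum_{j=1}^k (\ab'^\top \Pb_j \xb_i)^2.
\]
Summing over $i$ and taking the supremum over $\ab, \ab'$, the problem reduces to bounding
\[
\sup_{\ab \in S^{r-1}} \frac{1}{n}\sum_{i=1}^n \sum_{j=1}^k (\ab^\top \Pb_j \xb_i)^2 = \lambda_{\max}\Bigl( \Hb \Bigr), \quad \Hb := \frac{1}{n}\sum_{i=1}^n \sum_{j=1}^k \Pb_j \xb_i \xb_i^\top \Pb_j^\top.
\]

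Next I would exploit the non-overlapping structure: the vectors $\yb_{ij} := \Pb_j \xb_i \in \RR^r$, for $i = 1,\ldots,n$ and $j=1,\ldots,k$, are $nk$ i.i.d.\ standard Gaussian random vectors in $\RR^r$. Thus $\Hb = k \cdot \Hb_0$ with $\Hb_0 := (nk)^{-1} \sum_{i,j} \yb_{ij} \yb_{ij}^\top$ a standard sample covariance built from $nk$ i.i.d.\ Gaussian vectors in $\RR^r$. By a classical concentration bound for Gaussian sample covariance (e.g.\ the non-asymptotic bound $\|\Hb_0 - \Ib_r\|_2 \leq C'\sqrt{(r + \log(1/\delta_3))/(nk)}$ with probability at least $1-\delta_3$, following from a standard $\epsilon$-net argument on $S^{r-1}$), the hypothesis $n \geq (r+k)\log(34/\delta_3)$ is more than sufficient to force $\|\Hb_0\|_2 \leq 3/2$, and therefore $\lambda_{\max}(\Hb) \leq (3/2)k$. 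Plugging back gives the stated bound with $C = 3/2$.

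The main obstacle is essentially bookkeeping: one must (i) verify that the $1/4$-net on $S^{r-1}$ combined with the union bound and a Gaussian concentration inequality for $\ab^\top \Hb \ab$ (a $\chi^2_{nk}$-type random variable after fixing $\ab$) yields a constant $34$ in the logarithm, matching the statement; and (ii) confirm that the reduction above is lossless in the sense that no $k$-dependent factors sneak in beyond the explicit one. The first point reduces to a standard computation with the $(1-2\varepsilon)$ net-inflation constant and the Laurent--Massart bound; the second is immediate from the identities above. No new probabilistic ideas beyond Gaussian sample covariance concentration are required.
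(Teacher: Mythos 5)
Your proposal is correct, but it takes a genuinely different route from the paper. The paper proves this lemma with the same template it uses for Lemmas~\ref{lemma:uniformbound1}--\ref{lemma:uniformbound6}: fix $(\hat\ab,\hat\ab',\hat\bb,\hat\bb')$ on $1/8$-nets of $S^{r-1}\times S^{r-1}\times S^{k-1}\times S^{k-1}$, bound the summand's $\psi_1$-norm by $Ck$ via the sub-Gaussian product lemma (Lemma~\ref{lemma:subgaussianproduct}, using that each factor has $\psi_2$-norm $O(\sqrt{k})$ by triangle inequality), apply Bernstein-type concentration with a union bound over the nets, and then pass to the full supremum through a self-bounding Lipschitz argument ($\overline{A}\leq Ck+\overline{A}/2$). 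You instead eliminate the dependence on $\bb,\bb'$ \emph{deterministically}: Cauchy--Schwarz in the $\bb$-coordinates plus AM--GM reduces the supremum to $\lambda_{\max}(\Hb)$ with $\Hb=\tfrac{1}{n}\sum_{i,j}\Pb_j\xb_i\xb_i^\top\Pb_j^\top$, and since the non-overlapping patches $\Pb_j\xb_i$ are $nk$ i.i.d.\ standard Gaussians in $\RR^r$, this is $k$ times the operator norm of a Wishart-type sample covariance with $nk$ samples, which the sample-size assumption controls by an absolute constant. This is cleaner: it needs a net only on $S^{r-1}$ (or just a citation of standard covariance concentration), avoids the sub-exponential product machinery, and in fact yields the sharper bound $k\,(1+O(\sqrt{r/(nk)}+\sqrt{\log(1/\delta_3)/(nk)}))$. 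What the paper's heavier template buys is reusability: the same net-plus-$\psi_1$ argument covers the companion bounds where one or both factors involve $\sigma(\cdot)$ (Lemmas~\ref{lemma:uniformbound2}, \ref{lemma:uniformbound3}, \ref{lemma:uniformbound4}), for which your quadratic-form reduction does not apply directly. Two minor points: your claim that the hypothesis forces $\|\Hb_0\|_2\leq 3/2$ (hence $C=3/2$) is over-specific, since the assumption $n\geq(r+k)\log(34/\delta_3)$ only makes the deviation terms $O(1)$ rather than small, so you should only claim $\|\Hb_0\|_2\leq C''$ for an absolute constant $C''$ --- which is all the lemma requires; and the constant $34$ in the statement is an artifact of the paper's union bound over its four nets, so you need not (and will not exactly) reproduce it, only use that $\log(34/\delta_3)\geq\log(2/\delta_3)$.
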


\begin{lemma}\label{lemma:uniformbound3}
If $n \geq (r+k)\log(68/\delta_4)$, then with probability at least $1-\delta_4$, we have
\begin{align*}
    \sup_{\substack{\ab,\ab' \in S^{r-1}\\ \mathbf{b},\mathbf{b}'\in S^{k-1}}} \frac{1}{n}\sum_{i=1}^n  \sum_{j=1}^k | b_j \ab^\top \Pb_j\xb_i |  \cdot   \sum_{j'=1}^k | b_{j'}' \sigma(\ab'^\top \Pb_{j'}\xb_i) |   \leq C L k,
\end{align*}
where $C$ is an absolute constant.
\end{lemma}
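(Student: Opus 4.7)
The plan is to reduce the problem to Lemma~\ref{lemma:uniformbound2.5} by exploiting the Lipschitz property of $\sigma$, and then to handle the residual ``constant'' piece via the standard spectral-norm concentration of the Gaussian empirical second-moment matrix. Since $\sigma$ is $1$-Lipschitz (Assumption~\ref{assump:1Lipschitz}) and $L = 1 + |\sigma(0)|$, we have $|\sigma(z)| \leq |\sigma(0)| + |z| \leq L + |z|$, so
\begin{align*}
\sum_{j'=1}^k |b_{j'}' \sigma(\ab'^\top \Pb_{j'}\xb_i)| \leq L\sum_{j'=1}^k |b_{j'}'| + \sum_{j'=1}^k |b_{j'}' \ab'^\top \Pb_{j'}\xb_i|.
\end{align*}
Inserting this into the target double sum splits it as $L\cdot T_1 + T_2$, where $T_2$ is exactly the quantity uniformly bounded by Lemma~\ref{lemma:uniformbound2.5}. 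Applying that lemma with failure probability $\delta_4/2$ (which forces $n\geq (r+k)\log(68/\delta_4)$) yields $\sup T_2 \leq C k$ on an event of probability at least $1 - \delta_4/2$. It therefore suffices to show $\sup T_1 \lesssim k$ on a further event of the same failure probability.

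For $T_1$, two Cauchy-Schwarz applications inside the double sum give $\sum_{j'}|b_{j'}'| \leq \sqrt{k}\,\|\mathbf{b}'\|_2 = \sqrt{k}$ and $\sum_j|b_j \ab^\top \Pb_j \xb_i| \leq \sqrt{\sum_{j}(\ab^\top \Pb_j \xb_i)^2}$, while a further use of $\tfrac{1}{n}\sum_i y_i \leq (\tfrac{1}{n}\sum_i y_i^2)^{1/2}$ on the outer average yields
\begin{align*}
T_1 \leq \sqrt{k}\cdot \sqrt{\ab^\top \Mb_n \ab},\qquad \Mb_n := \frac{1}{n}\sum_{i=1}^n \sum_{j=1}^k \Pb_j \xb_i \xb_i^\top \Pb_j^\top.
\end{align*}
Because the filters are non-overlapping, the $nk$ vectors $\{\Pb_j \xb_i\}_{i\in[n],\,j\in[k]}$ are i.i.d.\ $N(\mathbf{0},\Ib_r)$, hence $\Mb_n$ equals $k$ times the sample covariance of $nk$ standard Gaussians in $\RR^r$, and $\EE \Mb_n = k\Ib_r$. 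A standard $\epsilon$-net argument over $S^{r-1}$ combined with a sub-exponential Bernstein bound---the same machinery that underlies Lemmas~\ref{lemma:uniformbound1}--\ref{lemma:uniformbound2.5}---gives $\sup_{\ab \in S^{r-1}} \ab^\top \Mb_n \ab \leq 2k$ with probability at least $1 - \delta_4/2$ whenever $n \geq (r+k)\log(68/\delta_4)$, since this sample size already ensures $nk \gg r + \log(2/\delta_4)$. Therefore $\sup T_1 \leq \sqrt{k}\cdot \sqrt{2k} = \sqrt{2}\,k$.

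Intersecting the two good events via a union bound and absorbing constants (using $L \geq 1$), the full quantity is at most $L\cdot \sqrt{2}\,k + Ck \leq C' L k$ with probability at least $1 - \delta_4$, which is exactly the claim. The only non-routine step is the uniform spectral-norm control of $\Mb_n$; I expect this to be the main place where care is required, as the probability and sample-size bookkeeping have to be carried out precisely enough to match the constant $68$ in the logarithm inherited from the invocation of Lemma~\ref{lemma:uniformbound2.5}. Everything else reduces to elementary Cauchy-Schwarz manipulations and an appeal to the previous lemma.
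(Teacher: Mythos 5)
Your proof is correct, but it takes a genuinely different route from the paper's. The paper proves this lemma by running the full net-and-concentration machinery directly on the quantity $A(\ab,\ab',\mathbf{b},\mathbf{b}')$: a $1/8$-net over $S^{r-1}\times S^{r-1}\times S^{k-1}\times S^{k-1}$, a sub-exponential Bernstein bound at each net point (via Lemma~\ref{lemma:sigma(x)subgaussian} and Lemma~\ref{lemma:subgaussianproduct}, which give $\|A\|_{\psi_1}\leq C L k$), and a self-bounding step $\overline{A}\leq CLk+\overline{A}/2$ to pass from the net to the supremum; Lemma~\ref{lemma:uniformbound2.5} enters only to control the increment in the $\ab'$ direction, where the Lipschitz continuity of $\sigma$ converts $|\sigma(\ab'^\top\Pb_{j'}\xb_i)-\sigma(\hat{\ab}'^\top\Pb_{j'}\xb_i)|$ into $|(\ab'-\hat{\ab}')^\top\Pb_{j'}\xb_i|$. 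You instead peel off $\sigma$ at the outset via $|\sigma(z)|\leq|\sigma(0)|+|z|\leq L+|z|$, reducing the whole statement to Lemma~\ref{lemma:uniformbound2.5} plus a uniform bound on $\ab^\top\Mb_n\ab$, where $\Mb_n$ is $k$ times the sample covariance of the $nk$ i.i.d.\ standard Gaussian vectors $\Pb_j\xb_i$ (valid precisely because the filters are non-overlapping, which is the paper's setting). This is arguably cleaner, since it avoids repeating the four-variable net argument, at the cost of importing a standard covariance concentration result that the paper does not state as a separate lemma; you correctly identify this as the one step requiring care, and the hypothesis $n\geq(r+k)\log(68/\delta_4)$ does comfortably imply $nk\gtrsim r+\log(1/\delta_4)$, so $\sup_{\ab\in S^{r-1}}\ab^\top\Mb_n\ab\leq 2k$ holds with the required probability. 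The bookkeeping also checks out: invoking Lemma~\ref{lemma:uniformbound2.5} with $\delta_3=\delta_4/2$ requires exactly $n\geq(r+k)\log(68/\delta_4)$, and since $L\geq 1$ the two pieces combine to $C'Lk$ as claimed.
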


\begin{lemma}\label{lemma:uniformbound4}
If $n \geq (r+k)\log(102/\delta_5)$, then with probability at least $1-\delta_5$, we have
\begin{align*}
    \sup_{\substack{\ab,\ab' \in S^{r-1}\\ \mathbf{b},\mathbf{b}'\in S^{k-1}}} \frac{1}{n}\sum_{i=1}^n  \sum_{j=1}^k  b_j \sigma(\ab^\top \Pb_j\xb_i)  \cdot   \sum_{j'=1}^k b_{j'}' \sigma(\ab'^\top \Pb_{j'}\xb_i)   \leq C L^2  k,
\end{align*}
where $C$ is an absolute constant.
\end{lemma}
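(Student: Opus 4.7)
The plan is to follow the same three-step template used in the proofs of Lemmas~\ref{lemma:uniformbound1}--\ref{lemma:uniformbound3}: (i) pointwise sub-exponential concentration via Bernstein's inequality, (ii) discretization of $S^{r-1}\times S^{r-1}\times S^{k-1}\times S^{k-1}$ by a finite $\epsilon$-net, and (iii) a Lipschitz extension to pass from the net to the supremum.

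For the pointwise bound, fix a quadruple $(\ab,\ab',\mathbf{b},\mathbf{b}')$ and write $A_i = \sum_{j} b_j\sigma(\ab^\top \Pb_j\xb_i)$, $B_i = \sum_{j'} b_{j'}'\sigma(\ab'^\top \Pb_{j'}\xb_i)$. Because the filters are non-overlapping, $\Pb_1\xb_i,\ldots,\Pb_k\xb_i$ are i.i.d.\ standard Gaussian vectors; viewed as a function of these $k$ vectors, $A_i$ is $1$-Lipschitz, since its squared gradient equals $\sum_{j} b_j^2 [\sigma'(\ab^\top \Pb_j \xb_i)]^2\leq \|\mathbf{b}\|_2^2 = 1$ by Assumption~\ref{assump:1Lipschitz}. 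Gaussian concentration then gives $\|A_i-\EE A_i\|_{\psi_2}\leq c$, and combining this with $|\EE A_i| = |\kappa\,\mathbf{1}^\top\mathbf{b}|\leq |\kappa|\sqrt{k}\leq CL\sqrt{k}$ yields $\|A_i\|_{\psi_2}\leq CL\sqrt{k}$, and likewise for $B_i$. Hence $A_iB_i$ is sub-exponential with $\|A_iB_i\|_{\psi_1}\leq CL^2k$, while a direct expansion using $\EE[\sigma(\ab^\top\zb)\sigma(\ab'^\top\zb)]=\phi(\ab,\ab')+\kappa^2$ gives $|\EE[A_iB_i]|\leq \kappa^2|\mathbf{1}^\top\mathbf{b}|\,|\mathbf{1}^\top\mathbf{b}'|+\Delta|\la\mathbf{b},\mathbf{b}'\ra|\leq CL^2k$. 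Bernstein's inequality for sub-exponential variables then yields $\tfrac{1}{n}\sum_i A_iB_i\leq C_0L^2k$ with probability at least $1-2e^{-c_0 n}$ at any fixed point.

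Second, take a $1/4$-net $\cN$ of $S^{r-1}\times S^{r-1}\times S^{k-1}\times S^{k-1}$ with $|\cN|\leq 12^{2(r+k)}$; the hypothesis $n\geq (r+k)\log(102/\delta_5)$ is calibrated so that a union bound keeps the exceptional probability below $\delta_5/2$ over $\cN$. Third, extend from the net to the full product of spheres. Perturbing $\ab$ and using the $1$-Lipschitzness of $\sigma$, the increment is controlled by $\|\ab-\tilde\ab\|_2$ times $\sup_{\mathbf{u}\in S^{r-1}}\tfrac{1}{n}\sum_i\sum_j|b_j\mathbf{u}^\top\Pb_j\xb_i|\sum_{j'}|b_{j'}'\sigma(\ab'^\top\Pb_{j'}\xb_i)|$, which is at most $CLk$ on the high-probability event of Lemma~\ref{lemma:uniformbound3}; perturbations in $\ab'$ are symmetric. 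For perturbations in $\mathbf{b}$ (and analogously $\mathbf{b}'$), I would split $\sigma(\cdot) = \sigma(0)+[\sigma(\cdot)-\sigma(0)]$: the $\sigma(0)$ piece contributes a constant coefficient of order $L\|\mathbf{b}-\tilde\mathbf{b}\|_2\sqrt{k}$ times an already-controlled factor, while the centered Lipschitz piece is handled using Lemmas~\ref{lemma:uniformbound2.5} and~\ref{lemma:uniformbound3}. The overall Lipschitz constant of the supremand is $O(Lk)$, so a $1/4$-net inflates the bound by an amount absorbed into the constant in $CL^2k$.

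The main obstacle is the Lipschitz step in the $\mathbf{b},\mathbf{b}'$ directions: unlike in Lemmas~\ref{lemma:uniformbound2} and~\ref{lemma:uniformbound3}, both factors here involve $\sigma$, so the increment cannot be bounded by a single earlier uniform estimate and must be split into a constant part and a centered Lipschitz part, then recombined via Lemmas~\ref{lemma:uniformbound2.5} and~\ref{lemma:uniformbound3}. Tracking the powers of $L$ through this decomposition is what produces the second factor of $L$ and gives the final $L^2k$ rate rather than $Lk$.
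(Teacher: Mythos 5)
Your overall skeleton (pointwise sub-exponential concentration for the product of the two sub-Gaussian sums, a union bound over a net of the product of spheres, and a Lipschitz extension that feeds back into the supremum via $\overline{A}\leq CL^2k+\overline{A}/2$) is exactly the paper's proof, and your pointwise step is fine: the paper gets $\|A_i\|_{\psi_2}\leq CL\sqrt{k}$ by the triangle inequality applied to the sub-Gaussian summands $\hat b_j\sigma(\hat\ab^\top\Pb_j\xb_i)$ (Lemma~\ref{lemma:sigma(x)subgaussian}), whereas you get the same bound from Gaussian concentration of Lipschitz functions plus $|\EE A_i|=|\kappa\,\mathbf{1}^\top\mathbf{b}|\leq L\sqrt{k}$; either works, though you should argue the Lipschitz property directly from Assumption~\ref{assump:1Lipschitz} rather than via $\sigma'$, which need not exist (ReLU).

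Where you go astray is in the diagnosis of the extension step. You identify the $\mathbf{b},\mathbf{b}'$ perturbations as the "main obstacle" and propose splitting $\sigma=\sigma(0)+[\sigma(\cdot)-\sigma(0)]$; but the supremand is \emph{bilinear} in $(\mathbf{b},\mathbf{b}')$, so the increment from perturbing $\mathbf{b}'$ is exactly $\|\mathbf{b}'-\hat{\mathbf{b}}'\|_2\,A\bigl(\ab,\ab',\mathbf{b},\tfrac{\mathbf{b}'-\hat{\mathbf{b}}'}{\|\mathbf{b}'-\hat{\mathbf{b}}'\|_2}\bigr)\leq\|\mathbf{b}'-\hat{\mathbf{b}}'\|_2\,\overline{A}$, which is absorbed by the self-bounding trick with no new estimate at all. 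The genuinely $\sigma$-dependent directions are $\ab$ and $\ab'$, and those you handle correctly (Lipschitz continuity of $\sigma$ plus Lemma~\ref{lemma:uniformbound3}, which is what the paper does for $I_3,I_4$). As written, your $\sigma(0)$-split leaves a loose end: the "already-controlled factor" $\tfrac{1}{n}\sum_i\bigl|\sum_{j'}b'_{j'}\sigma(\ab'^\top\Pb_{j'}\xb_i)\bigr|$ is not supplied uniformly by any of Lemmas~\ref{lemma:uniformbound1}--\ref{lemma:uniformbound6}, so you would need an additional uniform estimate of order $L\sqrt{k}$ to close that branch. This is fixable, but the bilinearity observation makes the whole decomposition unnecessary; your closing claim that this decomposition is "what produces the second factor of $L$" is also not right --- the $L^2$ already comes from the pointwise $\psi_1$ bound $\|A_iB_i\|_{\psi_1}\leq CL^2k$.
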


\begin{lemma}\label{lemma:uniformbound5}
If $n \geq (r+k)\log(18/\delta_6)$, then with probability at least $1-\delta_6$, we have
\begin{align*}
    \sup_{\substack{\ab \in S^{r-1}\\ \mathbf{b} \in S^{k-1}}} \frac{1}{n}\sum_{i=1}^n  \epsilon_i  \cdot \sum_{j=1}^k b_{j}\ab^\top \Pb_{j}\xb_i  \leq C\nu,
\end{align*}
where $C$ is an absolute constant.
\end{lemma}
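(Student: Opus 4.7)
The plan is to exploit the non-overlapping filter structure to recast the quantity as the spectral norm of a random matrix, and then to combine a pointwise Bernstein bound with a standard covering argument. Because the filters are non-overlapping, the index sets $\cI_1,\ldots,\cI_k$ are pairwise disjoint, so for each sample $i$ the matrix
\[
\mathbf{H}_i := [\,\Pb_1\xb_i\mid \cdots \mid \Pb_k\xb_i\,] \in \RR^{r\times k}
\]
has $kr$ independent standard Gaussian entries. Setting $\mathbf{Q} := \frac{1}{n}\sum_{i=1}^n \epsilon_i \mathbf{H}_i$, the quantity inside the supremum equals $\ab^\top \mathbf{Q}\mathbf{b}$, and its supremum over $(\ab,\mathbf{b}) \in S^{r-1}\times S^{k-1}$ is exactly the spectral norm $\|\mathbf{Q}\|_2$. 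The lemma thus reduces to a high-probability operator-norm bound on $\mathbf{Q}$.

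For any fixed unit vectors $\ab,\mathbf{b}$, each summand $\epsilon_i \cdot \ab^\top \mathbf{H}_i \mathbf{b}$ is the product of two independent sub-Gaussian random variables: $\epsilon_i$ has $\psi_2$ norm $\nu$, and $\ab^\top \mathbf{H}_i \mathbf{b} \sim N(0,1)$ has $\psi_2$ norm bounded by an absolute constant. The product is therefore sub-exponential with $\psi_1$ norm of order $\nu$, so Bernstein's inequality gives
\[
\PP\bigl[\,|\ab^\top \mathbf{Q} \mathbf{b}| > t\,\bigr] \leq 2\exp\bigl(-c\, n \min\{t^2/\nu^2,\, t/\nu\}\bigr)
\]
for every $t>0$. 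I would then take $1/4$-nets $\cN_\ab \subseteq S^{r-1}$ and $\cN_\mathbf{b} \subseteq S^{k-1}$ of sizes at most $9^r$ and $9^k$, apply a union bound over the $9^{r+k}$ pairs, and invoke the standard bilinear net-extension inequality $\|\mathbf{Q}\|_2 \leq 2\sup_{(\ab,\mathbf{b})\in \cN_\ab\times \cN_\mathbf{b}} \ab^\top \mathbf{Q}\mathbf{b}$ to pass from the nets back to the full spheres. Choosing $t$ so that $c n t^2/\nu^2 \geq (r+k)\log 9 + \log(2/\delta_6)$, the hypothesis $n \geq (r+k)\log(18/\delta_6)$ yields $t \leq C\nu$ for an absolute constant $C$, which is the claim.

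The only delicate point, and the main obstacle, is verifying that we indeed sit in the sub-Gaussian branch of Bernstein's inequality, i.e.\ that $t/\nu \leq 1$ at the chosen $t$, so that the exponent $\min\{t^2/\nu^2,\,t/\nu\}$ reduces to $t^2/\nu^2$ rather than the weaker $t/\nu$. This is again forced by the hypothesis $n \geq (r+k)\log(18/\delta_6)$ after an appropriate choice of absolute constants: if the sub-exponential branch were active instead, the resulting bound would scale like $\nu (r+k)/n$ rather than $\nu\sqrt{(r+k)/n}$, and the target $C\nu$ would no longer follow from the stated sample-size assumption. The remaining ingredients -- the $9^d$ upper bound for the cardinality of $1/4$-nets on Euclidean spheres, the bilinear net-extension lemma, and the fact that the product of independent sub-Gaussian variables is sub-exponential with $\psi_1$ norm bounded by the product of the $\psi_2$ norms -- are all standard tools and require no additional work.
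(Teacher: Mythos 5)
Your proposal is correct and follows essentially the same route as the paper's proof: a pointwise sub-exponential (Bernstein) bound for each fixed pair of unit vectors using the fact that $\epsilon_i$ times a standard Gaussian is sub-exponential with $\psi_1$ norm $O(\nu)$, a union bound over $1/4$-nets of cardinality $9^r$ and $9^k$, and the bilinear net-extension step (which the paper carries out explicitly as $\overline{A} \leq C\nu + \overline{A}/2$ via linearity in $\ab$ and $\mathbf{b}$). Your spectral-norm reformulation and the check that the sample-size hypothesis keeps you in the sub-Gaussian branch of Bernstein are both consistent with what the paper does.
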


\begin{lemma}\label{lemma:uniformbound6}
If $n \geq (r+k)\log(18/\delta_7)$, then with probability at least $1-\delta_7$, we have
\begin{align*}
    \sup_{\substack{\ab \in S^{r-1}\\ \mathbf{b} \in S^{k-1}}} \frac{1}{n}\sum_{i=1}^n  |\epsilon_i|  \cdot \sum_{j=1}^k |b_{j}\ab^\top \Pb_{j}\xb_i|  \leq C\nu\sqrt{k},
\end{align*}
where $C$ is an absolute constant.
\end{lemma}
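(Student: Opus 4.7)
The plan is to eliminate the supremum over $(\ab,\mathbf{b})$ by two applications of the Cauchy--Schwarz inequality, reducing the problem to (i) sub-exponential concentration of $\tfrac{1}{n}\sum_i \epsilon_i^2$ and (ii) uniform control of a sum of sample covariance matrices over $\ab\in S^{r-1}$. In particular, the $\mathbf{b}$-sphere can be handled for free, with no covering-net argument required.

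First, Cauchy--Schwarz across the $n$ samples gives
\[
\frac{1}{n}\sum_{i=1}^n |\epsilon_i|\sum_{j=1}^k |b_j \ab^\top\Pb_j\xb_i|
\;\leq\; \biggl(\frac{1}{n}\sum_{i=1}^n \epsilon_i^2\biggr)^{1/2}\biggl(\frac{1}{n}\sum_{i=1}^n \Bigl(\sum_{j=1}^k |b_j\ab^\top\Pb_j\xb_i|\Bigr)^{2}\biggr)^{1/2}.
\]
A second Cauchy--Schwarz in $j$, using $\|\mathbf{b}\|_2 = 1$, gives $\bigl(\sum_j |b_j\ab^\top\Pb_j\xb_i|\bigr)^2 \leq \sum_j (\ab^\top\Pb_j\xb_i)^2$. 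Writing $\widehat{\Sigma}_j := \tfrac{1}{n}\sum_i \Pb_j\xb_i\xb_i^\top\Pb_j^\top$, the second factor is therefore bounded by $\bigl(\ab^\top\sum_j\widehat{\Sigma}_j\,\ab\bigr)^{1/2}$, which no longer depends on $\mathbf{b}$.

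From here the proof reduces to two standard concentration statements. Since $\epsilon_i$ has $\psi_2$-norm $\nu$, $\epsilon_i^2-\EE\epsilon_i^2$ is sub-exponential, and Bernstein's inequality gives $\tfrac{1}{n}\sum_i\epsilon_i^2\leq C\nu^2$ with probability at least $1-\delta_7/2$ as soon as $n = \Omega(\log(1/\delta_7))$. Each $\widehat{\Sigma}_j$ is the sample covariance of $n$ i.i.d.\ $N(\mathbf{0},\Ib_r)$ vectors, so a standard non-asymptotic bound gives $\|\widehat{\Sigma}_j\|_2\leq 2$ with probability at least $1-\delta'$ provided $n\geq C(r+\log(1/\delta'))$. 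Taking $\delta' = \delta_7/(2k)$ and union-bounding over $j=1,\ldots,k$ yields $\sup_{\ab\in S^{r-1}}\ab^\top \sum_j\widehat{\Sigma}_j\,\ab \leq \sum_j\|\widehat{\Sigma}_j\|_2 \leq 2k$ with probability at least $1-\delta_7/2$. Multiplying the two bounds gives the claimed $C\nu\sqrt{k}$ rate.

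There is no serious obstacle; the only thing to verify is that the sample-size hypothesis $n\geq(r+k)\log(18/\delta_7)$ absorbs both the $\log(1/\delta_7)$ needed for the Bernstein step and the $r+\log(k/\delta_7)$ needed for the covariance concentration after the union bound, which is routine since $(r+k)\log(18/\delta_7)$ dominates both terms. Notably, no covering net over $\mathbf{b}$ or $\ab$ is required, because the two Cauchy--Schwarz steps collapse the $\mathbf{b}$-dependence entirely and reduce the $\ab$-dependence to an operator-norm bound on $\sum_j\widehat{\Sigma}_j$.
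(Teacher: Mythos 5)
Your proof is correct, and it takes a genuinely different route from the paper's. The paper handles this term with the same template it uses for Lemmas~\ref{lemma:uniformbound1}--\ref{lemma:uniformbound5}: a $1/4$-net over $S^{r-1}\times S^{k-1}$, a $\psi_1$-bound on the product $|\epsilon_i|\cdot\sum_j|\hat b_j\hat\ab^\top\Pb_j\xb_i|$ via Lemma~\ref{lemma:subgaussianproduct}, Bernstein's inequality on the net, and then a self-bounding Lipschitz step of the form $\overline{A}\leq C\nu\sqrt{k}+\overline{A}/2$ to pass from the net to the spheres. You instead decouple deterministically: Cauchy--Schwarz over $i$ separates the noise from the design, and Cauchy--Schwarz over $j$ (with $\|\mathbf{b}\|_2=1$) eliminates $\mathbf{b}$ entirely, reducing the supremum to $\big(\tfrac{1}{n}\sum_i\epsilon_i^2\big)^{1/2}\cdot\big\|\sum_j\widehat\Sigma_j\big\|_2^{1/2}$, which is then controlled by scalar Bernstein plus standard sample-covariance concentration (itself a net argument over $S^{r-1}$, but one you may quote rather than rerun). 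Neither Cauchy--Schwarz step loses the rate here, since $\sum_j|b_j||\ab^\top\Pb_j\xb_i|$ and $\big(\sum_j(\ab^\top\Pb_j\xb_i)^2\big)^{1/2}$ are both of order $\sqrt{k}$, so you recover the same $C\nu\sqrt{k}$ bound; the sample-size hypothesis indeed absorbs both the $\log(1/\delta_7)$ for Bernstein and the $r+\log(2k/\delta_7)$ for the covariance bound after the union bound (with the resulting constants folded into $C$, exactly as the paper does). Your argument is cleaner for this particular lemma and, as a bonus, never uses independence between $\epsilon_i$ and $\xb_i$; its reliance on taking absolute values and fully quadratic decoupling is what makes it specific to this term, whereas the paper's net template is reused verbatim for the signed, $\sigma$-dependent sums in the neighboring lemmas.
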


Let $\delta_1 = \delta/9$, $\delta_2 = \delta_4 = 2\delta/9$, $\delta_5 = \delta/3$ and $\delta_6 = \delta_7 = \delta/18$. Then we have $\delta_1 + \delta_2 + \delta_4 + \delta_5 + \delta_6 + \delta_7 = \delta$. By union bound and the assumption that $n \geq (r+k)\log(324/\delta)$, with probability at least $1-\delta$, the results of Lemmas~\ref{lemma:uniformbound1}, \ref{lemma:uniformbound2}, \ref{lemma:uniformbound3}, \ref{lemma:uniformbound4}, \ref{lemma:uniformbound5}, and \ref{lemma:uniformbound6} all hold. 
We are now ready to prove \eqref{eq:smoothness_ww}-\eqref{eq:smoothness_vv}. 

\noindent\textbf{Proof of \eqref{eq:smoothness_ww}}.
By Assumption~\ref{assump:1Lipschitz} and Lemma~\ref{lemma:uniformbound1} we have
\begin{align*}
    H_{ww} &= \sup_{\ab\in S^{r-1},\vb\in\cV_0}\frac{|\ab^\top[\gb_w(\wb,\vb) - \gb_w(\wb',\vb)]|}{\|\wb - \wb^*\|_2}\\
    &\leq\sup_{\ab\in S^{r-1},\vb\in\cV_0} \xi^{-1}\|\vb\|_2^2\cdot\frac{1}{n}\sum_{i=1}^{n} \sum_{j=1}^k \bigg| \frac{v_j}{\|\vb\|_2} \frac{(\wb - \wb')^\top}{\|\wb - \wb'\|_2} \Pb_j \xb_i \bigg| \cdot \Bigg| \sum_{j'=1}^k \frac{v_{j'}}{\|\vb\|_2}\ab^\top \Pb_j \xb_i\Bigg|\\
    &\leq C_1 \xi^{-1} D_0^2 \sqrt{k},
\end{align*}
where $C_1$ is an absolute constant.

\noindent\textbf{Proof of \eqref{eq:smoothness_wv}}
For any $\ab \in S^{r-1},$ by definition we have
\begin{align*}
    \ab^\top [\gb_w(\wb,\vb) - \gb_w(\wb,\vb')] = I_1  + I_2 + I_3 + I_4,
\end{align*} 
where
\begin{align*}
&I_1 = -\frac{1}{n} \sum_{i=1}^n \sum_{j=1}^k v_j^* \sigma(\wb^{*\top} \Pb_j \xb_i)  
 \sum_{j'=1}^k \xi^{-1} (v_{j'} - v'_{j'}) \ab^\top \Pb_{j'} \xb_i,\\
 &I_2 = \frac{1}{n} \sum_{i=1}^n 
\sum_{j=1}^k (v_j - v'_j) \sigma(\wb^{\top} \Pb_j \xb_i)\cdot \sum_{j'=1}^k \xi^{-1} v_{j'} \ab^\top \Pb_{j'} \xb_i,\\
&I_3 = \frac{1}{n} \sum_{i=1}^n 
\sum_{j=1}^k v'_j \sigma(\wb^{\top} \Pb_j \xb_i)\cdot \sum_{j'=1}^k \xi^{-1} ( v_{j'} - v'_{j'}) \ab^\top \Pb_{j'} \xb_i,\\
&I_4 = -\frac{1}{n} \sum_{i=1}^n \epsilon_i  
 \sum_{j'=1}^k \xi^{-1} (v_{j'} - v'_{j'}) \ab^\top \Pb_{j'} \xb_i.
\end{align*}
By Lemma~\ref{lemma:uniformbound2} and Lemma~\ref{lemma:uniformbound5} we have
\begin{align*}
    &I_1 \leq C_2 \xi^{-1} L\sqrt{k}  \|\vb^*\|_2 \| \vb - \vb' \|_2,\\
    &I_2,I_3\leq C_2D_0 \xi^{-1}L\sqrt{k} \| \vb - \vb' \|_2,\\
    &I_4 \leq C_2 \nu \xi^{-1} \| \vb - \vb' \|_2
\end{align*}
for all $\ab\in S^{r-1}$, $\wb \in \cW_0$ and $\vb,\vb'\in \cV_0$, where $C_2$ is an absolute constant. Since $2\|\vb^*\|_2 + D_0 \leq 3D_0$, we have
\begin{align*}
    H_{wv} = \sup_{ \substack{\ab \in S^{r-1}, \wb\in \cW_0\\ \vb,\vb'\in \cV_0}} \frac{\ab^\top [\gb_w(\wb,\vb) - \gb_w(\wb,\vb')]}{\| \vb - \vb' \|_2} \leq C_3\xi^{-1}(\nu + D_0L\sqrt{k}),
\end{align*}
where $C_3$ is an absolute constant.

\noindent\textbf{Proof of \eqref{eq:smoothness_vw}}
By definition we have
\begin{align*}
    H_{vw} &= \sup_{ \substack{ \ab \in S^{k-1}, \vb
    \in \cV_0\\ \wb,\wb' \in \cW_0}} \frac{\ab^\top [\gb_v(\wb,\vb) - \gb_v(\wb',\vb)] }{\| \wb - \wb' \|_2}\\
    &= \sup_{ \substack{ \ab \in S^{k-1}, \vb
    \in \cV_0\\ \wb,\wb' \in \cW_0}} -\frac{\ab^\top \{ [\bSigma (\wb) - \bSigma (\wb')]^\top \yb + [\bSigma^\top (\wb)\bSigma (\wb) - \bSigma^\top (\wb')\bSigma (\wb')]\vb \}}{n \| \wb - \wb'\|_2}\\
    &\leq   I'_1 +  I'_2 + I'_3,
\end{align*}
where
\begin{align*}
    &I'_1 = \sup_{ \substack{ \ab \in S^{k-1}, \\ \wb,\wb' \in \cW_0}} -\frac{1}{n} \sum_{i=1}^n \sum_{j=1}^k \frac{a_j [\sigma(\wb^\top \Pb_j\xb_i) - \sigma(\wb'^\top \Pb_j\xb_i)]}{\| \wb - \wb'\|_2}\cdot \sum_{j'=1}^k v_{j'}^* \sigma(\wb^{*\top} \Pb_{j'}\xb_i),\\
    &I'_2 = \sup_{ \substack{ \ab \in S^{k-1}, \vb
    \in \cV_0 \\ \wb,\wb' \in \cW_0}} -\frac{1}{n} \sum_{i=1}^n \sum_{j=1}^k a_j \sigma(\wb^\top \Pb_j\xb_i)\cdot \sum_{j'=1}^k \frac{v_{j'} [\sigma(\wb^{\top} \Pb_{j'}\xb_i) - \sigma(\wb'^{\top} \Pb_{j'}\xb_i)]}{\| \wb - \wb' \|_2},\\
    &I'_3 = \sup_{ \substack{ \ab \in S^{k-1}, \vb
    \in \cV_0 \\ \wb,\wb' \in \cW_0}} -\frac{1}{n} \sum_{i=1}^n \sum_{j=1}^k \frac{a_j [\sigma(\wb^\top \Pb_j\xb_i) - \sigma(\wb'^\top \Pb_j\xb_i)]}{\|  \wb - \wb' \|_2}\cdot \sum_{j'=1}^k v_{j'} \sigma(\wb'^{\top} \Pb_{j'}\xb_i),\\
    &I'_4 = \sup_{ \substack{ \ab \in S^{k-1}, \\ \wb,\wb' \in \cW_0}} -\frac{1}{n} \sum_{i=1}^n \epsilon_i\cdot \sum_{j=1}^k \frac{a_j [\sigma(\wb^\top \Pb_j\xb_i) - \sigma(\wb'^\top \Pb_j\xb_i)]}{\| \wb - \wb'\|_2}.
\end{align*}
Therefore by the Lipschitz continuity of $\sigma(\cdot)$, Lemma~\ref{lemma:uniformbound3} and Lemma~\ref{lemma:uniformbound6}, we have
\begin{align*}
    &I'_1 \leq \sup_{ \substack{ \ab \in S^{k-1}, \\ \wb,\wb' \in \cW_0}} \frac{1}{n} \sum_{i=1}^n \sum_{j=1}^k \bigg|a_j \frac{(\wb - \wb')^\top}{\|\wb-\wb'\|_2} \Pb_j\xb_i\bigg|\cdot \sum_{j'=1}^k |v_{j'}^* \sigma(\wb^{*\top} \Pb_{j'}\xb_i)| \leq C_4  L k \| \vb^*\|_2  ,\\
    &I'_2 \leq \sup_{ \substack{ \ab \in S^{k-1}, \vb
    \in \cV_0 \\ \wb,\wb' \in \cW_0}} \frac{1}{n} \sum_{i=1}^n \sum_{j=1}^k |a_j \sigma(\wb^\top \Pb_j\xb_i)|\cdot \sum_{j'=1}^k \bigg|v_{j'} \frac{(\wb - \wb')^\top}{\|\wb-\wb'\|_2} \Pb_{j'}\xb_i\bigg| \leq  C_4  L D_0 k,\\
    &I'_3 \leq \sup_{ \substack{ \ab \in S^{k-1}, \vb
    \in \cV_0 \\ \wb,\wb' \in \cW_0}} \frac{1}{n} \sum_{i=1}^n \sum_{j=1}^k \bigg|a_j \frac{(\wb - \wb')^\top}{\|\wb-\wb'\|_2} \Pb_j\xb_i\bigg|\cdot \sum_{j'=1}^k |v_{j'} \sigma(\wb'^{\top} \Pb_{j'}\xb_i)|\leq C_4  L D_0 k,\\
    &I'_4 \leq \sup_{ \substack{ \ab \in S^{k-1}, \\ \wb,\wb' \in \cW_0}} \frac{1}{n} \sum_{i=1}^n |\epsilon_i|\cdot \sum_{j=1}^k \bigg|a_j \frac{(\wb - \wb')^\top}{\|\wb-\wb'\|_2} \Pb_j\xb_i\bigg| \leq C_4 \nu \sqrt{k} ,
\end{align*}
where $C_4$ is an absolute constant. Since $\| \vb^* \|_2 \leq D_0$, we have
\begin{align*}
    H_{vw} \leq C_5 ( \nu +  D_0 L \sqrt{k})\sqrt{k},
\end{align*}
where $C_5$ is an absolute constant.

\noindent\textbf{Proof of \eqref{eq:smoothness_vv}}
By Lemma~\ref{lemma:uniformbound4} we have
\begin{align*}
    H_{vv} &= \sup_{ \substack{ \ab \in S^{k-1}, \wb \in \cW_0\\ \vb, \vb'
    \in \cV_0}} \frac{\ab^\top [\gb_v(\wb,\vb) - \gb_v(\wb,\vb')] }{\| \vb - \vb' \|_2}\\
    &= \sup_{ \substack{ \ab \in S^{k-1}, \wb \in \cW_0\\ \vb, \vb'
    \in \cV_0}} -\frac{\ab^\top \bSigma^\top(\wb)\bSigma(\wb)(\vb -\vb') }{n\| \vb - \vb' \|_2}\\
    &=\sup_{ \substack{ \ab \in S^{k-1}, \wb \in \cW_0\\ \vb, \vb'
    \in \cV_0}} -\frac{1}{n} \sum_{i=1}^n \sum_{j=1}^k a_j \sigma(\wb^\top \Pb_j \xb_i) \cdot \sum_{j'=1}^k \frac{v_{j'} - v'_{j'}}{\|\vb - \vb'\|_2} \sigma(\wb^\top \Pb_{j'} \xb_i)\\
    &\leq C_6 L^2 k,
\end{align*}
where $C_6$ is an absolute constant. This completes the proof of Lemma~\ref{lemma:smoothness_all}.
\end{proof}

\subsection{Proof of Lemma~\ref{lemma:gradientuniformconcentration}}
We first introduce the following lemma.
\begin{lemma}\label{lemma:sigma(x)subgaussian} Let $z$ be a standard Gaussian random variable. Then under Assumption~\ref{assump:1Lipschitz}, $\sigma(z)$ is sub-Gaussian with
\begin{align*}
    \|\sigma(z)\|_{\psi_2} \leq CL \text{ and }\|\sigma(z) - \kappa \|_{\psi_2} \leq C\Gamma,
\end{align*}
where $C$ is an absolute constant, $L = 1+|\sigma(0)|$ and $\Gamma = 1 + |\sigma(0) - \kappa|$.
\end{lemma}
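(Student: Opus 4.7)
\textbf{Proof proposal for Lemma~\ref{lemma:sigma(x)subgaussian}.} The plan is to reduce both bounds to the elementary observation that a $1$-Lipschitz function of a Gaussian is (up to its value at a reference point) bounded by a scalar multiple of $|z|$ in absolute value, and then to invoke the triangle inequality for the Orlicz $\psi_2$-norm.

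\emph{Step 1: pointwise bounds.} First I would exploit Assumption~\ref{assump:1Lipschitz}. Since $\sigma$ is $1$-Lipschitz, for every $z \in \RR$ we have
\begin{align*}
|\sigma(z)| \leq |\sigma(0)| + |\sigma(z) - \sigma(0)| \leq |\sigma(0)| + |z| \leq L + |z|,
\end{align*}
and similarly
\begin{align*}
|\sigma(z) - \kappa| \leq |\sigma(z) - \sigma(0)| + |\sigma(0) - \kappa| \leq |z| + |\sigma(0) - \kappa| \leq |z| + \Gamma.
\end{align*}

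\emph{Step 2: $\psi_2$ bounds via the triangle inequality.} Next I would use the standard facts that (i) $\|\cdot\|_{\psi_2}$ is a norm, so $\|X+Y\|_{\psi_2} \leq \|X\|_{\psi_2} + \|Y\|_{\psi_2}$; (ii) a deterministic constant $c$ satisfies $\|c\|_{\psi_2} \leq C_0 |c|$ for some absolute constant $C_0$; and (iii) $\||z|\|_{\psi_2} \leq C_1$ for a standard Gaussian $z$, for an absolute constant $C_1$. Monotonicity of $\psi_2$ norms under pointwise domination (in absolute value) then yields
\begin{align*}
\|\sigma(z)\|_{\psi_2} \leq \|L\|_{\psi_2} + \||z|\|_{\psi_2} \leq C_0 L + C_1 \leq C L,
\end{align*}
where the last inequality uses $L \geq 1$ (since $L = 1 + |\sigma(0)|$), and an analogous computation using $\Gamma \geq 1$ gives $\|\sigma(z) - \kappa\|_{\psi_2} \leq C\Gamma$.

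\emph{Anticipated obstacle.} There is no real obstacle here: the argument is essentially three lines once one accepts the three standard $\psi_2$-norm facts above. The only thing worth being careful about is to make sure the constants $L$ and $\Gamma$ absorb the additive absolute constants coming from $\||z|\|_{\psi_2}$, which is why the lower bounds $L \geq 1$ and $\Gamma \geq 1$ (built into the definitions $L = 1 + |\sigma(0)|$, $\Gamma = 1 + |\sigma(0) - \kappa|$) are used. Note that Assumption~\ref{assump:nontrivialincreasing} is not needed for this lemma: only the Lipschitz property is invoked, which is consistent with the statement of the lemma only assuming Assumption~\ref{assump:1Lipschitz}.
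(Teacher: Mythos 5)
Your proposal is correct and follows essentially the same route as the paper: a pointwise bound $|\sigma(z)|\leq |z|+|\sigma(0)|$ (resp. $|\sigma(z)-\kappa|\leq |z|+|\sigma(0)-\kappa|$) from the Lipschitz assumption, followed by the triangle/Minkowski inequality for the $\psi_2$-norm, with $L,\Gamma\geq 1$ absorbing the absolute constant from $\||z|\|_{\psi_2}$. The paper just carries out Step 2 via the explicit moment definition $\|X\|_{\psi_2}=\sup_{p\geq 1}p^{-1/2}(\EE|X|^p)^{1/p}$ rather than citing the norm axioms abstractly, which is an immaterial difference.
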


\begin{proof}[Proof of Lemma~\ref{lemma:gradientuniformconcentration}]
By assumption, with probability at least $1-\delta/3$, the bounds given in Lemma~\ref{lemma:smoothness_all} all hold.  
let $\cN_1 = \cN[\cW_0, (kn)^{-1}]$, $\cN_2 = \cN[\cV_0, (kn)^{-1}]$ be $(kn)^{-1}$-nets covering $\cW_0$ and $\cV_0$ respectively. Then by the proof of Lemma~5.2 in \cite{vershynin2010introduction}, we have
\begin{align*}
    |\cN_1|\leq (3kn)^r,~|\cN_2|\leq (3kn)^k.
\end{align*}
For any $\wb\in \cW_0$ and $\vb\in \cV_0$, there exists $\hat{\wb}\in \cN_1$ and $\hat{\vb}\in \cN_2$ such that
\begin{align*}
    \|\wb - \hat{\wb}\|_2 \leq (kn)^{-1},~ \|\vb - \hat{\vb}\|_2 \leq (kn)^{-1}.
\end{align*}

\noindent\textbf{Proof of \eqref{eq:gradientuniformconcentration_w}}. By triangle inequality we have
\begin{align*}
    \|\gb_w(\wb,\vb) - \overline{\gb}_w(\wb,\vb) \|_2 \leq A_1 + A_2 + A_3,
\end{align*}
where
\begin{align*}
    &A_1 = \|\gb_w(\wb,\vb) - \gb_w(\hat{\wb},\hat{\vb}) \|_2,\\
    &A_2 = \|\gb_w(\hat{\wb},\hat{\vb}) -  \overline{\gb}_w(\hat{\wb},\hat{\vb}) \|_2,\\
    &A_3 = \|\overline{\gb}_w(\hat{\wb},\hat{\vb}) -  \overline{\gb}_w(\wb,\vb) \|_2.
\end{align*}
For $A_1$, we have
\begin{align}
A_1 &\leq \|\gb_w(\wb,\vb) - \gb_w(\hat{\wb},\vb) \|_2 + \|\gb_w(\hat{\wb},\vb) - \gb_w(\hat{\wb},\hat{\vb}) \|_2 \nonumber \\
&\leq C_1\xi^{-1} D_0^2\sqrt{k} \| \wb - \hat{\wb} \|_2 + C_2 \xi^{-1}(\nu + D_0L\sqrt{k}) \| \vb - \hat{\vb} \|_2\nonumber \\
&\leq C_3 \xi^{-1} (\nu + D_0^2 +  D_0L)\frac{1}{n\sqrt{k}},\label{eq:gradientuniformconcentrationproof1}
\end{align}
where $C_1$, $C_2$ and $C_3$ are absolute constants. For $A_2$, by direct calculation we have
\begin{align*}
    \overline{\gb}_w(\hat{\wb},\hat{\vb}) = \EE [\gb_w(\hat{\wb},\hat{\vb})].
\end{align*}
Let $\cN_3 = \cN(S^{r-1}, 1/2)$ be a $1/2$-net covering $S^{r-1}$. Then by Lemma~5.2 in \cite{vershynin2010introduction} we have $|\cN_3|\leq 5^r$. By definition, for any $\ab \in \cN_3$ we have
\begin{align*}
    \ab^\top \gb_w(\hat{\wb},\hat{\vb}) = -\frac{1}{n\xi} \sum_{i=1}^n [U_i^* - U_i  + \epsilon_i + \kappa \mathbf{1}^\top(\vb^*-\hat{\vb}) ] V_i,
\end{align*}
where
\begin{align*}
    U_i^* = \sum_{j=1}^k v_j^*[\sigma(\wb^{*\top}\Pb_j\xb_{i})- \kappa],~U_i = \sum_{j=1}^k \hat{v}_j[\sigma(\hat{\wb}^{\top} \Pb_j \xb_{i}) - \kappa],~ V_i = \sum_{j=1}^k \hat{v}_{j}\ab^\top \Pb_j\xb_{i}.
\end{align*}
By Lemma~\ref{lemma:sigma(x)subgaussian}, $\sigma(\wb^{*\top} \Pb_j\xb_{i}) -\kappa$ and $\sigma(\hat{\wb}^{\top} \Pb_j\xb_{i}) -\kappa$ are centered sub-Gaussian random variables with $\|\sigma(\wb^{*\top} \Pb_j\xb_{i}) -\kappa \|_{\psi_2}, \|\sigma(\hat{\wb}^{\top} \Pb_j\xb_{i})-\kappa\|_{\psi_2} \leq C_4\Gamma$ for some absolute constant $C_4$. Therefore by Lemma~5.9 in \cite{vershynin2010introduction}, we have $\| U_i^* \|_{\psi_2} \leq C_5 \| \vb^*\|_2 \Gamma$ and $\| U_i \|_{\psi_2} \leq C_5 D_0 \Gamma$, where $C_5$ is an absolute constant. Similarly, we have $\| V_i \|_{\psi_2} \leq C_6 D_0$ for some absolute constant $C_6$. Therefore, by Lemma~\ref{lemma:subgaussianproduct} we have
\begin{align*}
    \| [U_i^* - U_i  + \epsilon_i + \kappa \mathbf{1}^\top(\vb^*-\hat{\vb}) ] V_i \|_{\psi_1} &\leq C_7 D_0[(D_0 + \| \vb^*\|_2) \Gamma + M + \nu]\\
    &\leq C_8 D_0(D_0 \Gamma + M + \nu),
\end{align*}
where $C_7$ and $C_8$ are absolute constants. By Proposition~5.16 in \cite{vershynin2010introduction}, with probability at least $1-\delta/3$, we have
\begin{align*}
    |\ab^\top [\gb_w(\hat{\wb},\hat{\vb}) -  \overline{\gb}_w(\hat{\wb},\hat{\vb})]| \leq C_9 \xi^{-1} D_0(D_0 \Gamma + M + \nu) \sqrt{\frac{(r+k)\log(90nk/\delta)}{n}}
\end{align*}
for all $\hat{\wb} \in \cN_1$, $\hat{\vb} \in \cN_2$ and $\ab \in \cN_3$, where $C_9$ is an absolute constant. Therefore by Lemma~5.3 in \cite{vershynin2010introduction},  we have
\begin{align}\label{eq:gradientuniformconcentrationproof2}
    A_2 \leq  C_{10} \xi^{-1} D_0(D_0 \Gamma + M + \nu) \sqrt{\frac{(r+k)\log(90nk/\delta)}{n}}
\end{align}
for all $\hat{\wb} \in \cN_1$, $\hat{\vb} \in \cN_2$, where $C_{10}$ is an absolute constant. For $A_3$, by triangle inequality we have
\begin{align}
    A_3 &\leq \|\overline{\gb}_w(\hat{\wb},\hat{\vb}) -  \overline{\gb}_w(\wb,\hat{\vb}) \|_2 + \|\overline{\gb}_w(\wb,\hat{\vb}) -  \overline{\gb}_w(\wb,\vb) \|_2\nonumber \\
    &\leq \| \hat{\vb} \|_2^2 \| \wb - \hat{\wb} \|_2 + \big[\big|\| \vb \|_2^2 - \| \hat{\vb} \|_2^2\big| + \big|\vb^{*\top} (\vb - \hat{\vb})\big| \big]\nonumber \\
    &\leq D_0^2 \| \wb - \hat{\wb} \|_2 + 3D_0 \| \vb - \hat{\vb} \|_2\nonumber \\
    &\leq 4D_0(D_0+1) (nk)^{-1}.\label{eq:gradientuniformconcentrationproof3}
\end{align}
By \eqref{eq:gradientuniformconcentrationproof1}, \eqref{eq:gradientuniformconcentrationproof2}, \eqref{eq:gradientuniformconcentrationproof3}, and the assumptions on sample size $n$, we have
\begin{align*}
    \|\gb_w(\wb,\vb) - \overline{\gb}_w(\wb,\vb) \|_2 \leq  C_{11} \xi^{-1} D_0(D_0 \Gamma + M + \nu) \sqrt{\frac{(r+k)\log(90nk/\delta)}{n}},
\end{align*}
where $C_{11}$ is an absolute constant.

\noindent\textbf{Proof of \eqref{eq:gradientuniformconcentration_v}}. By triangle inequality we have
\begin{align*}
    \|\gb_v(\wb,\vb) - \overline{\gb}_v(\wb,\vb) \|_2 \leq B_1 + B_2 + B_3,
\end{align*}
where
\begin{align*}
    &B_1 = \|\gb_v(\wb,\vb) - \gb_v(\hat{\wb},\hat{\vb}) \|_2,\\
    &B_2 = \|\gb_v(\hat{\wb},\hat{\vb}) -  \overline{\gb}_v(\hat{\wb},\hat{\vb}) \|_2,\\
    &B_3 = \|\overline{\gb}_v(\hat{\wb},\hat{\vb}) -  \overline{\gb}_v(\wb,\vb) \|_2.
\end{align*}
For $B_1$, by Lemma~\ref{lemma:smoothness_all} we have
\begin{align}
    B_1 &\leq  \|\gb_v(\wb,\vb) - \gb_v(\hat{\wb},\vb) \|_2 + \|\gb_v(\hat{\wb},\vb) - \gb_v(\hat{\wb},\hat{\vb}) \|_2 \nonumber \\
    &\leq C_{12} ( \nu +  D_0 L \sqrt{k})\sqrt{k} \| \wb - \hat{\wb} \|_2 + C_{13} L^2 k \| \vb - \hat{\vb} \|_2 \nonumber \\
    &\leq C_{14} (\nu \sqrt{k} + D_0 L k + L^2 k)/(nk),\label{eq:gradientuniformconcentrationproof4}
\end{align}
where $C_{12}$, $C_{13}$ and $C_{14}$ are absolute constants. For $B_2$, by direct calculation we have
\begin{align*}
    \overline{\gb}_v(\hat{\wb},\hat{\vb})  =  \EE[\gb_v(\hat{\wb},\hat{\vb})].
\end{align*}
Let $\cN_4 = \cN(S^{k-1}, 1/2)$ be a $1/2$-net covering $S^{k-1}$. Then by Lemma~5.2 in \cite{vershynin2010introduction} we have $|\cN_4|\leq 5^k$. By definition, for any $\mathbf{b} \in \cN_4$ we have
\begin{align*}
    \mathbf{b}^\top \gb_v(\hat{\wb},\hat{\vb}) = - \frac{1}{n} \sum_{i=1}^n [U_i^* - U_i + \epsilon_i +  \kappa \mathbf{1}^\top(\vb^*-\hat{\vb}) ] (U_i' + \kappa \mathbf{1}^\top \mathbf{b}),
\end{align*}
where
\begin{align*}
    U_i^* = \sum_{j=1}^k v_j^*[\sigma(\wb^{*\top} \Pb_j \xb_i)- \kappa],~U_i = \sum_{j=1}^k \hat{v}_j^t[\sigma(\hat{\wb}^{\top}  \Pb_j \xb_i)- \kappa],~U_i' = \sum_{j=1}^k b_j[\sigma(\hat{\wb}^{\top} \Pb_j \xb_i)-\kappa].
\end{align*}
Similar to the proof of \eqref{eq:gradientuniformconcentration_w}, we have $\| U_i^* \|_{\psi_2} \leq C_{15} \| \vb^*\|_2 \Gamma$, $\| U_i \|_{\psi_2} \leq C_{15} D_0 \Gamma$, and $\| U_i' \|_{\psi_2} \leq C_{15} \Gamma$, where $C_{15}$ is an absolute constant.
Therefore by Lemma~\ref{lemma:subgaussianproduct}, we have
\begin{align*}
    \| [U_i^* - U_i + \epsilon_i +  \kappa \mathbf{1}^\top(\vb^*-\hat{\vb}) ] &(U_i' + \kappa \mathbf{1}^\top \mathbf{b}) \|_{\psi_2} \leq C_{16} (\Gamma + \kappa \sqrt{k}) [D_0 \Gamma + M + \nu],
\end{align*}
where $C_{16}$ is an absolute constant. By Proposition~5.16 in \cite{vershynin2010introduction}, with probability at least $1-\delta/4$ we have
\begin{align*}
    | \mathbf{b}^\top [\gb_v(\hat{\wb},\hat{\vb}) - \overline{\gb}_v(\hat{\wb},\hat{\vb}) ] | \leq C_{17} (\Gamma + \kappa \sqrt{k}) [D_0 \Gamma + M + \nu]\sqrt{\frac{(r+k)\log(90nk/\delta)}{n}}
\end{align*}
for all $\hat{\wb}\in \cN_1$, $\hat{\vb}\in \cN_2$ and $\mathbf{b}\in \cN_4$, where $C_{17}$ is an absolute constant. By Lemma~5.3 in \cite{vershynin2010introduction}, we have
\begin{align}
    \| \gb_v(\hat{\wb},\hat{\vb}) - \overline{\gb}_v(\hat{\wb},\hat{\vb}) \|_2 \leq C_{18}(\Gamma + \kappa \sqrt{k}) [D_0 \Gamma + M + \nu]\sqrt{\frac{(r+k)\log(90nk/\delta)}{n}}\label{eq:gradientuniformconcentrationproof5}
\end{align}
for all $\hat{\wb}\in \cN_1$, $\hat{\vb}\in \cN_2$, where $C_{18}$ is an absolute constant. For $B_3$, by definition and Lemma~\ref{lemma:philipschitz} we have
\begin{align}
    B_3 &\leq (\Delta + \kappa k) \| \vb - \hat{\vb} \|_2 + \| \vb^* \|_2 |\phi(\wb,\wb^*) - \phi(\hat{\wb},\wb^*)| \nonumber \\
    &\leq (\Delta + \kappa k) \| \vb - \hat{\vb} \|_2 + L\| \vb^* \|_2 \| \wb - \hat{\wb} \|_2 \nonumber \\
    &\leq (\Delta + \kappa k + D_0L) (nk)^{-1}. \label{eq:gradientuniformconcentrationproof6}
\end{align}
By \eqref{eq:gradientuniformconcentrationproof4}, \eqref{eq:gradientuniformconcentrationproof5}, \eqref{eq:gradientuniformconcentrationproof6} and the assumptions on the sample size $n$, we have
\begin{align*}
    \|\gb_v(\wb,\vb) - \overline{\gb}_v(\wb,\vb) \|_2 \leq C_{19}(\Gamma + \kappa \sqrt{k}) [D_0 \Gamma + M + \nu]\sqrt{\frac{(r+k)\log(90nk/\delta)}{n}},
\end{align*}
where $C_{19}$ is an absolute constant. This completes the proof of \eqref{eq:gradientuniformconcentration_v}.
\end{proof}

\subsection{Proof of Lemma~\ref{lemma:recursiveproperty}} 
We remind the readers that $L = 1+|\sigma(0)|$ and $\Gamma = 1 + |\sigma(0) - \kappa|$ are constants that only depends on the activation function. We introduce the following notations. Let
\begin{align*}
\begin{array}{ll}
    \overline{\gb}_w^t = \overline{\gb}_w(\wb^t,\vb^t), &\overline{\gb}_v^t = \overline{\gb}_v(\wb^t,\vb^t),\\
    \overline{\ub}^{t+1} = \wb^{t} - \alpha \overline{\gb}_w^t, & \overline{\vb}^{t+1} = \vb^{t} - \alpha \overline{\gb}_v^t.
\end{array}
\end{align*}
Then it is easy to see that for any fixed $\wb^t,\vb^t$, the vectors $\overline{\gb}_w^t,\overline{\gb}_v^t,\overline{\ub}^{t+1},\overline{\vb}^{t+1}$ defined above are expectations of $\gb_w^t,\gb_v^t,\ub^{t+1},\vb^{t+1}$ respectively.
We will also use the result of the following lemma.
\begin{lemma}\label{lemma:philipschitz}
Under Assumption~\ref{assump:1Lipschitz}, for any fixed unit vector $\wb$, $\phi(\wb,\cdot)$ is Lipschitz continuous with Lipschitz constant $L = 1+|\sigma(0)|$.
\end{lemma}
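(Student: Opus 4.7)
\textbf{Proof plan for Lemma~\ref{lemma:philipschitz}.}
The plan is to express the difference $\phi(\wb, \wb_1') - \phi(\wb, \wb_2')$ as a single expectation involving the centered random variable $\sigma(\wb^\top \zb) - \EE[\sigma(\wb^\top \zb)]$, apply Cauchy--Schwarz, and then exploit the 1-Lipschitz assumption on $\sigma$ twice: once to convert a difference of $\sigma$'s into a difference of linear forms, and once to obtain a constant bound on the $L^2$ norm of $\sigma(\wb^\top \zb)$.

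First, I would write, using the definition of covariance,
\begin{align*}
\phi(\wb, \wb_1') - \phi(\wb, \wb_2') = \EE\bigl[\bigl(\sigma(\wb^\top \zb) - \EE[\sigma(\wb^\top \zb)]\bigr)\bigl(\sigma(\wb_1'^\top \zb) - \sigma(\wb_2'^\top \zb)\bigr)\bigr],
\end{align*}
where the second $\sigma$ factor need not be centered because the first one already has mean zero. Applying Cauchy--Schwarz to the right-hand side yields
\begin{align*}
|\phi(\wb, \wb_1') - \phi(\wb, \wb_2')| \leq \sqrt{\mathrm{Var}[\sigma(\wb^\top \zb)]} \cdot \sqrt{\EE\bigl[(\sigma(\wb_1'^\top \zb) - \sigma(\wb_2'^\top \zb))^2\bigr]}.
\end{align*}
By Assumption~\ref{assump:1Lipschitz}, the second factor is bounded by $\sqrt{\EE[((\wb_1'-\wb_2')^\top\zb)^2]} = \|\wb_1' - \wb_2'\|_2$.

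For the first factor, since $\|\wb\|_2 = 1$ we have $\wb^\top \zb \sim N(0,1)$, so
\begin{align*}
\mathrm{Var}[\sigma(\wb^\top \zb)] \leq \EE[\sigma(\wb^\top \zb)^2] \leq \EE\bigl[(|\sigma(\wb^\top\zb) - \sigma(0)| + |\sigma(0)|)^2\bigr] \leq \EE\bigl[(|\wb^\top \zb| + |\sigma(0)|)^2\bigr],
\end{align*}
again by 1-Lipschitzness. Expanding and using $\EE[(\wb^\top\zb)^2] = 1$ and $\EE[|\wb^\top\zb|] \leq 1$ shows this is at most $(1+|\sigma(0)|)^2 = L^2$. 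Combining the two bounds gives $|\phi(\wb,\wb_1') - \phi(\wb,\wb_2')| \leq L\,\|\wb_1' - \wb_2'\|_2$, which is the claim.

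There is no real obstacle here; the only point requiring a bit of care is arranging the centering so that Cauchy--Schwarz produces a variance rather than a raw second moment, and then balancing the bound on the centered factor against the fact that $\sigma$ is not assumed bounded. Everything else is routine calculation from Assumption~\ref{assump:1Lipschitz} and Gaussianity of $\wb^\top \zb$.
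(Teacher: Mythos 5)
Your proof is correct, and it reaches the same constant $L=1+|\sigma(0)|$ by the same two uses of Assumption~\ref{assump:1Lipschitz} (linearizing $\sigma(\wb_1'^\top\zb)-\sigma(\wb_2'^\top\zb)$ and bounding $|\sigma(\wb^\top\zb)|$ by $|\wb^\top\zb|+|\sigma(0)|$), so it is essentially the paper's argument with a different arrangement of the final estimate. The paper pulls the absolute value inside the expectation, splits $|\sigma(\wb^\top\zb)|\le|\sigma(0)|+|\wb^\top\zb|$ pointwise, and then bounds $\EE|U_2|$ and $\EE[|U_1||U_2|]$ by $1$ term by term; you instead apply one global Cauchy--Schwarz, producing $\sqrt{\mathrm{Var}[\sigma(\wb^\top\zb)]}\cdot\sqrt{\EE[(\sigma(\wb_1'^\top\zb)-\sigma(\wb_2'^\top\zb))^2]}$ and bounding the variance by the second moment $\le L^2$. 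One small advantage of your route: the identity $\phi(\wb,\wb_1')-\phi(\wb,\wb_2')=\EE[(\sigma(\wb^\top\zb)-\EE\sigma(\wb^\top\zb))(\sigma(\wb_1'^\top\zb)-\sigma(\wb_2'^\top\zb))]$ is exact for arbitrary $\wb_1',\wb_2'$, whereas the paper's first line silently drops the product-of-means part of the covariance, which is only legitimate because $\wb_1,\wb_2$ are both unit vectors (so $\EE\sigma(\wb_1^\top\zb)=\EE\sigma(\wb_2^\top\zb)=\kappa$ and those terms cancel). Both proofs are equally elementary; yours is marginally cleaner on the centering and marginally more general in the second argument, while the paper's avoids introducing the variance altogether.
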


\begin{proof}[Proof of Lemma~\ref{lemma:recursiveproperty}]
We now list the proof of \eqref{eq:recursiveproperty1}-\eqref{eq:recursiveproperty4} as follows.

\noindent\textbf{Proof of \eqref{eq:recursiveproperty1}}. By the definition of $\overline{\ub}^{t+1}$ and $\overline{\gb}^{t}$ we have
\begin{align*}
    \overline{\ub}^{t+1} =  \wb^t - \alpha \overline{\gb}_w^{t} = (1-\alpha \|\vb^t\|_2^2) \wb^t + \alpha (\vb^{*\top} \vb^t) \wb^*.
\end{align*}
The equation above implies that when $1-\alpha \|\vb^t\|_2^2 \geq 0$, $\overline{\ub}^{t+1}$ is in the cone spanned by $\wb^t$ and $\wb^*$. To simplify notation, we define $\hat{\ub} = \overline{\ub}^{t+1} / \| \overline{\ub}^{t+1} \|_2$. By $1-\alpha \|\vb^t\|_2^2 < 1$ and $\alpha (\vb^{*\top} \vb^t) \geq \alpha\rho$, we have
\begin{align}\label{eq:recursivepropertyproof1}
    \wb^{*\top}\hat{\ub} \geq \wb^{*\top} \frac{ \alpha\rho \wb^* + \wb^t }{ \| \alpha\rho \wb^* + \wb^t \|_2 } 
    = \frac{ \alpha\rho + \wb^{*\top}\wb^t }{ \| \alpha\rho \wb^* + \wb^t \|_2 } 
    \geq \frac{ \alpha\rho + \wb^{*\top}\wb^t }{ 1 + \alpha\rho }.
\end{align}
The first inequality in \eqref{eq:recursivepropertyproof1} is further explained in Figure~\ref{fig:wboundexplanation}.
\begin{figure}[t]
	\begin{center}
		\includegraphics[width=.6\textwidth,angle=0]{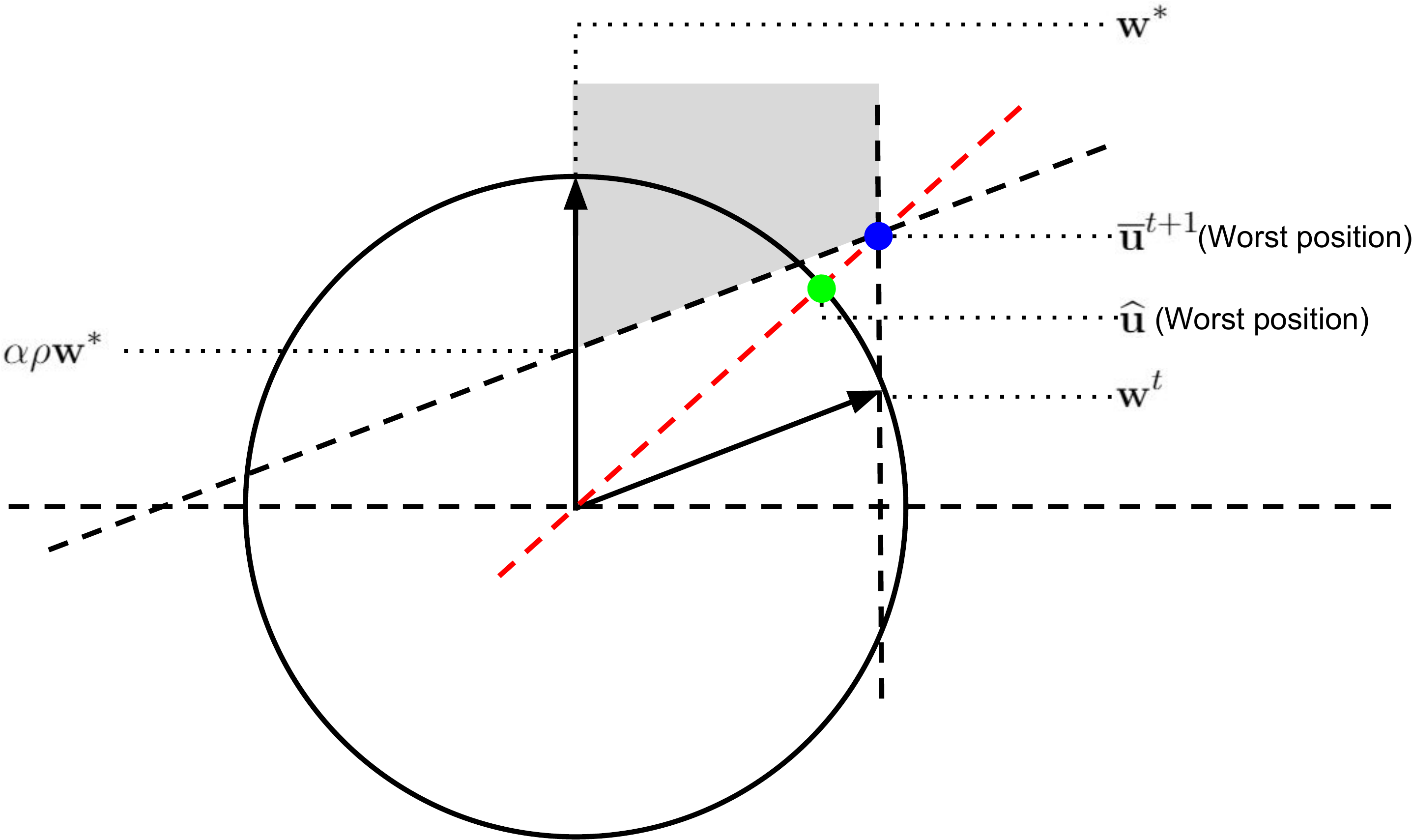}
	\end{center}
	\vskip-15pt
	\caption{Explanation of \eqref{eq:recursivepropertyproof1}. The two arrows denotes $\wb^*$ and $\wb^t$. The gray area shows all possible positions of $a \wb^* + b \wb^t$ with $a\geq \alpha\rho$ and $0 \leq b \leq 1$. We use blue and green dots to represent the worst case of $\overline{\ub}^{t+1}$ and $\hat{\ub}$ respectively. The first inequality in \eqref{eq:recursivepropertyproof1} is then easily obtained by replacing $\hat{\ub}$ with its worst case value, which is $\frac{ \alpha\rho \wb^* + \wb^t }{ \| \alpha\rho \wb^* + \wb^t \|_2}$.}
	\label{fig:wboundexplanation}
	\vspace{-10pt}
\end{figure}
Moreover, by Lemma~\ref{lemma:gradientuniformconcentration} we have
\begin{align*}
    \| \ub^{t+1} - \overline{\ub}^{t+1} \|_2 = \alpha \| \gb_w^{t} - \overline{\gb}_w^{t} \|_2 \leq \alpha \eta_w.
\end{align*}
By Lemma~\ref{lemma:distance2innerproduct}, we have
\begin{align*}
    \| \wb^{t+1} - \hat{\ub} \|_2 \leq \frac{2\alpha \eta_w}{ \| \overline{\ub}^{t+1}\|_2 }.
\end{align*}
By $\wb^{*\top}\wb^t>0$, we have
\begin{align*}
    \| \overline{\ub}^{t+1} \|_2 = \big\|  (1-\alpha \|\vb^t\|_2^2) \wb^t + \alpha \vb^{t\top} \vb^*\wb^* \big\|_2 \geq 1-\alpha \|\vb^t\|_2^2 \geq 1 - \alpha D_0^2 \geq \frac{1}{2}.
\end{align*}
Therefore,
\begin{align}\label{eq:recursivepropertyproof2}
    \| \wb^{t+1} - \hat{\ub} \|_2 \leq 4\alpha \eta_w.
\end{align}
Since $\wb^*$, $\wb^t$ and $\hat{u}$ are all unit vectors, by \eqref{eq:recursivepropertyproof1} we have
\begin{align*}
    1 -\frac{1}{2} \|\wb^{*} - \hat{\ub}\|_2^2 \geq  \frac{ \alpha\rho }{ 1 + \alpha\rho  }  +  \frac{ 1}{ 1 + \alpha\rho } \bigg( 1 - \frac{1}{2} \|\wb^{*\top } - \wb^t\|_2^2 \bigg).
\end{align*}
Rearranging terms gives 
\begin{align*}
    \| \hat{\ub} - \wb^* \|_2 \leq \frac{1}{\sqrt{1+\alpha \rho }} \| \wb^t -\wb^* \|_2.
\end{align*}
By \eqref{eq:recursivepropertyproof2} we have
\begin{align*}
    \| \wb^{t+1} - \wb^* \|_2 \leq \frac{1}{\sqrt{1+\alpha \rho }} \| \wb^t -\wb^* \|_2 + 4\alpha \eta_w \leq \frac{1}{\sqrt{1+\alpha \rho }} \| \wb^t -\wb^* \|_2 + \frac{8\alpha\sqrt{1+\alpha\rho}}{1 + \sqrt{1+\alpha\rho}} \eta_w.
\end{align*}
Rearranging terms again, we obtain
\begin{align*}
    \| \wb^{t+1} - \wb^* \|_2 - 8\rho^{-1}(1+\alpha\rho)\eta_w \leq \frac{1}{\sqrt{1+\alpha \rho }} [\| \wb^t -\wb^* \|_2 - 8\rho^{-1}(1+\alpha\rho)\eta_w].
\end{align*}
This completes the proof of \eqref{eq:recursiveproperty1}.

\noindent \textbf{Proof of \eqref{eq:recursiveproperty2}}. By the definition of $\overline{\gb}_v^{t}$ we have
\begin{align*}
    \mathbf{1}^\top \overline{\gb}_v^{t} &= \mathbf{1}^\top \{(\Delta \Ib  + \kappa^2 \mathbf{1} \mathbf{1}^\top )\vb^t - [\phi(\wb^t,\wb^*) \Ib  + \kappa^2 \mathbf{1} \mathbf{1}^\top ]\vb^*\} \\ 
    & = (\Delta + \kappa^2k) \mathbf{1}^\top\vb^t - [\phi(\wb^t,\wb^*)  + \kappa^2 k  ] \mathbf{1}^\top\vb^*\\
    & = (\Delta + \kappa^2k) \mathbf{1}^\top(\vb^t - \vb^*) + [\Delta - \phi(\wb^t,\wb^*)]\mathbf{1}^\top\vb^*.
\end{align*}
Therefore,
\begin{align*}
    \mathbf{1}^\top (\overline{\vb}^{t+1} - \vb^*) = [1 - \alpha (\Delta + \kappa^2k)]\mathbf{1}^\top (\vb^{t} - \vb^*) - \alpha [\Delta - \phi(\wb^t,\wb^*)]\mathbf{1}^\top\vb^*.
\end{align*}
By Lemma~\ref{lemma:gradientuniformconcentration}, 
$
|\mathbf{1}^\top (\vb^{t+1} - \vb^t)| \leq \alpha |\mathbf{1}^\top (\gb^{t} - \overline{\gb}^t)| \leq \alpha \sqrt{k} \| \gb^{t} - \overline{\gb}^t \|_2 \leq \alpha \sqrt{k} \eta_v
$. 
Therefore by triangle inequality we have
\begin{align*}
    |\mathbf{1}^\top (\vb^{t+1} - \vb^*)| \leq [1 - \alpha (\Delta + \kappa^2k)]|\mathbf{1}^\top (\vb^{t} - \vb^*)| + \alpha |\Delta - \phi(\wb^t,\wb^*)|\cdot |\mathbf{1}^\top\vb^*| + \alpha \sqrt{k} \eta_v.
\end{align*}
By lemma~\ref{lemma:philipschitz}, we have
\begin{align*}
    |\Delta - \phi(\wb^t,\wb^*)| = |\phi(\wb^*,\wb^*) - \phi(\wb^t,\wb^*)| \leq L\| \wb^t - \wb^* \|_2.
\end{align*}
Therefore,
\begin{align}\label{eq:recursivepropertyproof3}
    |\mathbf{1}^\top (\vb^{t+1} - \vb^*)| \leq [1 - \alpha (\Delta + \kappa^2k)]|\mathbf{1}^\top (\vb^{t} - \vb^*)| + \alpha L\| \wb^t - \wb^* \|_2 |\mathbf{1}^\top\vb^*| + \alpha \sqrt{k} \eta_v.
\end{align}
This completes the proof of \eqref{eq:recursiveproperty2}.

\noindent\textbf{Proof of \eqref{eq:recursiveproperty3}}. 
By the definition of $\overline{\gb}_v^{t}$, we have
\begin{align}
    \overline{\gb}_v^{t} &= (\Delta \Ib  + \kappa^2 \mathbf{1} \mathbf{1}^\top )\vb^t - [\phi(\wb^t,\wb^*) \Ib  + \kappa^2 \mathbf{1} \mathbf{1}^\top ]\vb^*\nonumber \\
    & = \Delta \vb^t - \phi(\wb^t,\wb^*) \vb^*  + \kappa^2 \mathbf{1} \mathbf{1}^\top (\vb^t - \vb^* ) \nonumber \\
    & = \Delta (\vb^t - \vb^*) + [\Delta -  \phi(\wb^t,\wb^*)] \vb^*  + \kappa^2 \mathbf{1} \mathbf{1}^\top (\vb^t - \vb^* ).\label{eq:recursivepropertyproof4}
\end{align}
Therefore
\begin{align*}
    (\vb^t - \vb^* )^\top \overline{\gb}_v^{t} & \geq \Delta \| \vb^t - \vb^* \|_2^2 +  [\Delta -  \phi(\wb^t,\wb^*)] (\vb^t - \vb^* )^\top \vb^* \\
    & \geq \Delta \| \vb^t - \vb^* \|_2^2  -  |\Delta -  \phi(\wb^t,\wb^*)|\cdot \|\vb^t - \vb^* \|_2 \|\vb^*\|_2.
\end{align*}
By Lemma~\ref{lemma:philipschitz}, we have
\begin{align*}
    |\Delta -  \phi(\wb^t,\wb^*)| = |\phi(\wb^*,\wb^*) - \phi(\wb^t,\wb^*)| \leq L \|\wb^* - \wb^t\|_2.
\end{align*}
Therefore,
\begin{align*}
    (\vb^t - \vb^* )^\top \overline{\gb}_v^{t} & \geq \Delta \| \vb^t - \vb^* \|_2^2  -  L\|\wb^{*} - \wb^t\|_2 \|\vb^t - \vb^* \|_2 \|\vb^*\|_2 \\
    &\geq \Delta \| \vb^t - \vb^* \|_2^2 - \frac{1}{2} \bigg( \frac{L^2}{\Delta}\|\vb^*\|_2^2 \|\wb^{*} - \wb^t\|_2^2 + \Delta  \|\vb^t - \vb^* \|_2^2 \bigg) \\ 
    &\geq \frac{\Delta}{2} \| \vb^t - \vb^* \|_2^2 - \frac{L^2}{2\Delta} \|\vb^*\|_2^2 \|\wb^{*} - \wb^t\|_2^2.
\end{align*}
By Lemma~\ref{lemma:gradientuniformconcentration}, we have
\begin{align*}
    (\vb^t - \vb^* )^\top \gb_v^t &\geq \frac{\Delta}{2} \| \vb^t - \vb^* \|_2^2 - \frac{L^2}{2\Delta} \|\vb^*\|_2^2 \|\wb^{*} - \wb^t\|_2^2 - \|\vb^t - \vb^* \|_2 \eta_v\\
    &\geq \frac{\Delta}{4} \| \vb^t - \vb^* \|_2^2 - \frac{L^2}{2\Delta} \|\vb^*\|_2^2 \|\wb^{*} - \wb^t\|_2^2 - \frac{1}{\Delta}\eta_v^2
\end{align*}
Moreover, by \eqref{eq:recursivepropertyproof4} we have
\begin{align*}
    \| \overline{\gb}_v^{t} \|_2 \leq \Delta \|\vb^t - \vb^*\|_2 + L  \|\vb^* \|_2 \|\wb^t -\wb^* \|_2 + \kappa^2 \sqrt{k} | \mathbf{1}^\top (\vb^t - \vb^*) |.
\end{align*}
By Lemma~\ref{lemma:gradientuniformconcentration}, we have
\begin{align*}
    \| \gb_v^t \|_2 \leq \Delta \|\vb^t - \vb^*\|_2 + L \|\vb^* \|_2 \|\wb^t -\wb^* \|_2 + \kappa^2 \sqrt{k} | \mathbf{1}^\top (\vb^t - \vb^*) | + \eta_v,
\end{align*}
Therefore,
\begin{align*}
    \| \vb^{t+1} -\vb^* \|_2^2 &= \| \vb^{t} - \alpha \gb_v^t -\vb^* \|_2^2\\
    & = \| \vb^{t} -\vb^* \|_2^2 - 2\alpha (\vb^t - \vb^* )^\top \gb_v^t +\alpha^2 \| \gb_v^t \|_2^2.
\end{align*}
Plugging in the previous inequalities gives
\begin{align*}
    \| \vb^{t+1} -\vb^* \|_2^2 \leq & ( 1 - \alpha \Delta + 4\alpha^2\Delta^2 ) \| \vb^{t} -\vb^* \|_2^2 + \bigg(\frac{\alpha L^2}{\Delta} + 4\alpha^2L^2 \bigg)  \|\vb^*\|_2^2 \|\wb^{*} - \wb^t\|_2^2\\
    & + 4\alpha^2\kappa^4 k [ \mathbf{1}^\top (\vb^t - \vb^*) ]^2 + \bigg( \frac{2\alpha}{\Delta} + 4 \alpha^2 \bigg) \eta_v^2.
\end{align*}
This completes the proof of \eqref{eq:recursiveproperty3}.

\noindent \textbf{Proof of \eqref{eq:recursiveproperty4}}. We first check $\wb^{t+1}\in \cW$. By \eqref{eq:recursivepropertyproof1} and \eqref{eq:recursivepropertyproof2} we have
\begin{align*}
    \wb^{*\top}\wb^{t+1} \geq  \frac{\alpha \rho + \wb^{*\top}\wb^t }{ 1 + \alpha\rho  } -  4\alpha \eta_w.
\end{align*}
Since $\wb^t\in \cW$, we have $\wb^{*\top} \wb^0 \leq 2\wb^{*\top} \wb^t$. Moreover, by $ \wb^{*\top} \wb^0 \leq 1$, we have
\begin{align*} 
\wb^{*\top}\wb^{t+1}\geq  \frac{\alpha \rho }{ 1 + \alpha\rho  }\wb^{*\top} \wb^0 + \frac{1}{2}\frac{ 1 }{ 1 + \alpha\rho }\wb^{*\top}\wb^0 -  4\alpha \eta_w \geq \frac{1}{2}\wb^{*\top}\wb^0.
\end{align*}
where the last inequality follows by the assumption that
\begin{align*}
    4\alpha \eta_w \leq \frac{1}{2} \frac{\alpha \rho }{ 1 + \alpha\rho  }\wb^{*\top} \wb^0.
\end{align*}
Since $\|\wb^{t+1}\|_2 = 1$, we have $\wb^{t+1}\in \cW$. For $\vb^{t+1}$, since $\vb^t\in \cV$, by \eqref{eq:recursiveproperty3} and the definition of $M$ and $D$ we have
\begin{align*}
    \|\vb^{t+1} - \vb^*\|_2^2 &\leq (1-\alpha\Delta + 4\alpha^2 \Delta^2) \|\vb^{t} - \vb^*\|_2^2 + \alpha(\Delta^{-1} + 4\alpha) L^2 \|\vb^*\|_2^2\cdot 4\\ 
    &\qquad + 4\alpha^2 \kappa^2 M^2 k  + 2\alpha (\Delta^{-1} + 2\alpha)\\
    &\leq  (1-\alpha\Delta + 4\alpha^2 \Delta^2) D^2 + \alpha\Delta(1 - 4\alpha \Delta)D^2\\
    & = D^2.
\end{align*}
Therefore we have $\|\vb^{t+1} - \vb^*\|_2 \leq D$. Since $\vb^t \in \cV$, by \eqref{eq:recursivepropertyproof3} and the definition of $M$ we have
\begin{align*}
    |\kappa\mathbf{1}^\top (\vb^{t+1} - \vb^*)| &\leq [1 - \alpha (\Delta + \kappa^2k)]|\kappa\mathbf{1}^\top (\vb^{t} - \vb^*)| + \alpha|\kappa| L \| \wb^t - \wb^* \|_2 |\mathbf{1}^\top\vb^*| + \alpha|\kappa| \sqrt{k} \eta_v\\
    & \leq [1 - \alpha (\Delta + \kappa^2k)] M + 2\alpha |\kappa| L |\mathbf{1}^\top\vb^*| + \alpha|\kappa| \sqrt{k}\\
    & \leq [1 - \alpha (\Delta + \kappa^2k)] M  + \alpha (\Delta + \kappa^2k)  M\\
    & = M.
\end{align*}
We now check $\vb^{*\top}\vb^{t+1}\geq \rho$. By assumption we have $\kappa^2 (\mathbf{1}^\top \vb^*)\mathbf{1}^\top(\vb^t - \vb^*) \leq \rho$. Therefore by definition we have
\begin{align*}
    \vb^{*\top}\overline{\gb}_v^{t} &= \Delta \vb^{*\top} \vb^t + \kappa^2 (\mathbf{1}^\top \vb^*) (\mathbf{1}^\top \vb^t) - \psi(\wb^{*\top}\wb^t) \|\vb^{*}\|_2^2  - \kappa^2 (\mathbf{1}^\top \vb^*)^2\\
    &\leq \Delta \vb^{*\top} \vb^t - \psi(\wb^{*\top}\wb^t ) \|\vb^{*}\|_2^2 + \rho.
\end{align*}
By Lemma~\ref{lemma:psidefinition}, $\psi(\tau)$ is an increasing function. Therefore,
\begin{align*}
    \vb^{*\top}\overline{\vb}^{t+1} &= \vb^{*\top} ( \vb^t - \alpha \overline{\gb}_v^{t}) \\ 
    &\geq  \vb^{*\top}\vb^t -\alpha [\Delta \vb^{*\top} \vb^t - \psi(\wb^{*\top}\wb^t ) \|\vb^{*}\|_2^2 + \rho ] \\
    &\geq (1-\alpha\Delta) \vb^{*\top}\vb^t + \alpha \psi(\wb^{*\top}\wb^0/2 ) \|\vb^{*}\|_2^2 -\alpha\rho\\
    &\geq (1-\alpha\Delta) \rho + \alpha \psi(\wb^{*\top}\wb^0/2 ) \|\vb^{*}\|_2^2 -\alpha\rho.
\end{align*}
By the definition of $\rho$, we have
\begin{align*}
    \psi(\wb^{*\top}\wb^0/2 ) \|\vb^{*}\|_2^2 \geq (2 + \Delta) \rho.
\end{align*}
Therefore we have
\begin{align*}
    \vb^{*\top}\overline{\vb}^{t+1} \geq (1-\alpha\Delta) \rho + \alpha (2+\Delta) \rho - \alpha\rho = (1+\alpha)\rho.
\end{align*}
Moreover,
\begin{align*}
    | \vb^{*\top}\vb^{t+1} - \vb^{*\top}\overline{\vb}^{t+1} | \leq \alpha\|\vb^*\|_2\eta_v.
\end{align*}
By the assumptions on $n$ we have $\|\vb^*\|_2\eta_v \leq \rho$. Therefore 
\begin{align*}
    \vb^{*\top}\vb^{t+1} \geq (1+\alpha)\rho - \alpha \|\vb^*\|_2 \eta_v \geq \rho.
\end{align*}
Finally, we check $\kappa^2 (\mathbf{1}^\top \vb^*)\mathbf{1}^\top(\vb^{t+1} - \vb^*) \leq \rho$. By definition we have
\begin{align*}
    \mathbf{1}^\top \overline{\gb}_v^{t} &= \mathbf{1}^\top \{(\Delta \Ib  + \kappa^2 \mathbf{1} \mathbf{1}^\top )\vb^t - [\phi(\wb^t,\wb^*) \Ib  + \kappa^2 \mathbf{1} \mathbf{1}^\top ]\vb^*\} \\ 
    & = (\Delta + \kappa^2k) \mathbf{1}^\top\vb^t - [\phi(\wb^t,\wb^*)  + \kappa^2 k  ] \mathbf{1}^\top\vb^*.
\end{align*}
By Lemma~\ref{lemma:psidefinition}, we have $\phi(\wb^t,\wb^*) \leq \Delta$. Therefore,
\begin{align*}
    (\mathbf{1}^\top\vb^*)(\mathbf{1}^\top \overline{\gb}_v^{t})
    & = (\Delta + \kappa^2k) (\mathbf{1}^\top\vb^*) (\mathbf{1}^\top\vb^t) - [\phi(\wb^t,\wb^*)  + \kappa^2 k  ] (\mathbf{1}^\top\vb^*)^2\\
    &\geq  (\Delta + \kappa^2k) (\mathbf{1}^\top \vb^{*})\mathbf{1}^\top (\vb^{t} - \vb^{*}),
\end{align*}
and
\begin{align*}
    (\mathbf{1}^\top \vb^{*})\mathbf{1}^\top(\overline{\vb}^{t+1} - \vb^*) &\leq (\mathbf{1}^\top \vb^{*})\mathbf{1}^\top(\vb^{t} - \vb^*) - \alpha (\Delta + \kappa^2k) (\mathbf{1}^\top \vb^{*})\mathbf{1}^\top (\vb^{t} - \vb^{*})\\
    & = [1 - \alpha (\Delta + \kappa^2k)](\mathbf{1}^\top \vb^{*})\mathbf{1}^\top (\vb^{t} - \vb^{*}).
\end{align*}
Since $\vb^t \in \cV$, by Lemma~\ref{lemma:gradientuniformconcentration}, when $1- \alpha  (\Delta + \kappa^2k) \geq 0$ we have
\begin{align*}
    \kappa^2(\mathbf{1}^\top \vb^{*})\mathbf{1}^\top (\vb^{t+1} - \vb^{*}) &\leq [1 - \alpha (\Delta + \kappa^2k)][\kappa^2(\mathbf{1}^\top \vb^{*})\mathbf{1}^\top (\vb^{t} - \vb^{*})] + \alpha \kappa^2|\mathbf{1}^\top \vb^{*}|\sqrt{k}\eta_v\\
    &\leq  [1 - \alpha (\Delta + \kappa^2k)]\rho  + \alpha \kappa^2|\mathbf{1}^\top \vb^{*}|\sqrt{k}\eta_v\\
    &\leq [1 - \alpha (\Delta + \kappa^2k)]\rho +\alpha (\Delta + \kappa^2k)\cdot \frac{\kappa^2|\mathbf{1}^\top \vb^{*}|\sqrt{k}}{\Delta + \kappa^2k}\cdot \eta_v\\
    &\leq [1 - \alpha (\Delta + \kappa^2k)]\rho +\alpha (\Delta + \kappa^2k) \cdot \frac{|\mathbf{1}^\top \vb^{*}|}{\sqrt{k}}\cdot \eta_v.
\end{align*}
By the assumption on $n$ we have $|\mathbf{1}^\top \vb^{*}| k^{-1/2} \eta_v \leq \rho$. Plugging it into the inequality above gives
\begin{align*}
    \kappa^2(\mathbf{1}^\top \vb^{*})\mathbf{1}^\top (\vb^{t+1} - \vb^{*}) \leq \rho.
\end{align*}
Therefore we have $(\wb^{t+1},\vb^{t+1}) \in \cW \times \cV$.
\end{proof}

\subsection{Proof of Lemma~\ref{lemma:recursive_to_explicit}}
The following auxiliary lemma plays a key role in converting the recursive bounds to explicit bounds.
\begin{lemma}\label{lemma:recursivebound1}
Let $a,b \in (0,1)$, $c_1,c_2 >0$ be constants. If sequences $\{u_t \}_{t\geq 0}$, $\{v_t \}_{t\geq 0}$ satisfies
\begin{align*}
    u_{t+1} \leq au_t + c_1 b^{t} + c_2,~v_{t+1} \leq av_t + c_1 t^2 b^{t} + c_2
\end{align*}
then it holds that
\begin{align*}
    u_t \leq a^t u_0 + c_1 t(a\lor b)^{t-1} + c_2\frac{1}{1-a}, ~v_t \leq a^t v_0 +  \frac{c_1}{3}t^3 (a\lor b)^{t-1} + c_2\frac{1}{1-a}.
\end{align*}
\end{lemma}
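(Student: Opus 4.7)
The plan is to prove both bounds by a straightforward induction on $t$, exploiting the elementary observation that $a^{t-1-s} b^s \leq (a \lor b)^{t-1}$ for any $0 \leq s \leq t-1$. Both statements share the same structure, so I would handle $u_t$ first and then adapt the argument to $v_t$ by tracking the extra $s^2$ factor in the driving term. An alternative would be to unroll the recursion explicitly as
\[
u_t \leq a^t u_0 + c_1 \sum_{s=0}^{t-1} a^{t-1-s} b^s + c_2 \sum_{s=0}^{t-1} a^s,
\]
and use the two key bounds $\sum_{s=0}^{t-1} a^{t-1-s} b^s \leq t(a\lor b)^{t-1}$ and $\sum_{s=0}^{t-1} a^s \leq 1/(1-a)$; this gives the result in one stroke but the induction is cleaner to present.

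For the $u_t$ statement, the base case $t=0$ is trivial since the right-hand side exceeds $u_0$. For the inductive step, assuming $u_t \leq a^t u_0 + c_1 t (a \lor b)^{t-1} + c_2/(1-a)$, the recursion yields
\[
u_{t+1} \leq a^{t+1} u_0 + c_1\bigl[a\, t (a \lor b)^{t-1} + b^t\bigr] + c_2\Bigl[\tfrac{a}{1-a} + 1\Bigr].
\]
The last bracket collapses to $1/(1-a)$. Writing $M = a \lor b$, I would bound the middle bracket by $tM^t + M^t = (t+1)M^t$ using $a \leq M$ and $b \leq M$ separately, which is exactly what is needed to close the induction.

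For the $v_t$ statement the same induction gives
\[
v_{t+1} \leq a^{t+1} v_0 + c_1\bigl[\tfrac{a}{3} t^3 (a \lor b)^{t-1} + t^2 b^t\bigr] + \tfrac{c_2}{1-a},
\]
and I would bound the middle bracket by $M^t\bigl(t^3/3 + t^2\bigr)$ by the same substitution. The elementary inequality $t^3/3 + t^2 \leq (t+1)^3/3$, which follows from expanding $(t+1)^3 = t^3 + 3t^2 + 3t + 1$, then closes the induction and produces the claimed coefficient $1/3$.

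There is no real obstacle; the only care required is in verifying the polynomial inequality with the exact coefficient $1/3$ rather than something larger, and in keeping the $b^t$-versus-$t^2 b^t$ distinction between the two recursions straight. The lemma is essentially a codification of the standard convolution bounds that arise when unrolling a linear recurrence driven by $c_1 p(s) b^s + c_2$ with $p(s)\in\{1, s^2\}$, so the induction route I described is the most economical way to record the two resulting estimates simultaneously.
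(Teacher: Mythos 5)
Your proof is correct and is essentially the same argument as the paper's: the paper unrolls the recursion explicitly and bounds $\sum_{s=0}^{t-1}a^{t-1-s}b^s \le t(a\lor b)^{t-1}$ and $\sum_{s=0}^{t-1}s^2a^{t-1-s}b^s \le \tfrac{t(t-1)(2t-1)}{6}(a\lor b)^{t-1}\le \tfrac{t^3}{3}(a\lor b)^{t-1}$, which is exactly the unrolled form you mention as an alternative and is equivalent to your induction. Your inductive verification, including the step $t^3/3+t^2\le (t+1)^3/3$, checks out.
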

The proof of Lemma~\ref{lemma:recursivebound1} 
is given in Section~\ref{section:auxiliarylemmas} in appendix. We can now apply Lemma~\ref{lemma:recursivebound1} to the recursive bounds \eqref{eq:recursiveproperty1}, \eqref{eq:recursiveproperty2} and \eqref{eq:recursiveproperty3}.


\begin{proof}[Proof of Lemma~\ref{lemma:recursive_to_explicit}]
The first convergence result for $\wb^t$ directly follows by Lemma~\ref{lemma:recursiveproperty} and \eqref{eq:recursiveproperty1}. To prove \eqref{eq:convergence_vt}, we first derive the convergence rate of $|\mathbf{1}^\top (\vb^{t} - \vb^*)|$. By \eqref{eq:convergence_wt} and Lemma~\ref{lemma:recursiveproperty}, we have
\begin{align*}
    |\mathbf{1}^\top (\vb^{t+1} - \vb^*)| &\leq \gamma_3|\mathbf{1}^\top (\vb^{t} - \vb^*)| + \alpha L |\mathbf{1}^\top\vb^*| \gamma_1^t \| \wb^0 -\wb^* \|_2  + 8\alpha L |\mathbf{1}^\top\vb^*|\rho^{-1}\gamma_1\eta_w \\
    &\quad + \alpha \sqrt{k} \eta_v.
\end{align*}
By Lemma~\ref{lemma:recursivebound1}, we have
\begin{align*}
    |\mathbf{1}^\top (\vb^{t} - \vb^*)| &\leq \gamma_3^t |\mathbf{1}^\top (\vb^{0} - \vb^*)| + t(\gamma_1\lor \gamma_3)^{t} \alpha L |\mathbf{1}^\top\vb^*|  \| \wb^0 -\wb^* \|_2 \\
    &\quad + \frac{8 L |\mathbf{1}^\top\vb^*|\rho^{-1}\gamma_1\eta_w  +  \sqrt{k} \eta_v}{\Delta + \kappa^2k}.
\end{align*}
Therefore, by Lemma~\ref{lemma:recursiveproperty} we have
\begin{align*}
    \| \vb^{t+1} -\vb^* \|_2^2 &\leq \gamma_2^2 \| \vb^{t} -\vb^* \|_2^2 + 2(\Delta^{-1} + 4\alpha )\alpha L^2  \|\vb^*\|_2^2 (\gamma_1^{2t} \| \wb^0 -\wb^* \|_2^2  + 64\rho^{-2}\gamma_1^{-4}\eta_w^2) \nonumber \\
    &\quad  + 12\alpha^2 \kappa^4 k \Bigg\{  
    \gamma_3^{2t} |\mathbf{1}^\top (\vb^{0} - \vb^*)|^2 + t^2(\gamma_1\lor \gamma_3)^{2t} \alpha^2 L^2 |\mathbf{1}^\top\vb^*|^2  \| \wb^0 -\wb^* \|_2^2 \\
    & \quad + \bigg(\frac{8 L |\mathbf{1}^\top\vb^*|\rho^{-1}\gamma_1\eta_w +  \sqrt{k} \eta_v}{\Delta + \kappa^2k} \bigg)^2
    \Bigg\}\\
    &\quad+ \bigg( \frac{2\alpha}{\Delta} + 4 \alpha^2 \bigg) \eta_v^2\\
    &\leq \gamma_2^2 \| \vb^{t} -\vb^* \|_2^2 + R_1' t^2(\gamma_1\lor \gamma_3)^{2t}    + R_2'.
\end{align*}
where
\begin{align*}
    R_1' &= 2(\Delta^{-1} + 4\alpha )\alpha L^2  \|\vb^*\|_2^2 \| \wb^0 -\wb^* \|_2^2 + 12\alpha^2 \kappa^4 k [|\mathbf{1}^\top (\vb^{0} - \vb^*)|^2 \\
    &\quad+ \alpha^2 L^2 |\mathbf{1}^\top\vb^*|^2  \| \wb^0 -\wb^* \|_2^2],\\
    R_2' &= 128(\Delta^{-1} + 4\alpha )\alpha L^2  \|\vb^*\|_2^2 \rho^{-2}\gamma_1^{-4}\eta_w^2 + 12\alpha^2 \kappa^4 k \bigg(\frac{8 L |\mathbf{1}^\top\vb^*|\rho^{-1}\gamma_1\eta_w +  \sqrt{k} \eta_v}{\Delta + \kappa^2k} \bigg)^2 \\
    &\quad+ \bigg( \frac{2\alpha}{\Delta} + 4 \alpha^2 \bigg) \eta_v^2.
\end{align*}
We can then further apply the second bound in Lemma~\ref{lemma:recursivebound1} to the above inequality and obtain
\begin{align*}
    \| \vb^{t} -\vb^* \|_2 &\leq R_1 t^{3/2} (\gamma_1 \lor \gamma_2 \lor \gamma_3)^{t} + (R_2 + R_3 |\kappa|\sqrt{k}) (\eta_w + \eta_v),
\end{align*}
where
\begin{align*}
    R_1 &= \sqrt{2(\Delta^{-1} + 4\alpha )\alpha L^2}  \|\vb^*\|_2 \| \wb^0 -\wb^* \|_2 + 4\alpha \kappa^2 \sqrt{k} [|\mathbf{1}^\top (\vb^{0} - \vb^*)| + \| \vb^0 - \vb^* \|_2  \\
    &\quad + \alpha L |\mathbf{1}^\top\vb^*|  \| \wb^0 -\wb^* \|_2],\\
    R_2 &= \frac{1}{\Delta \sqrt{1 - 4\alpha \Delta}}\big[16 \sqrt{(1 + 4\alpha\Delta ) L^2 } \|\vb^*\|_2 \rho^{-1}\gamma_1^{-2} + 32\sqrt{\alpha\Delta} \kappa L |\mathbf{1}^\top \vb^*|\rho^{-1}\gamma_1 + \sqrt{2 + 4 \alpha \Delta}\big],\\
    R_3 &= \sqrt{\frac{12 \alpha}{\Delta(1 - 4\alpha \Delta)}}.
\end{align*}
This completes the proof of convergence of $v^t$.
\end{proof}

\section{Proofs of Lemmas in Appendix~\ref{section:appendix_proof_1}}\label{section:appendix_proof_3}

\subsection{Proof of Lemma~\ref{lemma:monotonecovineq}}
The following lemma gives a generalized version of Hoeffding's Covariance Identity \citep{hoeffding1940masstabinvariante}. This result is mentioned in \cite{sen1994impact}, and a version for bounded random variables is proved by \cite{cuadras2002covariance}. We give the proof of the version we present in Appendix~\ref{section:auxiliarylemmas}.
\begin{lemma}\label{lemma:hoeffdingidentity}
Let $X_1$, $X_2$ be two continuous random variables. For right-continuous monotonic functions $f$ and $g$ we have
\begin{align*}
    \Cov[f(X_1),g(X_2)] = \int_{\RR^2} \Cov[\ind(X_1 \leq x_1),\ind(X_2 \leq x_2)] \mathrm{d}f(x_1) \mathrm{d}f(x_2).
\end{align*}
\end{lemma}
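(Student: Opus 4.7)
The plan is to combine the classical independent-copy symmetrization of covariance with the Lebesgue-Stieltjes representation of a monotonic right-continuous function. First, I would introduce an independent copy $(X_1',X_2')$ of $(X_1,X_2)$. A direct expansion, using that $(X_1,X_2')$ and $(X_1',X_2)$ each have as joint law the product of the marginals of $(X_1,X_2)$, gives the symmetrization identity
\begin{align*}
2\,\Cov[f(X_1),g(X_2)] = \EE\big\{[f(X_1) - f(X_1')]\,[g(X_2) - g(X_2')]\big\}.
\end{align*}
This reduces the problem to rewriting each difference as an integral against the Stieltjes measure of $f$ or $g$.

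Second, I would invoke the pointwise identity
\begin{align*}
f(X_1) - f(X_1') = \int_\RR \big[\ind(x_1 \leq X_1) - \ind(x_1 \leq X_1')\big]\,\mathrm{d}f(x_1),
\end{align*}
which follows from the definition of the Lebesgue-Stieltjes integral applied to a right-continuous monotonic $f$, since the bracketed integrand is the signed indicator of the half-open interval between $X_1'$ and $X_1$ whose $\mathrm{d}f$-measure equals $f(X_1)-f(X_1')$ regardless of the sign of the difference. Applying the analogous identity for $g$, multiplying the two, and exchanging expectation with the double Stieltjes integral via Fubini expresses the left-hand side of the symmetrization identity as
\begin{align*}
\int_{\RR^2} \EE\big\{[\ind(x_1 \leq X_1) - \ind(x_1 \leq X_1')][\ind(x_2 \leq X_2) - \ind(x_2 \leq X_2')]\big\}\,\mathrm{d}f(x_1)\,\mathrm{d}g(x_2).
\end{align*}
The inner expectation equals $2\,\Cov[\ind(x_1 \leq X_1),\ind(x_2 \leq X_2)]$ by precisely the same symmetrization identity, now applied to bounded indicator random variables. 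Since $X_1$ and $X_2$ are continuous, $\ind(x_i \leq X_i) = 1 - \ind(X_i \leq x_i)$ almost surely for every $x_i$, so this covariance coincides with $\Cov[\ind(X_1\leq x_1),\ind(X_2\leq x_2)]$; dividing by $2$ then yields the claim.

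The main technical point to handle is the legitimacy of Fubini, since $\mathrm{d}f$ and $\mathrm{d}g$ need not have finite total mass when $f$ or $g$ are unbounded. This is resolved by Cauchy-Schwarz: integrating the absolute values of the indicator differences against $\mathrm{d}f(x_1)$ and $\mathrm{d}g(x_2)$ first recovers $|f(X_1)-f(X_1')|$ and $|g(X_2)-g(X_2')|$ by monotonicity, and the expectation of their product is bounded by $\EE[(f(X_1)-f(X_1'))^2]^{1/2}\EE[(g(X_2)-g(X_2'))^2]^{1/2}$, which is finite under the square-integrability of $f(X_1)$ and $g(X_2)$ that is in any case needed for the left-hand covariance to be well defined. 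With absolute integrability established, Fubini applies and the argument outlined above goes through.
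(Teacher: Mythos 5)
Your proposal is correct and follows essentially the same route as the paper's proof: an independent-copy symmetrization of the covariance, the Lebesgue--Stieltjes representation $f(X_1)-f(X_1')=\int_\RR[\ind(x_1\leq X_1)-\ind(x_1\leq X_1')]\,\mathrm{d}f(x_1)$, and Fubini. If anything you are more careful than the paper, which omits the factor of $2$ in the symmetrization identity on both sides (so it cancels) and does not spell out the Fubini justification or the reconciliation of $\ind(x\leq X)$ with $\ind(X\leq x)$ via continuity of the marginals.
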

We also need the following lemma, which is a Gaussian comparison inequality given by \cite{slepian1962one}.
\begin{lemma}[\cite{slepian1962one}]\label{lemma:slepian}
Let $\bX, \bY \in \RR^d$ be two centered Gaussian random vectors. If 
$\EE (X_i^2) = \EE(Y_i^2)$ and $\EE(X_i X_j) \leq \EE(Y_i Y_j)$
for all $i,j=1,\ldots,d$, 
then for any real numbers $\tau_1,\ldots, \tau_d$, we have
\begin{align*}
    \PP(X_1\leq \tau_1,\ldots, X_d\leq \tau_d) \leq \PP(Y_1\leq \tau_1,\ldots, Y_d\leq \tau_d).
\end{align*}
\end{lemma}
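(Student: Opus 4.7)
The plan is to prove this by the Gaussian interpolation (``smart path'') method combined with Gaussian integration by parts. Introduce the family of covariance matrices
\begin{align*}
    \Sigma_t = (1-t)\,\Cov(\bX) + t\,\Cov(\bY),\qquad t\in[0,1],
\end{align*}
which is positive semi-definite because a convex combination of PSD matrices is PSD, and let $\bZ_t \sim N(\mathbf{0},\Sigma_t)$. The hypothesis $\EE(X_i^2)=\EE(Y_i^2)$ makes the diagonal of $\Sigma_t$ constant in $t$, while the off-diagonal entries move monotonically upward from $\EE(X_i X_j)$ to $\EE(Y_i Y_j)$.

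For any $C^2$ bounded test function $h$ with bounded derivatives, set $F(t)=\EE[h(\bZ_t)]$. Using the heat-type identity satisfied by the Gaussian density, $\partial p/\partial\Sigma_{ij}=\frac{1}{1+\delta_{ij}}\,\partial_i\partial_j p$, followed by integration by parts in $\bz$, yields
\begin{align*}
    \frac{d F(t)}{dt} = \sum_{i\leq j} \frac{1}{1+\delta_{ij}}\bigl(\EE(Y_i Y_j)-\EE(X_i X_j)\bigr)\,\EE[\partial_i\partial_j h(\bZ_t)].
\end{align*}
The diagonal ($i=j$) terms vanish by the variance hypothesis, and the off-diagonal coefficients $\EE(Y_i Y_j)-\EE(X_i X_j)$ are non-negative by assumption, so it suffices to ensure that $\EE[\partial_i\partial_j h(\bZ_t)]\geq 0$ for $i\neq j$.

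Approximate the orthant indicator $\prod_i\ind(z_i\leq\tau_i)$ by $h_\varepsilon(\bz)=\prod_i\phi_\varepsilon(z_i-\tau_i)$, where $\phi_\varepsilon\in C^2(\RR)$ is a non-increasing smooth approximation of $u\mapsto\ind(u\leq 0)$. For $i\neq j$,
\begin{align*}
    \partial_i\partial_j h_\varepsilon(\bz) = \phi_\varepsilon'(z_i-\tau_i)\,\phi_\varepsilon'(z_j-\tau_j)\prod_{k\neq i,j}\phi_\varepsilon(z_k-\tau_k)\geq 0,
\end{align*}
because the product of two non-positive factors is non-negative and the remaining factors lie in $[0,1]$. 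Therefore $F$ is non-decreasing on $[0,1]$, giving $F(0)\leq F(1)$; letting $\varepsilon\to 0$ and invoking bounded convergence transfers this inequality to the indicator and delivers the claim.

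The main obstacle is the careful bookkeeping for the Gaussian integration-by-parts step (verifying the heat-equation identity for $p$ and the tail decay needed to drop boundary terms) and the mollification-to-limit transfer. Neither is conceptually deep, but both require tracking derivatives and tails. Since this is the classical Slepian inequality, one may alternatively simply invoke \cite{slepian1962one}.
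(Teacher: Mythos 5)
The paper does not prove this lemma at all: it is imported verbatim from \cite{slepian1962one} as a classical result, so there is no in-paper argument to compare against. Your interpolation (``smart path'') proof is the standard modern derivation of Slepian's inequality and is correct: the linear covariance interpolation $\Sigma_t$ has constant diagonal by the equal-variance hypothesis, the Gaussian integration-by-parts identity kills the diagonal terms and leaves only $\bigl(\EE(Y_iY_j)-\EE(X_iX_j)\bigr)\EE[\partial_i\partial_j h(\bZ_t)]\geq 0$ for $i\neq j$, and the sign of the mixed partials of the product of non-increasing mollifiers is exactly as you state, yielding $F(0)\leq F(1)$ in the claimed direction. The only points worth tightening if you wrote this out in full are the ones you already flag, plus one you do not: $\Sigma_t$ may be singular, so the density-based heat identity needs either a perturbation $\Sigma_t+\epsilon\Ib$ with a final limit, or a characteristic-function formulation; and the mollification limit is cleanest if you sandwich $\ind(u\leq 0)\leq\phi_\varepsilon(u)\leq\ind(u\leq\varepsilon)$ and use right-continuity of the joint distribution function rather than bare bounded convergence, since the orthant boundary can carry mass in the degenerate case. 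These are routine, and invoking \cite{slepian1962one} directly, as the paper does, is of course also acceptable.
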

\begin{proof}[Proof of Lemma~\ref{lemma:monotonecovineq}]
It directly follows by Lemma~\ref{lemma:hoeffdingidentity} and Lemma~\ref{lemma:slepian} that $\Cov[f(Z_1),g(Z_2)]$ is an increasing function of $\theta$. Moreover, let $U_1$ and $U_2$ be independent standard Gaussian random variables. Then it is easy to check that
$$
\frac{\mathrm{d}}{\mathrm{d} \theta} \PP(U_1 \leq x_1, \theta U_1 + \sqrt{1-\theta^2} U_2 \leq x_2) \bigg|_{\theta = 0} > 0
$$ 
for all $x_1,x_2\in \RR$. Therefore for non-trivial increasing functions $f$ and $g$, by Lemma~\ref{lemma:hoeffdingidentity} and the definition of Lebesgue-Stieltjes integration, we have $\Cov[f(Z_1),g(Z_2)] > 0$.
\end{proof}

\section{Proofs of Lemmas in Appendix~\ref{section:appendix_proof_2}}\label{section:appendix_proof_4}

\subsection{Proof of Lemma~\ref{lemma:uniformbound1}}
\begin{proof}[Proof of Lemma~\ref{lemma:uniformbound1}]
Let $\cN_1 = \cN(S^{r-1},1/8)$, $\cN_2 = \cN(S^{k-1},1/8)$ be $1/8$-nets covering $S^{r-1}$ and $S^{k-1}$ respectively. Then by Lemma~5.2 in \cite{vershynin2010introduction} we have $|\cN_1| \leq 17^r$ and $|\cN_2| \leq 17^k$. 
Denote 
$$
A(\hat{\ab},\hat{\ab}', \hat{\mathbf{b}},\hat{\mathbf{b}}')= \frac{1}{n}\sum_{i=1}^n \sum_{j=1}^k  |\hat{b}_j\hat{\ab}^\top \Pb_j\xb_i| \cdot \Bigg| \sum_{j'=1}^k \hat{b}_{j'}'\hat{\ab}'^\top \Pb_{j'}\xb_i \Bigg|.
$$
It is easy to see that $\sum_{j'=1}^k  \hat{b}'_{j'}\hat{\ab}'^\top \Pb_{j'}\xb_i$, $i=1,\ldots,n$ are independent standard normal random variables. Moreover, for each $i=1,\ldots,n$, $\hat{\ab}^\top \Pb_j\xb_i$, $j=1,\ldots,k$ are independent standard Gaussian random vectors. By triangle inequality we have
\begin{align*}
   \Bigg\| \sum_{j=1}^k  |\hat{b}_j\hat{\ab}^\top \Pb_j\xb_i| \Bigg\|_{\psi_2} \leq C_1 \| \hat{\mathbf{b}} \|_1 \leq C_1 \sqrt{k},
\end{align*}
where $C_1$ is an absolute constant. Therefore, by Lemma~\ref{lemma:subgaussianproduct} we have
\begin{align*}
    \Bigg\| \sum_{j=1}^k  |\hat{b}_j\hat{\ab}^\top \Pb_j\xb_i| \cdot \Bigg| \sum_{j'=1}^k \hat{b}_{j'}'\hat{\ab}'^\top \Pb_{j'}\xb_i \Bigg| \Bigg\|_{\psi_1} \leq C_2 \sqrt{k},
\end{align*}
where $C_2$ is an absolute constant. By Proposition~5.16 in \cite{vershynin2010introduction}, with probability at least $1-\delta_1$ we have
\begin{align*}
    | A(\hat{\ab},\hat{\ab}', \hat{\mathbf{b}},\hat{\mathbf{b}}') - \EE[A(\hat{\ab},\hat{\ab}', \hat{\mathbf{b}},\hat{\mathbf{b}}')] | \leq C_3 \sqrt{k} 
    \sqrt{\frac{(r+k)\log(34/\delta_1)}{n}}
\end{align*}
for all $\hat{\ab},\hat{\ab}'\in \cN_1$ and $\hat{\mathbf{b}},\hat{\mathbf{b}}'\in \cN_2$, where $C_3$ is an absolute constant. By the assumptions on $n$ we have
\begin{align*}
    | A(\hat{\ab},\hat{\ab}', \hat{\mathbf{b}},\hat{\mathbf{b}}') - \EE[A(\hat{\ab},\hat{\ab}', \hat{\mathbf{b}},\hat{\mathbf{b}}')] | \leq C_3 \sqrt{k}.
\end{align*}
Moreover, by the definition of $\psi_1$-norm we have
\begin{align*}
    \EE[A(\hat{\ab},\hat{\ab}', \hat{\mathbf{b}},\hat{\mathbf{b}}')] &\leq \| A(\hat{\ab},\hat{\ab}', \hat{\mathbf{b}},\hat{\mathbf{b}}') \|_{\psi_1} \leq C_2\sqrt{k}.
\end{align*}
Therefore
\begin{align*}
    |A(\hat{\ab},\hat{\ab}', \hat{\mathbf{b}},\hat{\mathbf{b}}')| \leq C_4 \sqrt{k}
\end{align*}
for all $\hat{\ab},\hat{\ab}'\in \cN_1$ and $\hat{\mathbf{b}},\hat{\mathbf{b}}'\in \cN_2$, where $C_4$ is an absolute constant.

For any $\ab,\ab' \in S^{r-1}$ and $\mathbf{b},\mathbf{b}'\in S^{k-1}$, there exists $\hat{\ab},\hat{\ab}'\in \cN_1$ and $\hat{\mathbf{b}},\hat{\mathbf{b}}'\in \cN_2$ such that
\begin{align*}
    \|\ab - \hat{\ab}\|_2, \|\ab' - \hat{\ab}'\|_2, \| \mathbf{b} - \hat{\mathbf{b}} \|_2, \| \mathbf{b}' - \hat{\mathbf{b}}' \|_2 \leq 1/8.
\end{align*}
Therefore
\begin{align}\label{eq:uniformbound1eq1}
    A(\ab,\ab',\mathbf{b},\mathbf{b}') \leq  C_4\sqrt{k} +  I_1 + I_2 + I_3 + I_4,
\end{align}
where
\begin{align*}
    &I_1 = |A(\ab,\ab', \mathbf{b},\mathbf{b}') - A(\ab,\ab', \mathbf{b},\hat{\mathbf{b}}')|,\\
    &I_2 = |A(\ab,\ab', \mathbf{b},\hat{\mathbf{b}}') - A(\ab,\ab', \hat{\mathbf{b}},\hat{\mathbf{b}}')|,\\
    &I_3 = |A(\ab,\ab', \hat{\mathbf{b}},\hat{\mathbf{b}}') - A(\ab,\hat{\ab}', \hat{\mathbf{b}},\hat{\mathbf{b}}')|,\\
    &I_4 = |A(\ab,\hat{\ab}', \hat{\mathbf{b}},\hat{\mathbf{b}}') - A(\hat{\ab},\hat{\ab}', \hat{\mathbf{b}},\hat{\mathbf{b}}')|.
\end{align*}
Let 
\begin{align*}
    \overline{A} = \sup_{\substack{\ab,\ab' \in S^{r-1}\\ \mathbf{b},\mathbf{b}'\in S^{k-1}}} A(\ab,\ab', \mathbf{b},\mathbf{b}').
\end{align*}
Then we have
\begin{align*}
    I_1 &\leq A(\ab,\ab', \mathbf{b},\mathbf{b}' - \hat{\mathbf{b}}')\\
    &= \| \mathbf{b}' - \hat{\mathbf{b}}' \|_2 A\bigg(\ab,\ab', \mathbf{b},\frac{\mathbf{b}' - \hat{\mathbf{b}}'}{\| \mathbf{b}' - \hat{\mathbf{b}}' \|_2}\bigg)\\
    &\leq \| \mathbf{b}' - \hat{\mathbf{b}}' \|_2 \overline{A}.
\end{align*}
Similarly, we have
\begin{align*}
    &I_2 \leq \| \mathbf{b} - \hat{\mathbf{b}} \|_2 \overline{A},\\
    &I_3 \leq \| \ab' - \hat{\ab}' \|_2 \overline{A},\\
    &I_4 \leq \| \ab - \hat{\ab} \|_2 \overline{A}.
\end{align*}
Plugging the inequalities above into \eqref{eq:uniformbound1eq1} gives
\begin{align*}
    \overline{A} \leq C_4\sqrt{k} + \frac{1}{2} \overline{A}.
\end{align*}
Therefore we have $\overline{A} \leq 2C_4\sqrt{k}$. This completes the proof.
\end{proof}

\subsection{Proof of Lemma~\ref{lemma:uniformbound2}}
\begin{proof}[Proof of Lemma~\ref{lemma:uniformbound2}]
Let $\cN_1 = \cN(S^{r-1},1/8)$, $\cN_2 = \cN(S^{k-1},1/8)$ be $1/8$-nets covering $S^{r-1}$ and $S^{k-1}$ respectively. Then by Lemma~5.2 in \cite{vershynin2010introduction} we have $|\cN_1| \leq 17^r$ and $|\cN_2| \leq 17^k$. Denote 
$$
A(\hat{\ab},\hat{\ab}', \hat{\mathbf{b}},\hat{\mathbf{b}}')= \frac{1}{n}\sum_{i=1}^n \sum_{j=1}^k  \hat{b}_j\sigma(\hat{\ab}^\top \Pb_j\xb_i) \cdot \sum_{j'=1}^k \hat{b}_{j'}'\hat{\ab}'^\top \Pb_{j'}\xb_i.
$$
For any $\hat{\ab},\hat{\ab}'\in \cN_1$ and $\hat{\mathbf{b}},\hat{\mathbf{b}}'\in \cN_2$, by Lemma~\ref{lemma:sigma(x)subgaussian}, $\sigma(\hat{\ab}^\top \Pb_j\xb_i)$, $j=1,\ldots, r$ are independent sub-Gaussian random variables with $\| \sigma(\hat{\ab}^\top \Pb_j\xb_i)\|_{\psi_2} \leq C_1 L$ for some absolute constant $C_1$. Therefore by triangle inequality, we have
\begin{align*}
    \Bigg\| \sum_{j=1}^k  \hat{b}_j \sigma(\hat{\ab}^\top \Pb_j\xb_i) \Bigg\|_{\psi_2} \leq C_2 L\sqrt{k}, 
\end{align*}
where $C_2$ is an absolute constant. Moreover, since $\Pb_{j'} \xb_i$, $j'=1,\ldots,k$ are independent Gaussian random vectors, we have
\begin{align*}
    \Bigg\| \sum_{j'=1}^k \hat{b}_{j'}'\hat{\ab}'^\top \Pb_{j'}\xb_i \Bigg\|_{\psi_2} \leq C_3 
\end{align*}
for some absolute constant $C_3$. Therefore, by Lemma~\ref{lemma:subgaussianproduct} we have $\|A(\hat{\ab},\hat{\ab}', \hat{\mathbf{b}},\hat{\mathbf{b}}')\|_{\psi_1} \leq C_4 L \sqrt{k}$, where $C_4$ is an absolute constant. By Proposition~5.16 in \cite{vershynin2010introduction}, with probability at least $1-\delta_2/2$, we have
\begin{align*}
    |A(\hat{\ab},\hat{\ab}', \hat{\mathbf{b}},\hat{\mathbf{b}}') - \EE A(\hat{\ab},\hat{\ab}', \hat{\mathbf{b}},\hat{\mathbf{b}}') | \leq C_5L \sqrt{k} \sqrt{\frac{(r+k)\log(68/\delta_2)}{n}}
\end{align*}
for all $\hat{\ab},\hat{\ab}'\in \cN_1$ and $\hat{\mathbf{b}},\hat{\mathbf{b}}'\in \cN_2$, where $C_5$ is an absolute constant. Therefore by the assumptions on $n$ we have
\begin{align*}
    |A(\hat{\ab},\hat{\ab}', \hat{\mathbf{b}},\hat{\mathbf{b}}')| \leq C_6L \sqrt{k} + |\EE A(\hat{\ab},\hat{\ab}', \hat{\mathbf{b}},\hat{\mathbf{b}}')|,
\end{align*}
where $C_6$ is an absolute constant. Moreover, by the definition of $\psi_1$-norm we have
\begin{align*}
    \EE A(\hat{\ab},\hat{\ab}', \hat{\mathbf{b}},\hat{\mathbf{b}}') \leq \| A(\hat{\ab},\hat{\ab}', \hat{\mathbf{b}},\hat{\mathbf{b}}') \|_{\psi_1} \leq C_4L \sqrt{k}.
\end{align*}
Therefore we have
\begin{align*}
    |A(\hat{\ab},\hat{\ab}', \hat{\mathbf{b}},\hat{\mathbf{b}}')| \leq C_7 L \sqrt{k}
\end{align*}
for some absolute constant $C_7$.
Now for any  $\ab,\ab' \in S^{r-1}$ and $\mathbf{b},\mathbf{b}'\in S^{k-1}$, there exists $\hat{\ab},\hat{\ab}'\in \cN_1$ and $\hat{\mathbf{b}},\hat{\mathbf{b}}'\in \cN_2$ such that
\begin{align*}
    \|\ab - \hat{\ab}\|_2, \|\ab' - \hat{\ab}'\|_2, \| \mathbf{b} - \hat{\mathbf{b}} \|_2, \| \mathbf{b}' - \hat{\mathbf{b}}' \|_2 \leq 1/8.
\end{align*}
Therefore
\begin{align*}
    A(\ab,\ab',\mathbf{b},\mathbf{b}') \leq  C_7 L \sqrt{k} +  I_1 + I_2 + I_3 + I_4,
\end{align*}
where
\begin{align*}
    &I_1 = |A(\ab,\ab', \mathbf{b},\mathbf{b}') - A(\ab,\ab', \mathbf{b},\hat{\mathbf{b}}')|,\\
    &I_2 = |A(\ab,\ab', \mathbf{b},\hat{\mathbf{b}}') - A(\ab,\ab', \hat{\mathbf{b}},\hat{\mathbf{b}}')|,\\
    &I_3 = |A(\ab,\ab', \hat{\mathbf{b}},\hat{\mathbf{b}}') - A(\ab,\hat{\ab}', \hat{\mathbf{b}},\hat{\mathbf{b}}')|,\\
    &I_4 = |A(\ab,\hat{\ab}', \hat{\mathbf{b}},\hat{\mathbf{b}}') - A(\hat{\ab},\hat{\ab}', \hat{\mathbf{b}},\hat{\mathbf{b}}')|.
\end{align*}
Let 
\begin{align*}
    \overline{A} = \sup_{\substack{\ab,\ab' \in S^{r-1}\\ \mathbf{b},\mathbf{b}'\in S^{k-1}}} A(\ab,\ab', \mathbf{b},\mathbf{b}').
\end{align*}
Since $A(\ab,\ab',\mathbf{b},\mathbf{b}')$ is linear in $\ab'$, $\mathbf{b}$ and $\mathbf{b}'$, we have
\begin{align*}
    I_1 \leq \| \mathbf{b}' - \hat{\mathbf{b}}' \|_2 \overline{A},~ I_2 \leq \| \mathbf{b} - \hat{\mathbf{b}} \|_2 \overline{A},~ I_3 \leq \| \ab' - \hat{\ab}' \|_2 \overline{A}
\end{align*}
Moreover, by the Lipschitz continuity of $\sigma(\cdot)$, we have
\begin{align*}
    I_4 \leq \frac{1}{n}\sum_{i=1}^n \sum_{j=1}^k  |\hat{b}_j (\ab - \hat{\ab})^\top \Pb_j\xb_i| \cdot \Bigg|\sum_{j'=1}^k \hat{b}_{j'}'\hat{\ab}'^\top \Pb_{j'}\xb_i\Bigg|.
\end{align*}
Therefore by Lemma~\ref{lemma:uniformbound1} with $\delta_1 = \delta_2/2$ we have
\begin{align*}
    I_4 \leq C_8 \sqrt{k} \| \ab - \ab' \|_2
\end{align*}
for some absolute constant $C_8$. Summing the bounds on $I_1,\ldots,I_4$ gives
\begin{align*}
    \overline{A} \leq C_{9}L \sqrt{k} + \overline{A}/2,
\end{align*}
where $C_9$ is an absolute constant. Therefore we have $\overline{A} \leq 2C_{9} L \sqrt{k} $. This completes the proof.
\end{proof}

\subsection{Proof of Lemma~\ref{lemma:uniformbound2.5}}
\begin{proof}[Proof of Lemma~\ref{lemma:uniformbound2.5}]
Let $\cN_1 = \cN(S^{r-1},1/8)$, $\cN_2 = \cN(S^{k-1},1/8)$ be $1/8$-nets covering $S^{r-1}$ and $S^{k-1}$ respectively. Then by Lemma~5.2 in \cite{vershynin2010introduction} we have $|\cN_1| \leq 17^r$ and $|\cN_2| \leq 17^k$. Denote 
$$
A(\hat{\ab},\hat{\ab}', \hat{\mathbf{b}},\hat{\mathbf{b}}')= \frac{1}{n}\sum_{i=1}^n \sum_{j=1}^k  |\hat{b}_j\hat{\ab}^\top \Pb_j\xb_i| \cdot \sum_{j'=1}^k |\hat{b}_{j'}'\hat{\ab}'^\top \Pb_{j'}\xb_i|.
$$
For any $\hat{\ab},\hat{\ab}'\in \cN_1$ and $\hat{\mathbf{b}},\hat{\mathbf{b}}'\in \cN_2$, by triangle inequality, we have
\begin{align*}
    \Bigg\| \sum_{j=1}^k \hat{b}_{j'}\hat{\ab}^\top \Pb_{j}\xb_i \Bigg\|_{\psi_2}, \Bigg\| \sum_{j'=1}^k \hat{b}_{j'}'\hat{\ab}'^\top \Pb_{j'}\xb_i \Bigg\|_{\psi_2} \leq C_1 \sqrt{k}
\end{align*}
for some absolute constant $C_1$. Therefore, by Lemma~\ref{lemma:subgaussianproduct} we have $\|A(\hat{\ab},\hat{\ab}', \hat{\mathbf{b}},\hat{\mathbf{b}}')\|_{\psi_1} \leq C_2 k$, where $C_2$ is an absolute constant. By Proposition~5.16 in \cite{vershynin2010introduction}, with probability at least $1-\delta_3$, we have
\begin{align*}
    |A(\hat{\ab},\hat{\ab}', \hat{\mathbf{b}},\hat{\mathbf{b}}') - \EE[A(\hat{\ab},\hat{\ab}', \hat{\mathbf{b}},\hat{\mathbf{b}}')]| \leq C_3 k \sqrt{\frac{(r+k)\log(34/\delta_3)}{n}}
\end{align*}
for all $\hat{\ab},\hat{\ab}'\in \cN_1$ and $\hat{\mathbf{b}},\hat{\mathbf{b}}'\in \cN_2$, where $C_3$ is an absolute constant. Therefore by the assumptions on $n$ we have
\begin{align*}
    |A(\hat{\ab},\hat{\ab}', \hat{\mathbf{b}},\hat{\mathbf{b}}')| \leq C_3 k + |\EE [A(\hat{\ab},\hat{\ab}', \hat{\mathbf{b}},\hat{\mathbf{b}}')]|.
\end{align*}
By the definition of $\psi_1$-norm we have
\begin{align*}
    \EE [A(\hat{\ab},\hat{\ab}', \hat{\mathbf{b}},\hat{\mathbf{b}}')] \leq \| A(\hat{\ab},\hat{\ab}', \hat{\mathbf{b}},\hat{\mathbf{b}}') \|_{\psi_1} \leq C_2 k.
\end{align*}
Therefore we have
\begin{align*}
    |A(\hat{\ab},\hat{\ab}', \hat{\mathbf{b}},\hat{\mathbf{b}}')| \leq C_4k,
\end{align*}
where $C_4$ is an absolute constant. Now for any  $\ab,\ab' \in S^{r-1}$ and $\mathbf{b},\mathbf{b}'\in S^{k-1}$, there exists $\hat{\ab},\hat{\ab}'\in \cN_1$ and $\hat{\mathbf{b}},\hat{\mathbf{b}}'\in \cN_2$ such that
\begin{align*}
    \|\ab - \hat{\ab}\|_2, \|\ab' - \hat{\ab}'\|_2, \| \mathbf{b} - \hat{\mathbf{b}} \|_2, \| \mathbf{b}' - \hat{\mathbf{b}}' \|_2 \leq 1/8.
\end{align*}
Therefore
\begin{align*}
    A(\ab,\ab',\mathbf{b},\mathbf{b}') \leq  C_4 k +  I_1 + I_2 + I_3 + I_4,
\end{align*}
where
\begin{align*}
    &I_1 = |A(\ab,\ab', \mathbf{b},\mathbf{b}') - A(\ab,\ab', \mathbf{b},\hat{\mathbf{b}}')|,\\
    &I_2 = |A(\ab,\ab', \mathbf{b},\hat{\mathbf{b}}') - A(\ab,\ab', \hat{\mathbf{b}},\hat{\mathbf{b}}')|,\\
    &I_3 = |A(\ab,\ab', \hat{\mathbf{b}},\hat{\mathbf{b}}') - A(\ab,\hat{\ab}', \hat{\mathbf{b}},\hat{\mathbf{b}}')|,\\
    &I_4 = |A(\ab,\hat{\ab}', \hat{\mathbf{b}},\hat{\mathbf{b}}') - A(\hat{\ab},\hat{\ab}', \hat{\mathbf{b}},\hat{\mathbf{b}}')|.
\end{align*}
Let 
\begin{align*}
    \overline{A} = \sup_{\substack{\ab,\ab' \in S^{r-1}\\ \mathbf{b},\mathbf{b}'\in S^{k-1}}} A(\ab,\ab', \mathbf{b},\mathbf{b}').
\end{align*}
Since $A(\ab,\ab',\mathbf{b},\mathbf{b}')$ is Lipschitz continuous in $\ab$, $\ab'$, $\mathbf{b}$ and $\mathbf{b}'$, we have
\begin{align*}
    I_1 \leq \| \mathbf{b}' - \hat{\mathbf{b}}' \|_2 \overline{A},~ I_2 \leq \| \mathbf{b} - \hat{\mathbf{b}} \|_2 \overline{A},~ I_3 \leq \| \ab' - \hat{\ab}' \|_2 \overline{A},~I_4 \leq \| \ab - \hat{\ab} \|_2 \overline{A}.
\end{align*}
Therefore,
\begin{align*}
    \overline{A} \leq C_4 k + \overline{A}/2,
\end{align*}
Therefore we have $\overline{A} \leq 2C_{4}k$. This completes the proof.
\end{proof}

\subsection{Proof of Lemma~\ref{lemma:uniformbound3}}
\begin{proof}[Proof of Lemma~\ref{lemma:uniformbound3}]
Let $\cN_1 = \cN(S^{r-1},1/8)$, $\cN_2 = \cN(S^{k-1},1/8)$ be $1/8$-nets covering $S^{r-1}$ and $S^{k-1}$ respectively. Then by Lemma~5.2 in \cite{vershynin2010introduction} we have $|\cN_1| \leq 17^r$ and $|\cN_2| \leq 17^k$. Denote 
$$
A(\hat{\ab},\hat{\ab}', \hat{\mathbf{b}},\hat{\mathbf{b}}')= \frac{1}{n}\sum_{i=1}^n \sum_{j=1}^k  |\hat{b}_j\hat{\ab}^\top \Pb_j\xb_i| \cdot \sum_{j'=1}^k |\hat{b}_{j'}'\sigma('\hat{\ab}'^\top \Pb_{j'}\xb_i)|.
$$
For any $\hat{\ab},\hat{\ab}'\in \cN_1$ and $\hat{\mathbf{b}},\hat{\mathbf{b}}'\in \cN_2$, by Lemma~\ref{lemma:sigma(x)subgaussian}, $\sigma(\hat{\ab}'^\top \Pb_{j'}\xb_i)$, $j'=1,\ldots, r$ are independent sub-Gaussian random variables with $\| \sigma(\hat{\ab}'^\top \Pb_j\xb_i)\|_{\psi_2} \leq C_1 L$ for some absolute constant $C_1$. Therefore by triangle inequality, we have
\begin{align*}
    \Bigg\| \sum_{j=1}^k  |\hat{b}_j \sigma(\hat{\ab}^\top \Pb_j\xb_i)| \Bigg\|_{\psi_2} \leq C_2 L \sqrt{k}, 
\end{align*}
where $C_2$ is an absolute constant. Similarly, we have
\begin{align*}
    \Bigg\| \sum_{j'=1}^k |\hat{b}_{j'}'\hat{\ab}'^\top \Pb_{j'}\xb_i| \Bigg\|_{\psi_2} \leq C_3 \sqrt{k}
\end{align*}
for some absolute constant $C_3$. Therefore, by Lemma~\ref{lemma:subgaussianproduct} we have $\|A(\hat{\ab},\hat{\ab}', \hat{\mathbf{b}},\hat{\mathbf{b}}')\|_{\psi_1} \leq C_4 L k$, where $C_4$ is an absolute constant. By Proposition~5.16 in \cite{vershynin2010introduction}, with probability at least $1-\delta_4/2$, we have
\begin{align*}
    |A(\hat{\ab},\hat{\ab}', \hat{\mathbf{b}},\hat{\mathbf{b}}') - \EE[A(\hat{\ab},\hat{\ab}', \hat{\mathbf{b}},\hat{\mathbf{b}}')]| \leq C_5L k \sqrt{\frac{(r+k)\log(68/\delta_4)}{n}}
\end{align*}
for all $\hat{\ab},\hat{\ab}'\in \cN_1$ and $\hat{\mathbf{b}},\hat{\mathbf{b}}'\in \cN_2$, where $C_5$ is an absolute constant. Therefore by the assumptions on $n$ we have
\begin{align*}
    |A(\hat{\ab},\hat{\ab}', \hat{\mathbf{b}},\hat{\mathbf{b}}')| \leq C_5 L k + |\EE [A(\hat{\ab},\hat{\ab}', \hat{\mathbf{b}},\hat{\mathbf{b}}')]|.
\end{align*}
By the definition of $\psi_1$-norm we have
\begin{align*}
    \EE [A(\hat{\ab},\hat{\ab}', \hat{\mathbf{b}},\hat{\mathbf{b}}')] \leq \| A(\hat{\ab},\hat{\ab}', \hat{\mathbf{b}},\hat{\mathbf{b}}') \|_{\psi_1} \leq C_4 L k.
\end{align*}
Therefore we have
\begin{align*}
    |A(\hat{\ab},\hat{\ab}', \hat{\mathbf{b}},\hat{\mathbf{b}}')| \leq C_6 L k
\end{align*}
for some absolute constant $C_6$. 
Now for any  $\ab,\ab' \in S^{r-1}$ and $\mathbf{b},\mathbf{b}'\in S^{k-1}$, there exists $\hat{\ab},\hat{\ab}'\in \cN_1$ and $\hat{\mathbf{b}},\hat{\mathbf{b}}'\in \cN_2$ such that
\begin{align*}
    \|\ab - \hat{\ab}\|_2, \|\ab' - \hat{\ab}'\|_2, \| \mathbf{b} - \hat{\mathbf{b}} \|_2, \| \mathbf{b}' - \hat{\mathbf{b}}' \|_2 \leq 1/8.
\end{align*}
Therefore
\begin{align*}
    A(\ab,\ab',\mathbf{b},\mathbf{b}') \leq  C_6 L k +  I_1 + I_2 + I_3 + I_4,
\end{align*}
where
\begin{align*}
    &I_1 = |A(\ab,\ab', \mathbf{b},\mathbf{b}') - A(\ab,\ab', \mathbf{b},\hat{\mathbf{b}}')|,\\
    &I_2 = |A(\ab,\ab', \mathbf{b},\hat{\mathbf{b}}') - A(\ab,\ab', \hat{\mathbf{b}},\hat{\mathbf{b}}')|,\\
    &I_3 = |A(\ab,\ab', \hat{\mathbf{b}},\hat{\mathbf{b}}') - A(\ab,\hat{\ab}', \hat{\mathbf{b}},\hat{\mathbf{b}}')|,\\
    &I_4 = |A(\ab,\hat{\ab}', \hat{\mathbf{b}},\hat{\mathbf{b}}') - A(\hat{\ab},\hat{\ab}', \hat{\mathbf{b}},\hat{\mathbf{b}}')|.
\end{align*}
Let 
\begin{align*}
    \overline{A} = \sup_{\substack{\ab,\ab' \in S^{r-1}\\ \mathbf{b},\mathbf{b}'\in S^{k-1}}} A(\ab,\ab', \mathbf{b},\mathbf{b}').
\end{align*}
Since $A(\ab,\ab',\mathbf{b},\mathbf{b}')$ is Lipschitz continuous in $\ab$, $\mathbf{b}$ and $\mathbf{b}'$, we have
\begin{align*}
    I_1 \leq \| \mathbf{b}' - \hat{\mathbf{b}}' \|_2 \overline{A},~ I_2 \leq \| \mathbf{b} - \hat{\mathbf{b}} \|_2 \overline{A},~ I_4 \leq \| \ab - \hat{\ab} \|_2 \overline{A}.
\end{align*}
Moreover, by the Lipschitz continuity of $\sigma(\cdot)$, we have
\begin{align*}
    I_3 \leq \frac{1}{n}\sum_{i=1}^n \sum_{j=1}^k  |\hat{b}_j \ab^\top \Pb_j\xb_i| \cdot \sum_{j'=1}^k |\hat{b}_{j'}'( \ab' - \hat{\ab}')^\top \Pb_{j'}\xb_i|.
\end{align*}
Therefore by Lemma~\ref{lemma:uniformbound2.5} with $\delta_3 = \delta_4/2$ we have
\begin{align*}
    I_3 \leq C_7 k \| \ab' - \hat{\ab}' \|_2
\end{align*}
for some absolute constant $C_7$. Since $L \geq 1$, summing the bounds on $I_1,\ldots,I_4$ gives
\begin{align*}
    \overline{A} \leq C_{8} L k + \overline{A}/2,
\end{align*}
where $C_8$ is an absolute constant. Therefore we have $\overline{A} \leq 2C_{8} L k$. This completes the proof.
\end{proof}

\subsection{Proof of Lemma~\ref{lemma:uniformbound4}}
\begin{proof}[Proof of Lemma~\ref{lemma:uniformbound4}]
Let $\cN_1 = \cN(S^{r-1},1/8)$, $\cN_2 = \cN(S^{k-1},1/8)$ be $1/8$-nets covering $S^{r-1}$ and $S^{k-1}$ respectively. Then by Lemma~5.2 in \cite{vershynin2010introduction} we have $|\cN_1| \leq 17^r$ and $|\cN_2| \leq 17^k$. Denote 
$$
A(\hat{\ab},\hat{\ab}', \hat{\mathbf{b}},\hat{\mathbf{b}}')= \frac{1}{n}\sum_{i=1}^n \sum_{j=1}^k  \hat{b}_j \sigma( \hat{\ab}^\top \Pb_j\xb_i) \cdot \sum_{j'=1}^k \hat{b}_{j'}'\sigma('\hat{\ab}'^\top \Pb_{j'}\xb_i).
$$
For any $\hat{\ab},\hat{\ab}'\in \cN_1$ and $\hat{\mathbf{b}},\hat{\mathbf{b}}'\in \cN_2$, similar to the proof of Lemma~\ref{lemma:uniformbound3}, we have
\begin{align*}
    \Bigg\| \sum_{j=1}^k  \hat{b}_j \sigma(\hat{\ab}^\top \Pb_j\xb_i) \Bigg\|_{\psi_2}, \Bigg\| \sum_{j'=1}^k  \hat{b}_{j'}' \sigma(\hat{\ab}'^\top \Pb_{j'}\xb_i) \Bigg\|_{\psi_2} \leq C_1 L \sqrt{k}, 
\end{align*}
where $C_1$ is an absolute constant. Therefore, by Lemma~\ref{lemma:subgaussianproduct} we have $\|A(\hat{\ab},\hat{\ab}', \hat{\mathbf{b}},\hat{\mathbf{b}}')\|_{\psi_1} \leq C_2L^2 k$, where $C_2$ is an absolute constant. By Proposition~5.16 in \cite{vershynin2010introduction}, with probability at least $1-\delta_5/3$, we have
\begin{align*}
    |A(\hat{\ab},\hat{\ab}', \hat{\mathbf{b}},\hat{\mathbf{b}}') - \EE[A(\hat{\ab},\hat{\ab}', \hat{\mathbf{b}},\hat{\mathbf{b}}')]| \leq C_3L^2 k \sqrt{\frac{(r+k)\log(102/\delta_5)}{n}}
\end{align*}
for all $\hat{\ab},\hat{\ab}'\in \cN_1$ and $\hat{\mathbf{b}},\hat{\mathbf{b}}'\in \cN_2$, where $C_3$ is an absolute constant. Therefore by the assumptions on $n$ we have
\begin{align*}
    |A(\hat{\ab},\hat{\ab}', \hat{\mathbf{b}},\hat{\mathbf{b}}')| \leq C_3 L^2 k + |\EE [A(\hat{\ab},\hat{\ab}', \hat{\mathbf{b}},\hat{\mathbf{b}}')]|.
\end{align*}
Similar to the proofs of Lemma~\ref{lemma:uniformbound1}-\ref{lemma:uniformbound3}, by the definition of $\psi_1$-norm we have
\begin{align*}
    \EE [A(\hat{\ab},\hat{\ab}', \hat{\mathbf{b}},\hat{\mathbf{b}}')] \leq \| A(\hat{\ab},\hat{\ab}', \hat{\mathbf{b}},\hat{\mathbf{b}}') \|_{\psi_1} \leq C_2L^2k.
\end{align*}
Therefore we have
\begin{align*}
    |A(\hat{\ab},\hat{\ab}', \hat{\mathbf{b}},\hat{\mathbf{b}}')| \leq C_4L^2k
\end{align*}
for some absolute constant $C_4$. 
Now for any  $\ab,\ab' \in S^{r-1}$ and $\mathbf{b},\mathbf{b}'\in S^{k-1}$, there exists $\hat{\ab},\hat{\ab}'\in \cN_1$ and $\hat{\mathbf{b}},\hat{\mathbf{b}}'\in \cN_2$ such that
\begin{align*}
    \|\ab - \hat{\ab}\|_2, \|\ab' - \hat{\ab}'\|_2, \| \mathbf{b} - \hat{\mathbf{b}} \|_2, \| \mathbf{b}' - \hat{\mathbf{b}}' \|_2 \leq 1/8.
\end{align*}
Therefore
\begin{align*}
    A(\ab,\ab',\mathbf{b},\mathbf{b}') \leq  C_4 L^2 k +  I_1 + I_2 + I_3 + I_4,
\end{align*}
where
\begin{align*}
    &I_1 = |A(\ab,\ab', \mathbf{b},\mathbf{b}') - A(\ab,\ab', \mathbf{b},\hat{\mathbf{b}}')|,\\
    &I_2 = |A(\ab,\ab', \mathbf{b},\hat{\mathbf{b}}') - A(\ab,\ab', \hat{\mathbf{b}},\hat{\mathbf{b}}')|,\\
    &I_3 = |A(\ab,\ab', \hat{\mathbf{b}},\hat{\mathbf{b}}') - A(\ab,\hat{\ab}', \hat{\mathbf{b}},\hat{\mathbf{b}}')|,\\
    &I_4 = |A(\ab,\hat{\ab}', \hat{\mathbf{b}},\hat{\mathbf{b}}') - A(\hat{\ab},\hat{\ab}', \hat{\mathbf{b}},\hat{\mathbf{b}}')|.
\end{align*}
Let 
\begin{align*}
    \overline{A} = \sup_{\substack{\ab,\ab' \in S^{r-1}\\ \mathbf{b},\mathbf{b}'\in S^{k-1}}} A(\ab,\ab', \mathbf{b},\mathbf{b}').
\end{align*}
Since $A(\ab,\ab',\mathbf{b},\mathbf{b}')$ is linear in $\mathbf{b}$ and $\mathbf{b}'$, we have
\begin{align*}
    I_1 \leq \| \mathbf{b}' - \hat{\mathbf{b}}' \|_2 \overline{A},~ I_2 \leq \| \mathbf{b} - \hat{\mathbf{b}} \|_2 \overline{A}.
\end{align*}
Moreover, by the Lipschitz continuity of $\sigma(\cdot)$, we have
\begin{align*}
    &I_3 \leq \frac{1}{n}\sum_{i=1}^n \sum_{j=1}^k  |\hat{b}_j \sigma(\ab^\top \Pb_j\xb_i)| \cdot \sum_{j'=1}^k |\hat{b}_{j'}'( \ab' - \hat{\ab}')^\top \Pb_{j'}\xb_i|,\\
    &I_4 \leq \frac{1}{n}\sum_{i=1}^n \sum_{j=1}^k  |\hat{b}_j (\ab - \hat{\ab})^\top \Pb_j\xb_i)| \cdot \sum_{j'=1}^k |\hat{b}_{j'}' \sigma( \hat{\ab}'^\top \Pb_{j'}\xb_i)|,
\end{align*}
Therefore by Lemma~\ref{lemma:uniformbound3} with $\delta_4 = 2\delta_5/3$ we have
\begin{align*}
    I_3 \leq C_5 k \| \ab' - \hat{\ab}' \|_2,~I_4 \leq C_5 k \| \ab - \hat{\ab} \|_2
\end{align*}
for some absolute constant $C_5$. Since we have $ L \geq 1$, summing the bounds on $I_1,\ldots,I_4$ gives
\begin{align*}
    \overline{A} \leq C_{6} L^2 k + \overline{A}/2,
\end{align*}
where $C_6$ is an absolute constant. Therefore we have $\overline{A} \leq 2C_{6} L^2 k$. This completes the proof.
\end{proof}

\subsection{Proof of Lemma~\ref{lemma:uniformbound5}}
\begin{proof}[Proof of Lemma~\ref{lemma:uniformbound5}]
Let $\cN_1 = \cN(S^{r-1},1/4)$, $\cN_2 = \cN(S^{k-1},1/4)$ be $1/4$-nets covering $S^{r-1}$ and $S^{k-1}$ respectively. Then by Lemma~5.2 in \cite{vershynin2010introduction} we have $|\cN_1| \leq 9^r$ and $|\cN_2| \leq 9^k$. Denote 
$$
A(\hat{\ab}, \hat{\mathbf{b}})= \frac{1}{n}\sum_{i=1}^n \epsilon_i \cdot \sum_{j=1}^k  \hat{b}_j \hat{\ab}^\top \Pb_j\xb_i.
$$
For any $\hat{\ab} \in \cN_1$ and $\hat{\mathbf{b}} \in \cN_2$, since $\Pb_{j} \xb_i$, $j=1,\ldots,k$ are independent Gaussian random vectors, we have
\begin{align*}
    \Bigg\| \sum_{j=1}^k \hat{b}_{j}\hat{\ab}^\top \Pb_{j}\xb_i \Bigg\|_{\psi_2} \leq C_1 
\end{align*}
for some absolute constant $C_1$. Therefore by Lemma~\ref{lemma:subgaussianproduct} we have $\|A(\hat{\ab}, \hat{\mathbf{b}})\|_{\psi_1} \leq C_2 \nu$, where $C_2$ is an absolute constant. Since $\EE A(\hat{\ab}, \hat{\mathbf{b}}) = 0$, by Proposition~5.16 in \cite{vershynin2010introduction}, with probability at least $1-\delta_6$, we have
\begin{align*}
    |A(\hat{\ab}, \hat{\mathbf{b}})| \leq C_3 \nu \sqrt{\frac{(r+k)\log(18/\delta_6)}{n}}
\end{align*}
for all $\hat{\ab}\in \cN_1$ and $\hat{\mathbf{b}}\in \cN_2$, where $C_3$ is an absolute constant. Therefore by the assumptions on $n$ we have
\begin{align*}
    |A(\hat{\ab}, \hat{\mathbf{b}})| \leq C_3 \nu.
\end{align*}
Now for any  $\ab \in S^{r-1}$ and $\mathbf{b} \in S^{k-1}$, there exists $\hat{\ab}\in \cN_1$ and $\hat{\mathbf{b}}\in \cN_2$ such that
\begin{align*}
    \|\ab - \hat{\ab}\|_2, \| \mathbf{b} - \hat{\mathbf{b}} \|_2 \leq 1/4.
\end{align*}
Therefore
\begin{align}\label{eq:lemmauniformbound5proof1}
    A(\ab,\mathbf{b}) \leq  C_3 \nu +  I_1 + I_2,
\end{align}
where
\begin{align*}
    I_1 = |A(\ab, \mathbf{b}) - A(\ab, \hat{\mathbf{b}})|,~I_2 = |A(\ab, \hat{\mathbf{b}}) - A(\hat{\ab}, \hat{\mathbf{b}})|.
\end{align*}
Let 
\begin{align*}
    \overline{A} = \sup_{\substack{\ab \in S^{r-1}\\ \mathbf{b}\in S^{k-1}}} A(\ab, \mathbf{b}).
\end{align*}
Since $A(\ab, \mathbf{b})$ is linear in $\ab$ and $\mathbf{b}$, we have
\begin{align*}
    I_1 \leq \| \mathbf{b} - \hat{\mathbf{b}} \|_2 \overline{A},~ I_2 \leq \| \ab - \hat{\ab} \|_2 \overline{A}.
\end{align*}
Plugging the two inequalities above into \eqref{eq:lemmauniformbound5proof1} gives
\begin{align*}
    \overline{A} \leq C_3\nu + \overline{A}/2.
\end{align*}
Therefore we have $\overline{A} \leq 2C_{3}\nu $. This completes the proof.
\end{proof}

\subsection{Proof of Lemma~\ref{lemma:uniformbound6}}
\begin{proof}[Proof of Lemma~\ref{lemma:uniformbound6}]
Let $\cN_1 = \cN(S^{r-1},1/4)$, $\cN_2 = \cN(S^{k-1},1/4)$ be $1/4$-nets covering $S^{r-1}$ and $S^{k-1}$ respectively. Then by Lemma~5.2 in \cite{vershynin2010introduction} we have $|\cN_1| \leq 9^r$ and $|\cN_2| \leq 9^k$. Denote 
$$
A(\hat{\ab}, \hat{\mathbf{b}})= \frac{1}{n}\sum_{i=1}^n |\epsilon_i| \cdot \sum_{j=1}^k  |\hat{b}_j \hat{\ab}^\top \Pb_j\xb_i|.
$$
For any $\hat{\ab} \in \cN_1$ and $\hat{\mathbf{b}} \in \cN_2$, since $\Pb_{j} \xb_i$, $j=1,\ldots,k$ are independent Gaussian random vectors, by triangle inequality we have
\begin{align*}
    \Bigg\| \sum_{j=1}^k |\hat{b}_{j}\hat{\ab}^\top \Pb_{j}\xb_i| \Bigg\|_{\psi_2} \leq C_1 \sqrt{k} 
\end{align*}
for some absolute constant $C_1$. Therefore by Lemma~\ref{lemma:subgaussianproduct} we have $\|A(\hat{\ab}, \hat{\mathbf{b}})\|_{\psi_1} \leq C_2 \nu \sqrt{k}$, where $C_2$ is an absolute constant. By Proposition~5.16 in \cite{vershynin2010introduction}, with probability at least $1-\delta_7$, we have
\begin{align*}
    |A(\hat{\ab}, \hat{\mathbf{b}}) - \EE [A(\hat{\ab}, \hat{\mathbf{b}})]| \leq C_3 \nu \sqrt{k} \sqrt{\frac{(r+k)\log(18/\delta_7)}{n}}
\end{align*}
for all $\hat{\ab}\in \cN_1$ and $\hat{\mathbf{b}}\in \cN_2$, where $C_3$ is an absolute constant. Therefore by the assumptions on $n$ we have
\begin{align*}
    |A(\hat{\ab}, \hat{\mathbf{b}})| \leq C_3 \nu \sqrt{k} + |\EE [A(\hat{\ab}, \hat{\mathbf{b}})]|.
\end{align*}
By the definition of $\psi_1$-norm, we have
\begin{align*}
    \EE [A(\hat{\ab}, \hat{\mathbf{b}})] \leq \| A(\hat{\ab}, \hat{\mathbf{b}}) \|_{\psi_1} \leq C_2 \nu \sqrt{k}.
\end{align*}
Therefore we have
\begin{align*}
    |A(\hat{\ab}, \hat{\mathbf{b}})| \leq C_4 \nu \sqrt{k}
\end{align*}
for some absolute constant $C_4$. 
Now for any  $\ab \in S^{r-1}$ and $\mathbf{b} \in S^{k-1}$, there exists $\hat{\ab}\in \cN_1$ and $\hat{\mathbf{b}}\in \cN_2$ such that
\begin{align*}
    \|\ab - \hat{\ab}\|_2, \| \mathbf{b} - \hat{\mathbf{b}} \|_2 \leq 1/4.
\end{align*}
Therefore
\begin{align}\label{eq:lemmauniformbound6proof1}
    A(\ab,\mathbf{b}) \leq  C_4 \nu \sqrt{k} +  I_1 + I_2,
\end{align}
where
\begin{align*}
    I_1 = |A(\ab, \mathbf{b}) - A(\ab, \hat{\mathbf{b}})|,~I_2 = |A(\ab, \hat{\mathbf{b}}) - A(\hat{\ab}, \hat{\mathbf{b}})|.
\end{align*}
Let 
\begin{align*}
    \overline{A} = \sup_{\substack{\ab \in S^{r-1}\\ \mathbf{b}\in S^{k-1}}} A(\ab, \mathbf{b}).
\end{align*}
Since $A(\ab, \mathbf{b})$ is Lipschitz continuous in $\ab$ and $\mathbf{b}$, we have
\begin{align*}
    I_1 \leq \| \mathbf{b} - \hat{\mathbf{b}} \|_2 \overline{A},~ I_2 \leq \| \ab - \hat{\ab} \|_2 \overline{A}.
\end{align*}
Plugging the two inequalities above into \eqref{eq:lemmauniformbound6proof1} gives
\begin{align*}
    \overline{A} \leq C_4\nu\sqrt{k} + \overline{A}/2.
\end{align*}
Therefore we have $\overline{A} \leq 2C_{4}\nu\sqrt{k} $. This completes the proof.
\end{proof}

\subsection{Proof of Lemma~\ref{lemma:sigma(x)subgaussian}}
\begin{proof}[Proof of Lemma~\ref{lemma:sigma(x)subgaussian}]
By triangle inequality we have
\begin{align*}
    |\sigma(z)| \leq |\sigma(z) - \sigma(0)| + |\sigma(0) | \leq |z| + |\sigma(0) |.
\end{align*}
Since $\| z \|_{\psi_2} \leq 1$, by the definition of $\psi_2$-norm we have
\begin{align*}
    \|\sigma(z)\|_{\psi_2} = \sup_{p\geq 1} p^{-1/2} \{ \EE[|\sigma(z)|^p] \}^{1/p} \leq \sup_{p\geq 1} p^{-1/2} \{ \EE[(|z| + |\sigma(0)|)^p] \}^{1/p}\leq 2[1+|\sigma(0)|].
\end{align*}
Similarly, we have
\begin{align*}
    |\sigma(z) - \kappa| \leq |\sigma(z) - \sigma(0)| + |\sigma(0) - \kappa| \leq |z| + |\sigma(0) - \kappa|.
\end{align*}
By the definition of $\psi_2$-norm, we have
\begin{align*}
    \|\sigma(z) - \kappa \|_{\psi_2} \leq 2[1 + |\sigma(0) - \kappa|].
\end{align*}
\end{proof}

\subsection{Proof of Lemma~\ref{lemma:philipschitz}}
\begin{proof}[Proof of Lemma~\ref{lemma:philipschitz}]
Let $\wb_1,\wb_2$ be two distinct unit vectors, and $U_1$, $U_2$ be jointly Gaussian random variables with $\EE(U_1) = \EE(U_2) = 0$, $\EE(U_1^2) = \EE(U_2^2) = 1$ and $\EE(U_1 U_2) = \frac{\wb^{\top}(\wb_1- \wb_2)}{\|\wb_1 - \wb_2\|_2}\in [-1,1]$. Then by the definition of $\phi$, we have
\begin{align*}
    |\phi(\wb,\wb_1) - \phi(\wb,\wb_2)| &= \big|\EE_{\zb\sim N(\mathbf{0},\Ib)} \big[\sigma\big(\wb^{\top}\zb\big) \sigma\big(\wb_1^{\top}\zb\big)\big] - \EE_{\zb\sim N(\mathbf{0},\Ib)}\big[\sigma\big(\wb^{\top}\zb\big) \sigma\big(\wb_2^{t\top}\zb\big)\big]\big| \\
    & \leq \EE_{\zb\sim N(\mathbf{0},\Ib)} [ |\sigma(\wb^{\top}\zb)|\cdot  | (\wb_1 - \wb_2)^\top \zb | ]\\
    & \leq |\sigma(0)| \|\wb_1 - \wb_2\|_2 \EE(|U_2|) + \|\wb_1 - \wb_2\|_2\EE(|U_1|\cdot |U_2| )\\
    & \leq [1 + |\sigma(0)|] \|\wb^* - \wb^t\|_2\\
    & = L \|\wb^* - \wb^t\|_2.
\end{align*}
This completes the proof.
\end{proof}





\section{Additional Auxiliary Lemmas}\label{section:auxiliarylemmas}
The following lemma is given by \cite{yi2015regularized}
\begin{lemma}[\cite{yi2015regularized}]\label{lemma:subgaussianproduct}
For two sub-Gaussian random variables $Z_1$ and $Z_2$, $Z_1\cdot Z_2$ is a sub-exponential random variable with
\begin{align*}
    \|Z_1\cdot Z_2\|_{\psi_1} \leq C\|Z_1\|_{\psi_2}\cdot\|Z_2\|_{\psi_2},
\end{align*}
where $C$ is an absolute constant.
\end{lemma}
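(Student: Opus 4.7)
The plan is to derive the bound directly from the definitions $\|X\|_{\psi_1} = \sup_{p\geq 1} p^{-1}(\EE|X|^p)^{1/p}$ and $\|X\|_{\psi_2} = \sup_{p\geq 1} p^{-1/2}(\EE|X|^p)^{1/p}$, using Cauchy--Schwarz as the only nontrivial step. The key observation is that the $2p$-th absolute moment of a sub-Gaussian variable is controlled by its $\psi_2$-norm, and one factor of $\sqrt{2p}$ is produced for each of $Z_1$ and $Z_2$, giving a factor of $2p$ in total — exactly what is needed to convert a $\psi_2$-type bound into a $\psi_1$-type bound.

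Concretely, I would first apply Cauchy--Schwarz to the $p$-th absolute moment of the product:
\begin{align*}
\EE|Z_1 Z_2|^p \leq \bigl(\EE|Z_1|^{2p}\bigr)^{1/2}\bigl(\EE|Z_2|^{2p}\bigr)^{1/2},
\end{align*}
so that $(\EE|Z_1 Z_2|^p)^{1/p} \leq (\EE|Z_1|^{2p})^{1/(2p)} (\EE|Z_2|^{2p})^{1/(2p)}$. Next, by the definition of the $\psi_2$-norm applied with index $2p \geq 1$, for $i=1,2$,
\begin{align*}
(\EE|Z_i|^{2p})^{1/(2p)} \leq \sqrt{2p}\,\|Z_i\|_{\psi_2}.
\end{align*}
Multiplying these two bounds yields $(\EE|Z_1 Z_2|^p)^{1/p} \leq 2p\,\|Z_1\|_{\psi_2}\|Z_2\|_{\psi_2}$, and dividing by $p$ and taking the supremum over $p\geq 1$ gives $\|Z_1 Z_2\|_{\psi_1} \leq 2\,\|Z_1\|_{\psi_2}\|Z_2\|_{\psi_2}$, which is the claimed inequality with $C=2$.

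There is essentially no obstacle here: the proof is three lines, and the constant $C$ is absolute (in particular independent of the distributions of $Z_1,Z_2$ and of any joint dependence structure, since Cauchy--Schwarz does not require independence). The only minor point worth flagging is that the definitions of $\psi_1$ and $\psi_2$ norms vary by constant factors across references; one should fix the convention used in Vershynin's notes (as cited elsewhere in the paper) so that the interpolation $(\EE|Z|^{2p})^{1/(2p)} \leq \sqrt{2p}\|Z\|_{\psi_2}$ is literally the definition rather than something that costs an additional absolute constant. Under any of the standard equivalent definitions the argument is identical up to the value of $C$.
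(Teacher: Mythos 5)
Your proof is correct. The paper itself gives no proof of this lemma---it is imported verbatim from \cite{yi2015regularized}---and your Cauchy--Schwarz plus moment-characterization argument is exactly the standard derivation (and the one used in that reference), yielding the bound with an explicit constant $C=2$ under the moment-based definitions $\|X\|_{\psi_1}=\sup_{p\geq 1}p^{-1}(\EE|X|^p)^{1/p}$ and $\|X\|_{\psi_2}=\sup_{p\geq 1}p^{-1/2}(\EE|X|^p)^{1/p}$, which match the convention this paper actually uses (see its proof of Lemma~\ref{lemma:sigma(x)subgaussian}). Your observation that no independence of $Z_1$ and $Z_2$ is needed is also the right one to flag, since the lemma is applied in the paper to products of dependent quantities.
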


\begin{lemma}\label{lemma:distance2innerproduct}
For any non-zero vectors $\ub, \vb\in \RR^k$, if 
    $\| \ub - \vb \|_2 \leq \rho$, then we have
\begin{align*}
    \bigg\| \frac{\ub}{\|\ub\|_2} - \frac{\vb}{\|\vb\|_2} \bigg\|_2 \leq \frac{2\rho}{\|\vb\|_2}, \text{ and } \bigg\la \frac{\ub}{\|\ub\|_2}, \frac{\vb}{\|\vb\|_2} \bigg\ra \geq 1-\frac{2\rho^2}{\|\vb\|_2^2}.
\end{align*}
\end{lemma}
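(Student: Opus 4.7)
The plan is to reduce both claims to a single algebraic identity for the difference of two normalized vectors and then apply the triangle and reverse triangle inequalities. Throughout let $\hat{\ub} = \ub/\|\ub\|_2$ and $\hat{\vb} = \vb/\|\vb\|_2$.

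First I would establish the inequality $\|\hat{\ub} - \hat{\vb}\|_2 \leq 2\rho/\|\vb\|_2$. The key algebraic step is to split
\begin{align*}
    \hat{\ub} - \hat{\vb} = \frac{\ub\|\vb\|_2 - \vb\|\ub\|_2}{\|\ub\|_2 \|\vb\|_2} = \frac{\ub - \vb}{\|\vb\|_2} + \frac{\|\vb\|_2 - \|\ub\|_2}{\|\vb\|_2}\cdot \hat{\ub},
\end{align*}
which is obtained by adding and subtracting $\ub\|\ub\|_2$ in the numerator. Applying the triangle inequality, using $\|\hat{\ub}\|_2 = 1$, the assumption $\|\ub - \vb\|_2 \leq \rho$, and the reverse triangle inequality $|\|\vb\|_2 - \|\ub\|_2| \leq \|\ub - \vb\|_2 \leq \rho$ immediately yields the desired bound $2\rho/\|\vb\|_2$.

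For the inner product bound, I would use the fact that $\hat{\ub}$ and $\hat{\vb}$ are unit vectors, so
\begin{align*}
    \|\hat{\ub} - \hat{\vb}\|_2^2 = 2 - 2\langle \hat{\ub}, \hat{\vb}\rangle,
\end{align*}
and therefore $\langle \hat{\ub}, \hat{\vb}\rangle = 1 - \tfrac{1}{2}\|\hat{\ub} - \hat{\vb}\|_2^2 \geq 1 - 2\rho^2/\|\vb\|_2^2$ by substituting the bound from the first part.

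There is no genuine obstacle here: the only slightly delicate point is to make the split symmetric in the ``right'' direction so that $\|\vb\|_2$ (and not $\|\ub\|_2$) appears in the denominator, which is why I factor out $1/\|\vb\|_2$ rather than $1/\|\ub\|_2$ in the identity above. Everything else is a direct application of the triangle and reverse triangle inequalities.
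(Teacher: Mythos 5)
Your proposal is correct and takes essentially the same route as the paper: the algebraic split you write down is exactly the paper's insertion of the intermediate point $\ub/\|\vb\|_2$ into the triangle inequality, followed by the reverse triangle inequality, and the inner-product bound via $\|\hat{\ub}-\hat{\vb}\|_2^2 = 2 - 2\langle\hat{\ub},\hat{\vb}\rangle$ is identical. No gaps.
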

\begin{proof}[Proof of Lemma~\ref{lemma:distance2innerproduct}]
By triangle inequality, we have $\big|\|\vb\|_2 - \|\ub\|_2 \big| \leq \| \ub - \vb \|_2 \leq \rho$. Therefore,
\begin{align*}
    \bigg\| \frac{\ub}{\|\ub\|_2} - \frac{\vb}{\|\vb\|_2} \bigg\|_2 & \leq \bigg\| \frac{\ub}{\|\ub\|_2} - \frac{\ub}{\|\vb\|_2} \bigg\|_2 + \bigg\| \frac{\ub}{\|\vb\|_2} - \frac{\vb}{\|\vb\|_2} \bigg\|_2\\ 
    &= \|\ub\|_2\cdot \frac{\big|\|\vb\|_2 - \|\ub\|_2 \big|}{\|\ub\|_2\|\vb\|_2} + \frac{\|\ub - \vb\|_2}{\|\vb\|_2} \\
    &\leq \frac{2\rho}{\|\vb\|_2}.
\end{align*}
Moreover, we have
\begin{align*}
    -2\bigg\la \frac{\ub}{\|\ub\|_2}, \frac{\vb}{\|\vb\|_2} \bigg\ra + 2 = \bigg\| \frac{\ub}{\|\ub\|_2} - \frac{\vb}{\|\vb\|_2} \bigg\|_2^2  \leq  \frac{4\rho^2}{\|\vb\|_2^2}.
\end{align*}
Therefore we have
\begin{align*}
    \bigg\la \frac{\ub}{\|\ub\|_2}, \frac{\vb}{\|\vb\|_2} \bigg\ra \geq 1-\frac{2\rho^2}{\|\vb\|_2^2}.
\end{align*}
This completes the proof.
\end{proof}

The following lemma follows directly from the standard Gaussian tail bound. A similar result is given as Fact B.1 in \cite{zhong2017recovery}. 
\begin{lemma}(\cite{zhong2017recovery})\label{lemma:Gaussiantail}
    Let $\wb\in \RR^k$ be a fixed vector. For any $t\geq 0$, we have
    \begin{align*}
        \PP_{\xb\sim N(\mathbf{0},\Ib_k)} \big(|\wb^\top \xb | \leq \|\wb\|_2\cdot t\big) \geq 1 - 2\exp(-t^2/2).
    \end{align*}
\end{lemma}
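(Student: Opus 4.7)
The plan is to reduce the statement to the standard one-dimensional Gaussian tail bound. First I would observe that $\wb^\top \xb$ is a linear combination of independent standard Gaussians, hence itself Gaussian with mean $0$ and variance $\|\wb\|_2^2$. If $\wb = \mathbf{0}$ the inequality is trivial, so assume $\|\wb\|_2 > 0$ and define $Z = \wb^\top \xb / \|\wb\|_2 \sim N(0,1)$.

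Next, the event $\{|\wb^\top \xb| \leq \|\wb\|_2 \cdot t\}$ equals $\{|Z| \leq t\}$, so it suffices to show
\begin{align*}
\PP(|Z| > t) \leq 2\exp(-t^2/2).
\end{align*}
By symmetry of the standard Gaussian, $\PP(|Z|>t) = 2\PP(Z > t)$, and the classical Mills-ratio bound (equivalently, the Chernoff bound applied to $Z$) gives $\PP(Z>t) \leq \exp(-t^2/2)$ for all $t \geq 0$. Combining these yields the claim.

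No real obstacle is expected here: this is a standard Gaussian concentration fact, and the only minor point is the scaling step from $\wb^\top\xb$ to the standardized $Z$. I would keep the proof to a few lines and cite the standard Chernoff/Mills bound rather than re-deriving it.
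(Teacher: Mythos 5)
Your proof is correct and follows exactly the route the paper intends: the paper gives no separate argument for this lemma, stating only that it "follows directly from the standard Gaussian tail bound" (citing Fact B.1 of \citet{zhong2017recovery}), which is precisely your standardization of $\wb^\top\xb/\|\wb\|_2$ to $N(0,1)$ followed by the Chernoff bound $\PP(Z>t)\leq \exp(-t^2/2)$ for all $t\geq 0$. The only small point of care is that the Chernoff form (not the Mills-ratio form, which degrades for small $t$) is the one valid uniformly in $t\geq 0$, and you correctly invoke it.
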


\subsection{Proof of Lemma~\ref{lemma:hoeffdingidentity}}
\begin{proof}[Proof of Lemma~\ref{lemma:hoeffdingidentity}]
Let $(\tilde{X_1},\tilde{X_2})$ be an independent copy of $(X_1,X_2)$. Then by definition we have
\begin{align*}
    &\Cov[f(X_1),g(X_2)] = \EE \{ [f(X_1) - f(\tilde{X}_1)]\cdot [g(X_2) - g(\tilde{X}_2)]\},\\
    &\Cov[[\ind(X_1 \leq x_1),\ind(X_2 \leq x_2)] = \EE \{ [\ind(X_1 \leq x_1)  - \ind(\tilde{X}_1 \leq x_1)]\cdot [\ind(X_2 \leq x_2)  - \ind(\tilde{X}_2 \leq x_2)]  \}.
\end{align*}
For $f(X_1) - f(\tilde{X}_1)$, by the definition of Lebesgue-Stieltjes integration,
\begin{align*}
    f(X_1) - f(\tilde{X}_1) &= \int_\RR \{ \ind[x_1\leq f(X_1)] - \ind[x_1\leq f(\tilde{X}_1)] \} \mathrm{d} x_1\\
    &= \int_\RR [ \ind(x_1\leq X_1) - \ind(x_1\leq \tilde{X}_1)] \mathrm{d} f(x_1).
\end{align*}
Similarly, we have
\begin{align*}
    g(X_2) - g(\tilde{X}_2) = \int_\RR [ \ind(x_2\leq X_2) - \ind(x_2\leq \tilde{X}_2)] \mathrm{d} g(x_2).
\end{align*}
Therefore by Fubini's Theorem we have
\begin{align*}
    \Cov[f(X_1),g(X_2)] = \int_{\RR^2} \Cov[\ind(X_1 \leq x_1),\ind(X_2 \leq x_2)] \mathrm{d}f(x_1) \mathrm{d}g(x_2).
\end{align*}
\end{proof}

\begin{proof}[Proof fo Lemma~\ref{lemma:recursivebound1}]
For $\{u_t\}_{t\geq 0}$ We have
\begin{align*}
    u_{t} &\leq au_{t-1} + c_1 b^{t-1} + c_2\\
    &\leq a (au_{t-2} + c_1 b^{t-2} + c_2) + c_1 b^{t-1} + c_2\\
    &= a^2 u_{t-2} + c_1 (a b^{t-2} + b^{t-1}) + c_2 (1+a)\\
    &\leq \cdots\\
    &\leq a^t u_0 + c_1(a^{t-1} + a^{t-2}b + \cdots + a b^{t-2} + b^{t-1}) + c_2 \frac{1}{1 - a}\\
    &\leq a^t u_0 + c_1 t (a\lor b)^{t-1} + c_2\frac{1}{1-a}. 
\end{align*}
This gives the first bound. Similarly, $\{u_t\}_{t\geq 0}$ for We have
\begin{align*}
    v_{t} &\leq av_{t-1} + c_1(t-1)^2 b^{t-1} + c_2\\
    &\leq a (av_{t-2} + c_1(t-2)^2 b^{t-2} + c_2) + c_1(t-1)^2 b^{t-1} + c_2\\
    &= a^2 v_{t-2} + c_1 [(t-2)^2a b^{t-2} + (t-1)^2b^{t-1}] + c_2 (1+a)\\
    &\leq \cdots\\
    &\leq a^t v_0 + c_1[0^2\cdot a^{t-1} + 1^2\cdot a^{t-2}b + \cdots + (t-2)^2\cdot a b^{t-2} +(t-1)^2\cdot b^{t-1}] + c_2 \frac{1}{1 - a}\\
    &\leq a^t v_0 + c_1 \frac{t(t-1)(2t-1)}{6} (a\lor b)^{t-1} + c_2\frac{1}{1-a}\\
    &\leq a^t v_0 +  \frac{c_1}{3}t^3 (a\lor b)^{t-1} + c_2\frac{1}{1-a}.
\end{align*}
This completes the proof.
\end{proof}



\section{Additional Experiments}\label{appendix:additional_experiments}
In this section we present some additional experimental results on non-Gaussian inputs. Here we consider two types of input distributions: uniform distribution over unit sphere and a transelliptical distribution (the distribution of a Gaussian random vector after an entry-wise monotonic transform $y = x^3$). The experiments are conducted in the setting $k=15$, $r = 5$ for ReLU and hyperbolic tangent activation functions, where $\wb^*$ and $\vb^*$ are generated in the same way as described in Section~\ref{sec:experiments}. Specifically, Figures \ref{subfig:1}, \ref{subfig:2} show the results for uniform distribution over unit sphere, while Figures \ref{subfig:3}, \ref{subfig:4} are for the transelliptical distribution. Moreover, Figures \ref{subfig:1}, \ref{subfig:3} are for ReLU networks, and the results for hyperbolic tangent networks are given in Figures \ref{subfig:2}, \ref{subfig:4}. From these figures, we can see that although it is not directly covered in our theoretical results, the approximate gradient descent algorithm proposed in our paper is still capable of handling non-Gaussian distributions. In specific, from Figures \ref{subfig:1}, \ref{subfig:3}, we can see that our proposed algorithm is competitive with Double Convotron for symmetric distributions and ReLU activation, which is the specific setting Double Convotron is designed for. Moreover, Figures \ref{subfig:2}, \ref{subfig:4} clearly show that for hyperbolic tangent activation function, Double Convotron fails to converge, while our approximated gradient descent still converges linearly.
\begin{figure}[h]
	\begin{center}
		\begin{tabular}{cc}
 		\hspace{-0.2in}
			\subfigure[ReLU+Unif. Sphere]{\includegraphics[width=0.4\linewidth,angle=0]{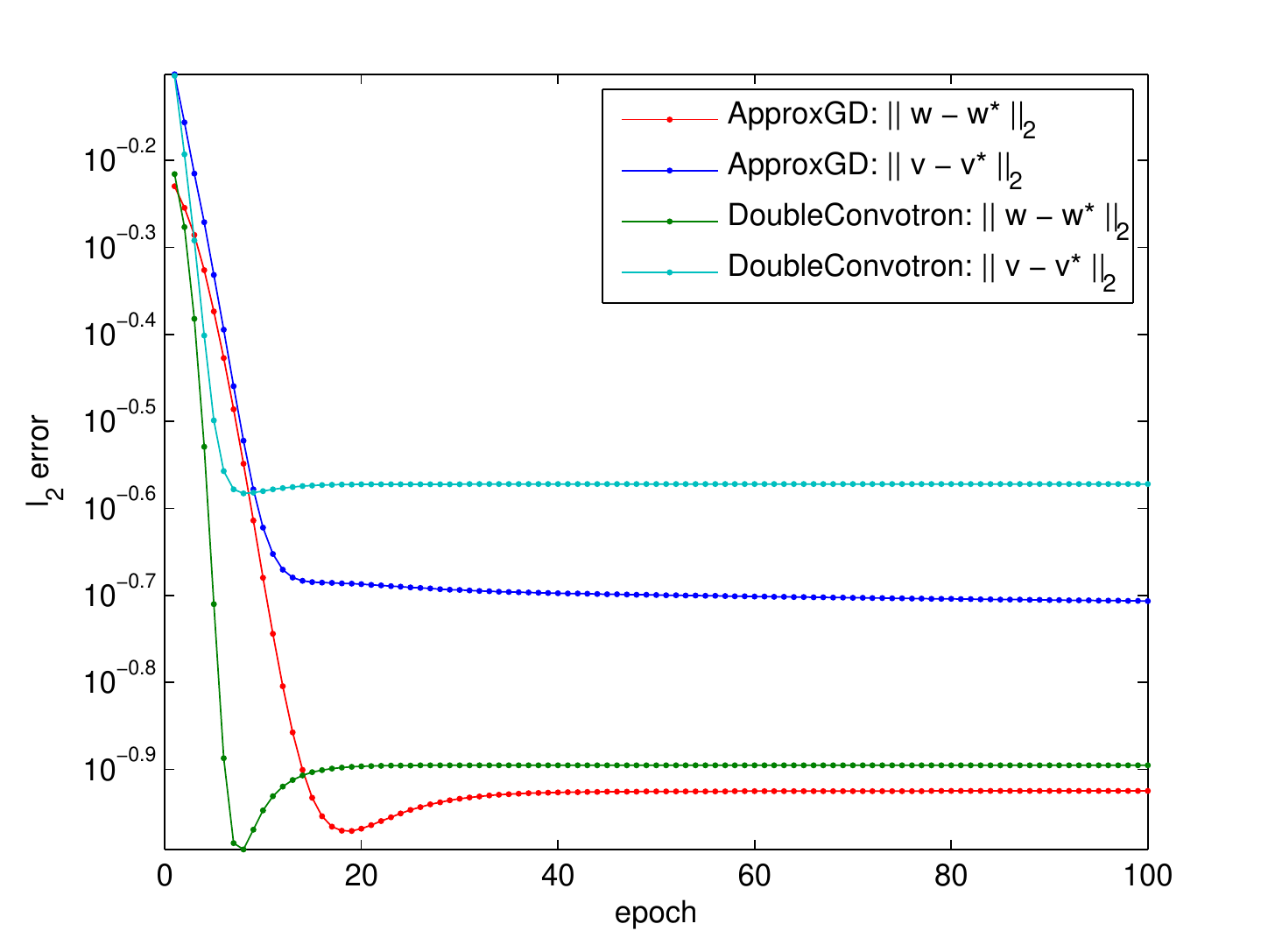}\label{subfig:1}}
			& 
			\subfigure[tanh+Unif. Sphere]{\includegraphics[width=0.4\linewidth,angle=0]{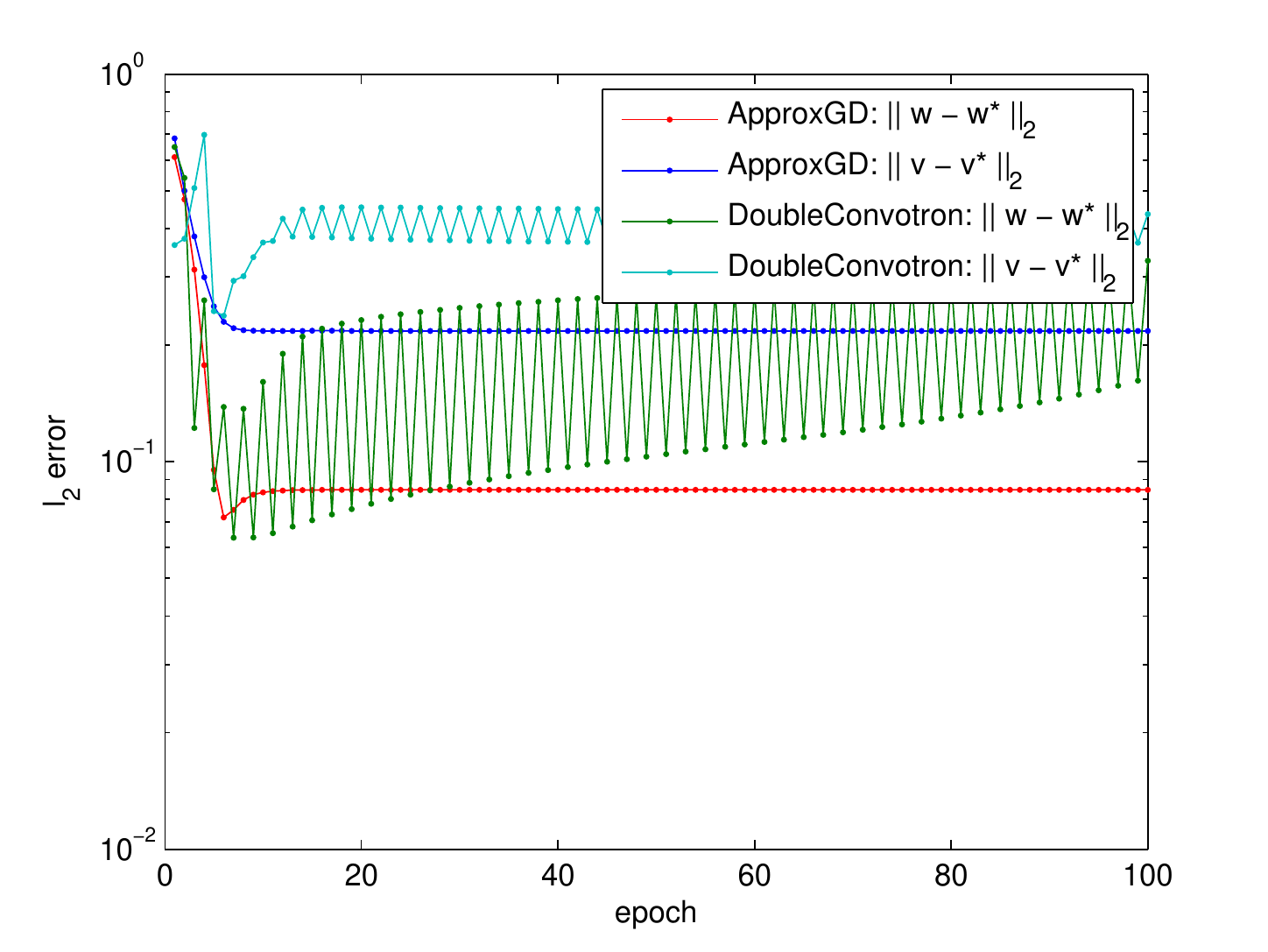}\label{subfig:2}}
			\\
			\subfigure[ReLU+Transelliptical]{\includegraphics[width=0.4\linewidth,angle=0]{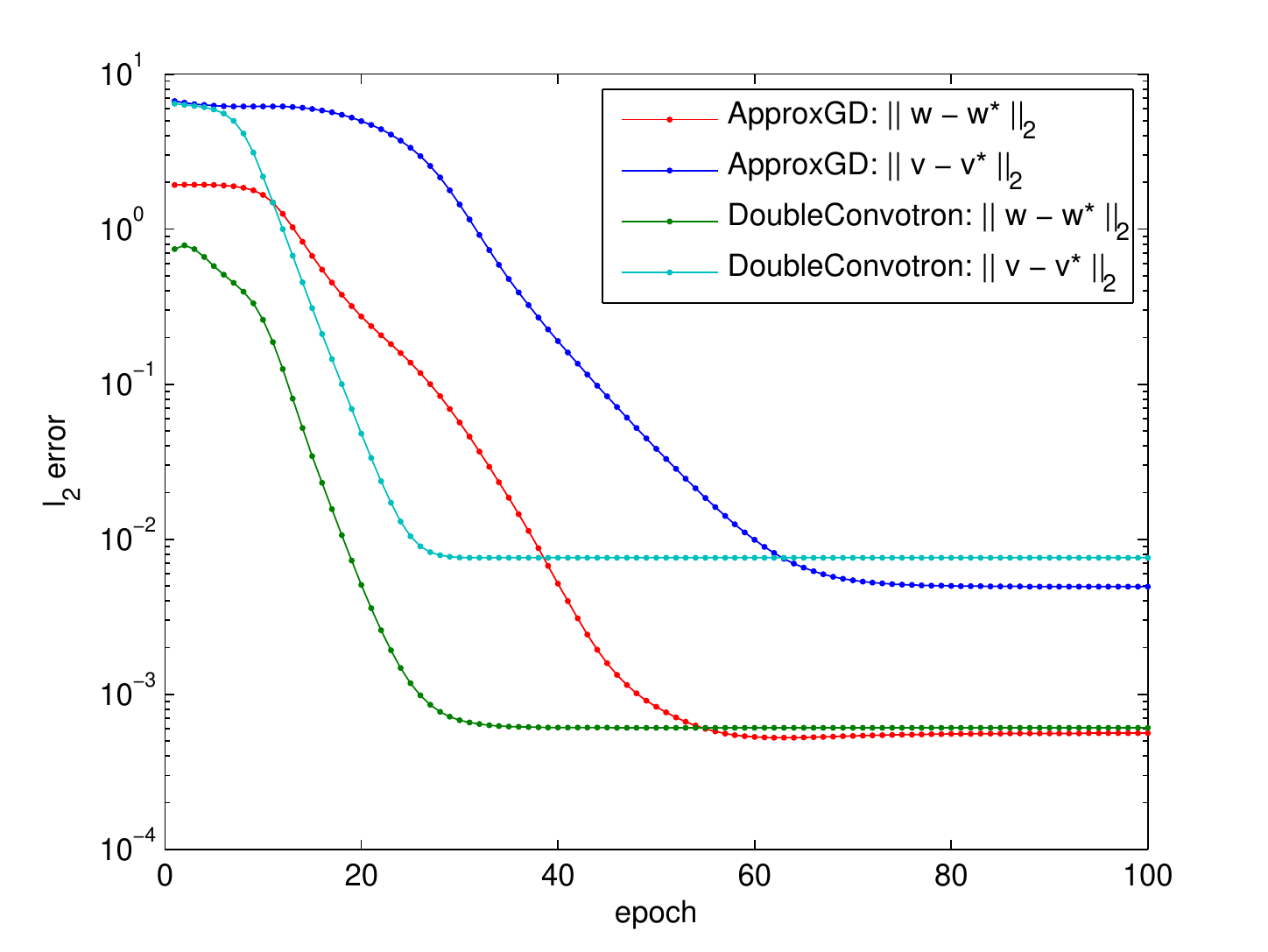}\label{subfig:3}}
			& 
			\subfigure[tanh+Transelliptical]{\includegraphics[width=0.4\linewidth,angle=0]{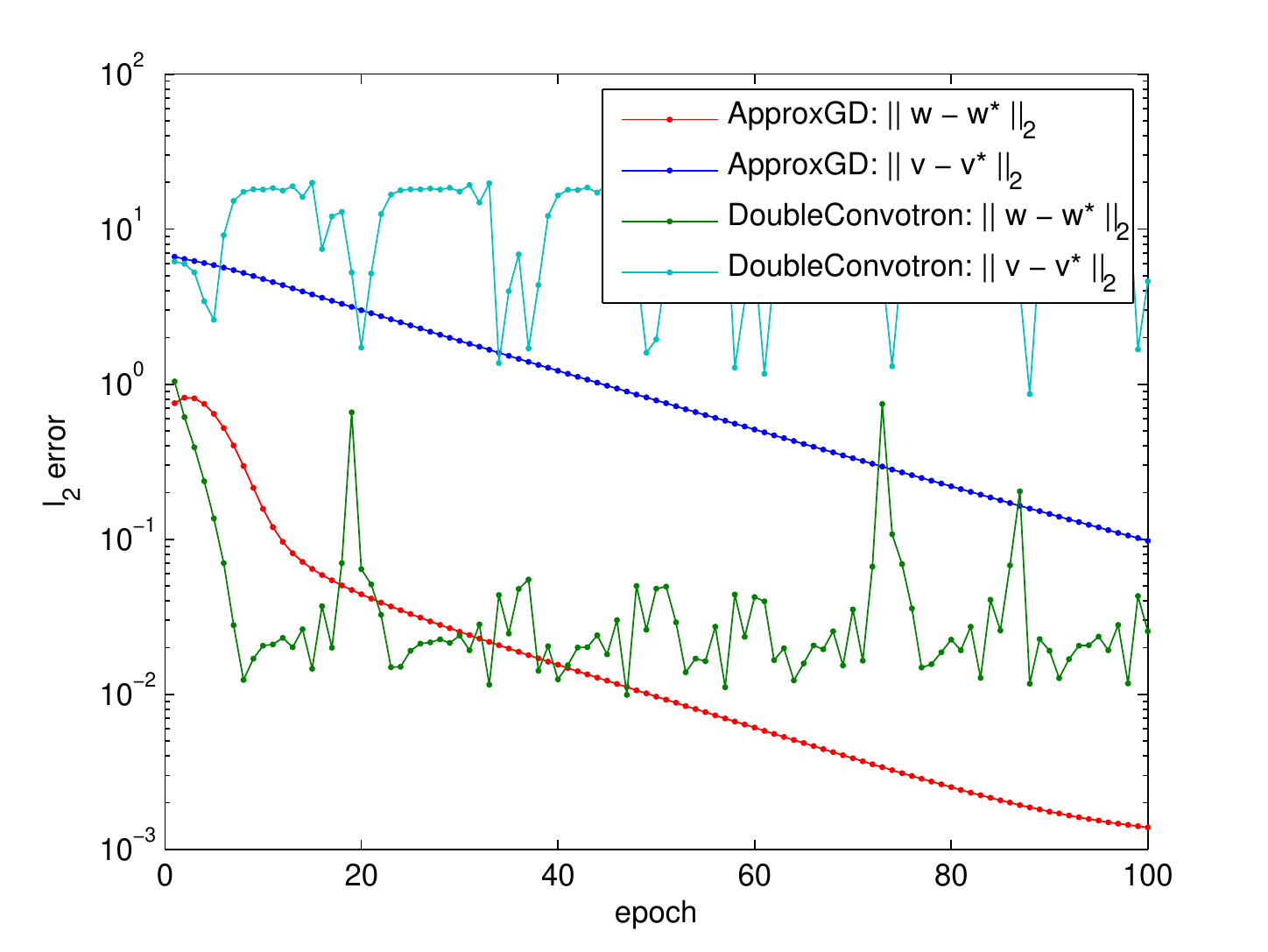}\label{subfig:4}}
		    \end{tabular}
	\end{center}
	\vskip -12pt
	\caption{Experimental results for different non-Gaussian input distributions. (a) and (b) are for uniform distribution over unit sphere, while (c) and (d) give the results for  transelliptical distribiton.} 
	\label{fig:boundevaluation}
\end{figure}

\bibliography{reference}
\bibliographystyle{ims}

\end{document}